\documentclass[11pt,a4paper]{article}
\usepackage[hyperref]{emnlp2020}
\usepackage{times}
\usepackage{latexsym}

\usepackage{amsmath}
\usepackage{amssymb}
\usepackage{url}
\usepackage{graphicx}
\usepackage{booktabs}
\usepackage{amsthm}
\usepackage{xspace}
\usepackage{mleftright}
\usepackage{multirow}
\usepackage{xcolor}
\usepackage[normalem]{ulem}
\usepackage{mathtools}
\definecolor{reddish}{HTML}{FF4136}
\definecolor{bluish}{HTML}{0074D9}
\usepackage{thmtools}

\usepackage{tikz-dependency}

\newcommand{\dyckkm}{Dyck-($k$,$m$)\xspace}

\newcommand{\symend}{\omega}
\newcommand{\symopen}{{\color{bluish} \langle}}
\newcommand{\symclose}{{\color{bluish} \rangle}}
\newcommand{\rscale}{\lambda}
\newcommand{\pscale}{\zeta}

\newtheorem{lemma}{Lemma}
\numberwithin{equation}{section}

\theoremstyle{definition}
\newtheorem{defn}{Definition}

\usepackage{microtype}

\aclfinalcopy %

\title{%
RNNs can generate bounded hierarchical languages\\ with optimal memory}
\author{John Hewitt\textsuperscript{$\dagger$} \ \ \ Michael Hahn\textsuperscript{$\ddagger$} \ \ \ Surya Ganguli\textsuperscript{$\|$} \ \ \  
 \textbf{Percy Liang\textsuperscript{$\dagger$} \ \ \ Christopher D. Manning\textsuperscript{$\dagger$}\textsuperscript{$\ddagger$}}\\
 \textsuperscript{$\dagger$}Computer Science Department \ \ \ \textsuperscript{$\ddagger$}Linguistics Department \ \ \ \textsuperscript{$\|$}Applied Physics Department\\
  Stanford University \\
  \{johnhew,mhahn2,sganguli,pliang,manning\}@stanford.edu} 

\date{}

\begin{document}
\maketitle
\begin{abstract}
  Recurrent neural networks empirically generate natural language with high syntactic fidelity.
  However, their success is not well-understood theoretically.
  We provide theoretical insight into this success, proving in a finite-precision setting that RNNs can efficiently generate bounded hierarchical languages that reflect the scaffolding of natural language syntax.
  We introduce \dyckkm, the language of well-nested brackets (of $k$ types) and $m$-bounded nesting depth, reflecting the bounded memory needs and long-distance dependencies of natural language syntax.
  The best known results use $O(k^{\frac{m}{2}})$ memory (hidden units) to generate these languages.
  We prove that an RNN with $O(m\log k)$ hidden units suffices, an exponential reduction in memory, by an explicit construction. %
  Finally, we show that no algorithm, even with unbounded computation, can suffice with $o(m\log k)$ hidden units.
\end{abstract}

\section{Introduction}
Recurrent neural networks (RNNs; \citet{elman1990finding}) trained on large datasets have demonstrated a grasp of natural language syntax \cite{karpathy2015visualizing,kuncoro2018lstms}.
While considerable empirical work has studied RNN language models' ability to capture syntactic properties of language \cite{linzen2016assessing,marvin2018targeted,hewitt2019structural,vanschijndel2018gardenpath}, their success is not well-understood theoretically. %
In this work, we provide theoretical insight into RNNs' syntactic success, proving that they can efficiently generate a family of bounded hierarchical languages.
These languages form the scaffolding of natural language syntax.

\begin{figure}
  \includegraphics[width=\linewidth]{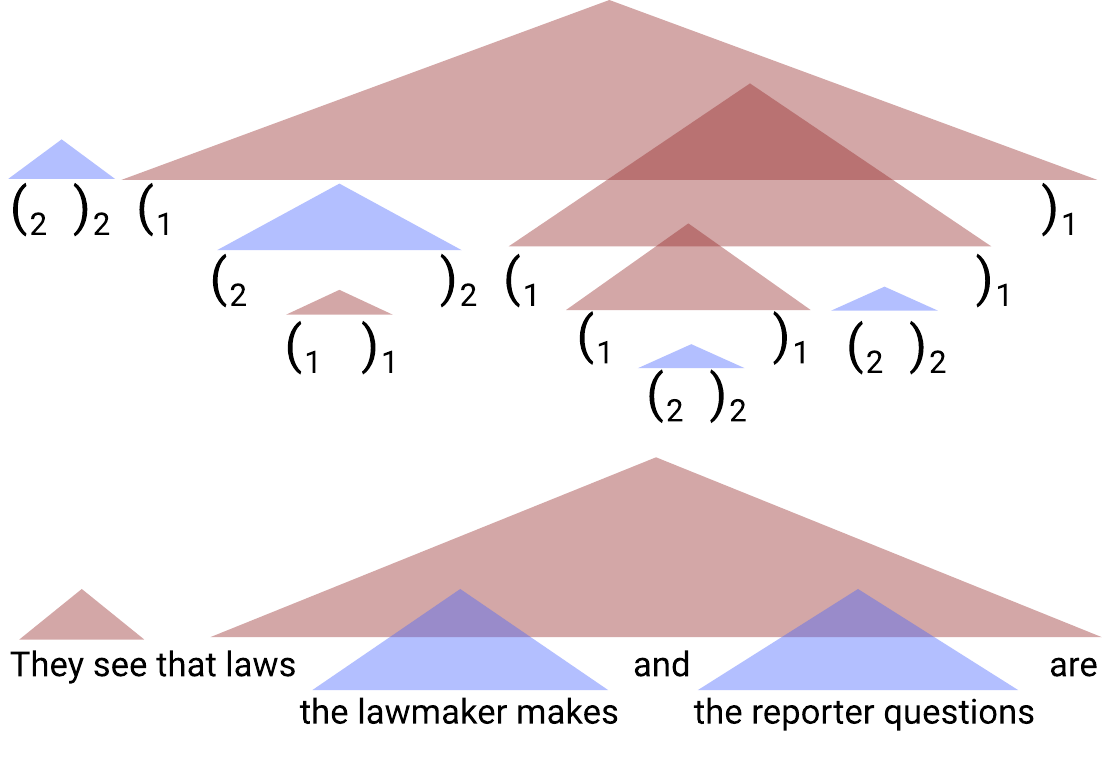}
  \caption{\label{figure_head} (Top) This string of well-nested brackets is a member of the Dyck-($2$,$4$) language; triangles denote the scopes of nested hierarchical dependencies, mirroring the core of hierarchical structure in natural languages. (Bottom) A fragment in English with similar nested dependencies denoted by triangles.
  }
\end{figure}

Hierarchical structure characterized by long distance, nested dependencies, lies at the foundation of natural language syntax.
This motivates, for example, context-free languages \cite{chomsky1956three}, a fundamental paradigm for describing natural language syntax.
A canonical family of  context-free languages (CFLs) is Dyck-$k$, the language of balanced brackets of $k$ types, since any CFL can be constructed via some Dyck-$k$ \cite{chomsky1959algebraic}. 

However,  while context-free languages like Dyck-$k$ describe arbitrarily deep nesting of hierarchical structure, in practice, natural languages exhibit bounded nesting.
This is clear in, e.g., bounded center-embedding \cite{karlsson2007constraints,jin2018depth} (Figure~\ref{figure_head}).
To reflect this, we introduce and study \dyckkm, which adds a bound $m$ on the number of unclosed open brackets at any time.
Informally, the ability to efficiently generate \dyckkm suggests the foundation of the ability to efficiently generate languages with the syntactic properties of natural language.
See \S(\ref{sec_motivating_bounded_hierarchical_structure}) for further motivation for bounded hierarchical structure.

In our main contribution, we prove that RNNs are able to generate \dyckkm as memory-efficiently as any model, up to constant factors.
Since \dyckkm is a regular (finite-state) language, the application of general-purpose RNN constructions trivially proves that RNNs can generate the language  \cite{merrill2019sequential}.
However, the best construction we are aware of uses $O(k^{\frac{m}{2}})$ hidden units \cite{horne1994bounds,indyk1995optimal}, where $k$ is the vocabulary size and $m$ is the nesting depth, which is exponential.\footnote{For hard-threshold neural networks, the lower-bound would be $\Omega(k^{\frac{m}{2}})$ if \dyckkm were an arbitrary regular language.}
We provide an explicit construction proving that a Simple (Elman; \citet{elman1990finding}) RNN can generate any \dyckkm using only $6m\lceil\log k\rceil-2m=O(m\log k)$ hidden units, an exponential improvement.
This is not just a strong result relative to RNNs' general capacity; we prove that even computationally unbounded models generating \dyckkm require $\Omega(m \log k)$ hidden units, via a simple communication complexity argument.

Our proofs provide two explicit constructions, one for the Simple RNN and one for the LSTM, which detail how these networks can use their hidden states to simulate stacks in order to efficiently generate \dyckkm.
The differences between the constructions exemplify how LSTMs can use \textit{exclusively} their gates, ignoring their Simple RNN subcomponent entirely, to reduce the memory by a factor of 2 compared to the Simple RNN.\footnote{We provide implementations at \url{https://github.com/john-hewitt/dyckkm-constructions/}.}

We prove these results under a theoretical setting that aims to reflect the realistic settings in which RNNs have excelled in NLP\@.
First, we assume \textit{finite precision}; the value of each hidden unit is represented by $p=O(1)$ bits.
This has drastic implications compared to existing work \cite{siegelmann1992computational,weiss2018practical,merrill2019sequential,merrill2020formal}.
It implies that \textbf{only regular languages} can be generated by any machine with $d$ hidden units, since they can take on only $2^{pd}$ states \cite{korsky2019computational}.
This points us to focus on whether languages can be implemented memory-efficiently.
Second, we consider networks as language \textit{generators}, not acceptors;\footnote{Acceptors consume a whole string and then decide whether the string is in the language; generators must decide which tokens are possible continuations at each timestep.} informally, RNNs' practical successes have been primarily as generators, like in language modeling and machine translation \cite{karpathy2015visualizing,wu2016google}. %

Finally, we include a preliminary study in learning \dyckkm with LSTM LMs from finite samples, finding for a range of $k$ and $m$ that learned LSTM LMs extrapolate well given the hidden sizes predicted by our theory.

In summary, we prove that RNNs are memory optimal in generating a family of bounded hierarchical languages that forms the scaffolding of natural language syntax by describing mechanisms that allow them to do so; this provides theoretical insight into their empirical success.

\subsection{Motivating bounded hierarchical structure} \label{sec_motivating_bounded_hierarchical_structure}
Hierarchical structure is central to human language production and comprehension, showing up in grammatical constraints and semantic composition, among other properties \cite{chomsky1956three}.
Agreement between subject and verb in English is an intuitive example:
\begin{center}
  \small 
\begin{dependency}[hide label, edge unit distance=.4ex]
\begin{deptext}[column sep=0.05cm]
  Laws \& the lawmaker \& the reporter \& questions \& writes \& are \& (1) \\
\end{deptext}
\depedge{1}{6}{.}
\depedge{2}{5}{.}
\depedge{3}{4}{.}
\end{dependency}
\end{center}
The Dyck-$k$ languages---well-nested brackets of $k$ types---are the prototypical languages of hierarchical structure; by the Chomsky-Sch{\"u}tzenberger Theorem \cite{chomsky1959algebraic}, they form the scaffolding for any context-free language.
They have a simple structure: %
\begin{center}
\begin{dependency}[hide label, edge unit distance=.4ex]
\begin{deptext}[column sep=0.04cm]
    $\symopen_1$\& $\symclose_1$\&$\symopen_2$\& $\symopen_2$\&$\symopen_1$\&$\symclose_1$\&$\symclose_2$\&$\symopen_1$\&  $\symclose_1$\&$\symclose_2$\\
\end{deptext}
\depedge{1}{2}{.}
\depedge{3}{10}{.}
\depedge{4}{7}{.}
\depedge{5}{6}{.}
\depedge{8}{9}{.}
\end{dependency}
\end{center}
However, human languages are unlike Dyck-$k$ and other context-free languages in that they exhibit \textit{bounded memory requirements}.
Dyck-$k$ requires storage of an unboundedly long list of open brackets in memory.
In human language, as the center-embedding depth grows, comprehension becomes more difficult, like in our example sentence above \cite{miller1963finitary}.
Empirically, center-embedding depth of natural language is rarely greater than $3$ \cite{jin2018depth,karlsson2007constraints}.
However, it does exhibit long-distance, shallow hierarchical structure:
\begin{center}
  \small 
\begin{dependency}[hide label, edge unit distance=.4ex]
\begin{deptext}[column sep=0.04cm]
  Laws \& the lawmaker \& wrote \& along with the motion ... \& are \\
\end{deptext}
\depedge{1}{5}{.}
\depedge{2}{3}{.}
\end{dependency}
\end{center}
Our \dyckkm language puts a bound on depth in Dyck-$k$, capturing the long-distance hierarchical structure of natural language as well as its bounded memory requirements.\footnote{For further motivation, we note that center-embedding directly implies bounded memory requirements in arc-eager left-corner parsers \cite{resnik1992left}.}

\section{Related Work}
This work contributes primarily to the ongoing theoretical characterization of the expressivity of RNNs.
 \citet{siegelmann1992computational} proved that RNNs are Turing-complete if provided with infinite precision and unbounded computation time.
Recent work in NLP has taken an interest in the expressivity of RNNs under conditions more similar to RNNs' practical uses, in particular assuming one ``unrolling'' of the RNN per input token, and precision bounded to be logarithmic in the sequence length.
In this setting, \citet{weiss2018practical} proved that LSTMs can implement simplified counter automata; the implications of which were explored by \citet{merrill2019sequential,merrill2020linguistic}.
In this same regime, \citet{merrill2020formal} showed a strict hierarchy of RNN expressivity, proving among other results that RNNs augmented with an external stack \cite{grefenstette2015learning} can recognize hierarchical (Context-Free) languages like Dyck-$k$, but LSTMs and RNNs cannot.

\citet{korsky2019computational} prove that, given infinite precision, RNNs can recognize context-free languages.
Their proof construction uses the floating point precision to simulate a stack, e.g., implementing \texttt{push} by dividing the old floating point value by $2$, and \texttt{pop} by multiplying by $2$.
This implies that one can recognize any language requiring a \textit{bounded} stack, like our \dyckkm, by providing the model with precision that scales with stack depth.
In contrast, our work assumes that the precision cannot scale with the stack depth (or vocabulary size); in practice, neural networks are used with a fixed precision \cite{hubara2017quantized}.

Our work also connects to empirical studies of what RNNs can \textit{learn} given finite samples.
Considerable evidence has shown that LSTMs can learn languages requiring counters (but Simple RNNs do not) \cite{weiss2018practical,sennhauser2018evaluating,yu2019learning,suzgun2019lstm}, and neither Simple RNNs nor LSTMs can learn Dyck-$k$.
In our work, this conclusion is foregone because Dyck-$k$ requires unbounded memory while RNNs have finite memory; we show that LSTMs extrapolate well on \dyckkm, the memory-bounded variant of Dyck-$k$.
Once augmented with an external (unbounded) memory, RNNs have been shown to learn hierarchical languages \cite{suzgun2019lstm,hao2018context,grefenstette2015learning,joulin2015inferring}.
Finally, considerable study has gone into what RNN LMs learn about natural language syntax \cite{lakretz2019emergence,khandelwal2018sharp,gulordava2018colorless,linzen2016assessing}.

\section{Preliminaries and definitions} \label{sec_prelims_definitions}

\subsection{Formal languages}
A formal language $\mathcal{L}$ is a set of strings $\mathcal{L}\subseteq \Sigma^*\omega$ over a fixed vocabulary, $\Sigma$ (with the end denoted by special symbol $\omega$). %
We denote an arbitrary string as $w_{1:T}\in\Sigma^*\omega$, where $T$ is the string length.

The Dyck-$k$ language is the language of nested brackets of $k$ types, and so has $2k$ words in its vocabulary: $\Sigma=\{\symopen_i,\symclose_i\}_{i\in[k]}$.
Any string in which brackets are well-nested, i.e., each $\symopen_i$ is closed by its corresponding $\symclose_i$, is in the language.
Formally, we can write it as the strings generated by the following context-free grammar:
\begin{align*}
  X \rightarrow  &\mid \symopen_i \ \ X \ \ \symclose_i \ \ X\\
  &\mid  \ \epsilon,
\end{align*}
where $\epsilon$ is the empty string.\footnote{And $\omega$ appended to the end.}
The memory necessary to generate any string in Dyck-$k$ is proportional to the number of \textit{unclosed} open brackets at any time.
We can formalize this simply by counting how many more open brackets than close brackets there are at each timestep:
\begin{align*}
  d(w_{1:t}) = \texttt{count}(w_{1:t},\symopen) - \texttt{count}(w_{1:t},\symclose)
\end{align*}
where $\texttt{count}(w_{1:t},a)$ is the number of times $a$ occurs in $w_{1:t}$.
We can now define \dyckkm by combining Dyck-$k$ with a depth bound, as follows:
\begin{restatable}[\dyckkm]{defn}{defn_dyckkm}
\label{defn_dyckkm}
  For any positive integers $k,m$, 
   \dyckkm is the set of strings \begin{align*}
    \{w_{1:T} \in \text{Dyck-$k$}\mid \forall_{t=1,\dots,T}, d(w_{1:t}) \leq m\}
  \end{align*}
\end{restatable}

\subsection{Recurrent neural networks}

We now provide definitions of recurrent neural networks as probability distributions over strings, and define what it means for an RNN to generate a formal language.
We start with the most basic form of RNN we consider, the Simple (Elman) RNN:
\begin{restatable}[Simple RNN (generator)]{defn}{defnsimplernn}
\label{defn_simplernn}
  A Simple RNN (generator) with $d$ hidden units is a probability distribution $f_\theta$ of the following form:
\begin{align*}
  &h_0 = \mathbf{0}\\
  &h_t = \sigma(W h_{t-1} + Ux_t + b)\\
  &w_t | w_{1:t-1} \sim \text{softmax}(g_\theta(h_{t-1}))
\end{align*}
where $h_t\in\mathbb{R}^d$ and the function $g$ has the form $g_\theta(h_{t-1}) = V h_{t-1} + b_v$. The input $x_t =Ew_t$; overloading notation, $w_t$ is the one-hot (indicator) vector representing the respective word. $\theta$ is the set of trainable parameters, $W, U, b, V, b_v, E$.
\end{restatable}

The Long Short-Term Memory (LSTM) model \cite{hochreiter1997long} is a popular extension to the Simple RNN, intended to ease learning by resolving the vanishing gradient problem.
In this work, we're not concerned with learning but with expressivity. 
We study whether the LSTM's added complexity enables it to generate \dyckkm using less memory.
\begin{restatable}[LSTM (generator)]{defn}{defnlstmgenerator}
\label{defn_lstmgenerator}
  An LSTM (generator) with $d$ hidden units is a probability distribution $f_\theta$ of the following form:
\begin{align*}
  &h_0,c_0 = \mathbf{0}\\
  &f_t = \sigma(W_fh_{t-1} + U_fx_t + b_f)\\
  &i_t = \sigma(W_ih_{t-1} + U_ix_t + b_i)\\
  &o_t = \sigma(W_oh_{t-1} + U_ox_t + b_o)\\
  &\tilde{c}_t = \text{tanh}(W_{\tilde{c}} h_{t-1} + U_{\tilde{c}}x_t + b_{\tilde{c}})\\
  &c_t = f_t \odot c_{t-1} + i_t \odot \tilde{c}_t\\
  &h_t = o_t \odot \text{tanh}(c_t)\\
  &w_t | w_{1:t-1} \sim \text{softmax}(g_\theta(h_{t-1}))
\end{align*}
where $h_t,c_t\in\mathbb{R}^d$, the function $g$ has the form $g_\theta(h_{t-1}) = V h_{t-1} + b$, and $x_t = Ew_t$, where $w_t$ is overloaded as above.
$\theta$ is the set of trainable parameters: all $W,U,b$, as well as $V,E$.
\end{restatable}

\paragraph{Notes on finite precision.}
Under our finite precision setting, each hidden unit is a rational number specified using $p$ bits; hence it can take on any value in $\mathbb{P}\subset \mathbb{Q}$, where $|\mathbb{P}|= 2^p$. Each construction is free to choose its specific subset.\footnote{The $\mathbb{P}$ for our constructions is provided in Appendix~\ref{appendix_accounting_of_finite_precision}.} %
A machine with $d$ such hidden units thus can take on any of $2^{dp}$ configurations.

Our constructions require the sigmoid ($\sigma(x)=\frac{1}{1+e^{-x}}$) and tanh nonlinearities to saturate (that is, take on the values at the bounds of their ranges) to ensure arbitrarily long-distance dependencies.
Under standard definitions, these functions approach but never take on their bounding values.
Fortunately, under finite precision, we can provide non-standard definitions under which, if provided with large enough inputs, the functions saturate.\footnote{For example, because the closest representable number (in $\mathbb{P}$) to the true value of $\sigma(x)$ for some $x$ is $1$ instead of some number $<1$.}
Let there be $\beta\in\mathbb{R}$ such that $\sigma(x) = 1$ if $x > \beta$, and $\sigma(x)=0$ if $x<-\beta$. Likewise for hyperbolic tangent, $\text{tanh}(x) = 1$ if $x>\beta$, and $\text{tanh}(x) = {-1}$ if $x < -\beta$.
This reflects empirical behavior in toolkits like PyTorch \cite{paszke2019pytorch}.

\subsection{Formal language generation}

With this definition of RNNs as generators, we now define what it means for an RNN (a distribution) to generate a language (a set).
Intuitively, since a formal language is a set of strings $\mathcal{L}$, our definition should be such that a distribution generates $\mathcal{L}$ if its probability mass on the set of all strings $\Sigma^*\omega$
is concentrated on the set $\mathcal{L}$.
So, we first define the set of strings on which a probability distribution concentrates its mass.  %
The key intuition is to control the local token probabilities $f_\theta(w_t|w_{1:t-1})$, not the global $f_\theta(w_{1:T})$, which must approach zero with sequence length.
\begin{restatable}[locally $\epsilon$-truncated support]{defn}{defnepsilontruncated}
\label{defn_epsilontruncated}
Let $f_\theta$ be a probability distribution over $\Sigma^*\symend$, with conditional probabilities $f_\theta(w_t|w_{1:t-1})$.
  Then the locally $\epsilon$-truncated support of the distribution is the set
  \begin{align*}
    \{w_{1:T} \in \Sigma^*\omega : \forall_{t\in1\dots T}, f_\theta(w_{t}|w_{1:t-1}) \geq \epsilon\}.
  \end{align*}
\end{restatable}
This is the set of strings such that the model assigns at least $\epsilon$ probability mass to each token conditioned on the prefix leading up to that token.
A distribution generates a language, then, if there exists an $\epsilon$ such that the locally truncated support of the distribution is equal to the language:\footnote{We also note that any $f_\theta$ generates multiple languages, since one can vary the parameter $\epsilon$; for example, any softmax-defined distribution must generate $\Sigma^*$ with $\epsilon$ small because they assign positive mass to all strings. }
\begin{defn}[generating a language]
  A probability distribution $f_\theta$ over $\Sigma^*$ generates a language $\mathcal{L}\subseteq \Sigma^*$ if there exists $\epsilon>0$ such that the locally $\epsilon$-truncated support of $f_\theta$ is $\mathcal{L}$.
\end{defn}

\section{Formal results} \label{sec_formal_results}

We now state our results. We provide intuitive proof sketches in the next section, and leave the full proofs to the Appendix.
We start with an application of known work to prove that \dyckkm can be generated by RNNs.
\begin{restatable}[Naive generation of \dyckkm]{thm}{thmnaivegeneration}
\label{thm_naive_generation}
  For any $k,m\in\mathbb{Z}^+$, there exists a Simple RNN $f_\theta$ with $O(k^{m+1})$ hidden units that generates \dyckkm.
 \end{restatable}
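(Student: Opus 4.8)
The plan is to use the fact, noted in the introduction, that \dyckkm is regular: I will construct a deterministic finite automaton (DFA) for it and then realize that automaton inside a Simple RNN generator. Let $M$ be the DFA whose non-reject states are the possible stacks of unclosed brackets, i.e. strings $s\in\bigcup_{j=0}^{m}[k]^{j}$ recording from bottom to top the types of the at most $m$ currently open brackets, together with a single absorbing reject state $\bot$. Its transition function $\delta$ is the obvious stack simulator: reading $\symopen_i$ pushes $i$ (going to $\bot$ if the stack is already at depth $m$), reading $\symclose_i$ pops when the top symbol is $i$ (going to $\bot$ on a mismatch or from the empty stack), and reading $\symend$ is legal only from the empty-stack state. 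A prefix keeps $M$ out of $\bot$ exactly when it extends to a \dyckkm string, and there are $|Q| = 1 + \sum_{j=0}^{m}k^{j} = O(k^{m})$ states.

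A Simple RNN applies only a single threshold nonlinearity per step, so I cannot encode the state naively as a one-hot $h_t$ and hope to compute $\delta$: the map $(s,a)\mapsto\delta(s,a)$ is not linearly separable in the concatenated one-hot features (for instance $\delta^{-1}(\text{empty stack})$ contains $(\symopen_1,\symclose_1)$ and $(\symopen_2,\symclose_2)$ but neither cross pair, an XOR-like pattern). I therefore encode the most recent transition: $h_t\in\{0,1\}^{|E|}$ is the indicator of the DFA edge just traversed, where $E=Q\times\Sigma$ and $|E|=O(k^{m+1})$, which is exactly the target hidden count. The current state is then the target of the active edge, a linear read-out of $h_{t-1}$.

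Given this encoding I would set $W,U,b$ so that the edge-unit for $(s',a')$ fires iff the active edge's target equals $s'$ \emph{and} the input token equals $a'$. Each of these two conditions is a union over one-hot coordinates, hence scored by a single linear form, and because their scores add I need only one saturating threshold to detect their conjunction; the finite-precision saturating sigmoid then makes the update exact for all $t$ (the first step, where $h_0=\mathbf{0}$ carries no edge, is handled by a rank-one correction that treats the all-zero state as the empty stack). To turn the recognizer into a generator, the output map $g_\theta(h_{t-1}) = Vh_{t-1}+b_v$ reads off the current state, and I choose $V,b_v$ so that every token $a$ with $\delta(s,a)\neq\bot$ receives a large logit and every other token a small one. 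With a sufficiently large logit gap the softmax places mass $\geq\epsilon$ on exactly the valid continuations and $<\epsilon$ on the rest, for a single $\epsilon$ chosen uniformly over the finitely many states. A string lies in the locally $\epsilon$-truncated support iff each of its tokens was valid from the current state and it ends at the empty stack, i.e. iff it is in \dyckkm, and the hidden count is $|E|=O(k^{m+1})$ as required.

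I expect the main obstacle to be the separability issue rather than the automaton construction: it is what forces the one-hot-over-edges encoding and thereby fixes the count at $O(k^{m+1})$ (transitions) instead of $O(k^{m})$ (states). The remaining care is bookkeeping---checking that a single threshold and a single $\epsilon$ suffice simultaneously across all states and all string lengths, and that prefixes driving $M$ into $\bot$ are correctly excluded because the offending token was assigned less than $\epsilon$ mass---together with the minor initialization wrinkle noted above. Alternatively, one may skip the explicit wiring and simply invoke a general-purpose DFA-to-RNN construction \cite{merrill2019sequential} on $M$, but the explicit route has the advantage of exhibiting where the $O(k^{m+1})$ bound comes from.
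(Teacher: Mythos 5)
Your proposal is correct and matches the paper's own proof in essentials: the paper likewise builds the DFA whose non-reject states are stacks of at most $m$ unclosed brackets, encodes the machine in the RNN with one hidden unit per (state, symbol) pair---exactly your one-hot-over-edges encoding, yielding $|Q||\Sigma|=O(k^{m+1})$ units---and defines the softmax readout so that valid continuations from the current state receive large logits. Your explicit justification for why edges rather than states must be indexed (the XOR-style non-separability of $\delta$ under a single thresholded layer) is the same consideration implicit in the general-purpose constructions the paper cites, so the two arguments coincide.
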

The proof follows by first recognizing that there exists a deterministic finite automaton (DFA) with $O(k^{m+1})$ states that generates \dyckkm.
Each state of the DFA is a sequence of up to $m$ unclosed brackets (of $k$ possible types), implying $k^{m+1}-1$ total states.
Then, one applies a general-purpose algorithm for implementing DFAs with $|Q|$ states using an RNN with $O(|Q|)$ hidden units \cite{omlin1996constructing,merrill2019sequential}. Intuitively, this construction assigns a separate hidden unit to each state.\footnote{The construction of \citet{indyk1995optimal} may achieve $O(\sqrt{|Q|})$ in this case (they do not discuss how vocabulary size affects construction size),  but this is still $O(k^{\frac{m}{2}})$ and thus intractable.}

 For our results, we first present two theorems for the Simple RNN and LSTM that use $O(mk)$ hidden units by simulating a stack of $m$ $O(k)$-dimensional vectors, which are useful for discussing the constructions.  Then we show how to reduce to $O(m\log k)$ via an efficient encoding of $k$ symbols in $O(\log k)$ space.

\begin{restatable}[]{thm}{thmsimplernntwomk}
\label{thm_simplernn2mk}
  For any $k,m\in\mathbb{Z}^+$, there exists a Simple RNN $f_\theta$ with $2mk$ hidden units that generates \dyckkm.
\end{restatable}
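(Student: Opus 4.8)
The plan is to simulate a stack of unclosed open brackets directly in the hidden state, rather than instantiating the exponential-size DFA of Theorem~\ref{thm_naive_generation}. The only information about a prefix $w_{1:t-1}$ that is relevant for deciding the valid next tokens of \dyckkm is the current stack of unclosed open brackets, which has length at most $m$: a string stays in \dyckkm iff at each step we either (i) emit any of the $k$ open brackets while the current depth is $<m$, (ii) emit the single close bracket matching the top of the stack while the depth is $\geq 1$, or (iii) emit $\omega$ while the stack is empty. So it suffices to maintain this stack in the hidden state and to read off (i)--(iii) with the softmax layer. I would lay out the $2mk$ units as $m$ slots of $2k$ units each, the $j$-th slot holding a one-hot encoding over the $k$ bracket types in its first $k$ units (all-zero when the slot is empty), with the remaining $k$ units per slot serving as scratch space; I keep the top of the stack pinned at slot $1$ so that the readout always inspects a fixed location.

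The core of the construction is choosing $W,U,b$ so that, under the saturating sigmoid (which by the finite-precision assumption forces every unit to a clean $0$ or $1$), an open-bracket input performs a push---shift each slot's contents up by one and write the new type's one-hot vector into slot $1$---and a close-bracket input performs a pop---shift each slot's contents down by one and clear slot $m$. Both the upward and the downward shift are linear maps, and since the token is one-hot, $Ux_t$ can inject a large type- and operation-dependent bias into exactly the slots that must change; the extra $k$ scratch units per slot are what let a single fixed recurrence realize a net shift in either direction, selected by the input, while keeping the other units saturated so that arbitrarily long-range dependencies do not decay. I would then set the readout $g_\theta=Vh_{t-1}+b_v$ to compute, by linear combinations, whether slot $1$ is empty (depth $=0$), whether slot $m$ is occupied (depth $=m$), and which type sits in slot $1$; feeding these through the softmax, I can make the $k$ open brackets share the mass when depth $<m$, the matching close bracket receive mass when depth $\geq 1$, and $\omega$ receive mass only when the stack is empty.

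Finally I would fix a single $\epsilon>0$ and verify that the locally $\epsilon$-truncated support of this distribution is exactly \dyckkm, checking both inclusions by induction on string length: every legal continuation gets probability bounded below by a constant independent of $t$ (because saturation makes the relevant logits take fixed values), and every illegal continuation gets probability below $\epsilon$. The step I expect to be the main obstacle is the push/pop implementation: a Simple RNN applies the \emph{same} matrix $W$ at every step, so encoding two opposite shift directions in one affine map---together with the correct boundary behavior at the empty and full stack and the exactness forced by saturation---is the delicate part, and it is precisely what drives the factor of two (the scratch rail) in the $2mk$ budget.
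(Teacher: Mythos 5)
Your high-level plan coincides with the paper's: keep the stack of at most $m$ unclosed brackets in the hidden state with the top at slot $1$, let $Ux_t$ together with the saturating sigmoid make the update input-dependent, and decide legality of $\symopen_i$, $\symclose_i$, $\symend$ through a linear-plus-softmax readout, verifying both inclusions of the locally $\epsilon$-truncated support. The gap is exactly at the step you yourself flag as the main obstacle, and it is not a deferrable detail: you never exhibit the mechanism by which one fixed $W$ realizes both shift directions, and the layout you commit to actually rules such a mechanism out. If the stack encoding always occupies the first $k$ units of each slot and the other $k$ ``scratch'' units carry no state, then the unit in slot $j$ that must hold old slot $j-1$'s bit after an open bracket must hold old slot $j+1$'s bit after a close bracket; its pre-activation is one fixed linear functional of $h_{t-1}$ plus an input-dependent constant, so a push requires it to threshold on the slot-$(j-1)$ bit uniformly over the slot-$(j+1)$ bit, while a pop requires the reverse. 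These two requirements force the cross-coefficient to be simultaneously dominant and negligible, which is impossible: an additive input shift passed through a monotone saturating nonlinearity can mask or force units, but it cannot change which old slot a given new unit copies from.

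The paper escapes this not with per-slot scratch but by making the $2mk$ units two complete candidate copies of the stack, $h=[h_{push},h_{pop}]$, with the invariant that after every step exactly one half is nonzero (the half named by the operation just performed) and the true stack is read off as the sum $h_{push}+h_{pop}$. The recurrence computes both futures at once: $W$ applies the shift-down map to $h_{push}+h_{pop}$ and deposits the result in the push half, and the shift-up map to the same sum, deposited in the pop half. Then $Ux_t$ adds $-2\beta$ to every unit of whichever half does not match $w_t$ (and writes $2\beta e_i$ into slot $1$ of the push half when $w_t=\symopen_i$), so after the bias $-\beta$ and saturation the wrong candidate is exactly $\mathbf{0}$ and the right one is exactly the new stack with entries in $\{0,1\}$. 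Consequently the stack content alternates between the two halves over time, and the readout must read each slot as a sum over both halves rather than inspect one fixed rail. Everything else in your outline (the softmax rows, the choice of $\epsilon$, the induction over prefixes) then goes through essentially as you describe; the ``compute both candidates and let the input annihilate one'' idea is the single missing piece, and it is the heart of the proof.
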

We state $2mk$ exactly instead of $O(mk)$ because this exactness is interesting and the constant is small; further, we find that the modeling power of the LSTM leads to a factor of 2 improvement:
\begin{restatable}[]{thm}{thmlstmmk}
\label{thm_lstmmk}
  For any $k,m\in\mathbb{Z}^+$, there exists a LSTM $f_\theta$ with $mk$ hidden units and $W_{\tilde{c}}=\mathbf{0}$ that generates \dyckkm.
\end{restatable}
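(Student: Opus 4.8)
The plan is to use the $mk$ cell units to store a stack of up to $m$ one-hot vectors. I would partition $c_t\in\mathbb{R}^{mk}$ into $m$ blocks of width $k$, where block $j$ holds the one-hot encoding of the $j$-th open bracket currently on the stack (and is the zero vector when slot $j$ is empty); write $c_t[j,i]$ for coordinate $i$ of block $j$, and likewise for $h_t$. I would maintain the invariant that the occupied blocks are exactly $1,\dots,d(w_{1:t})$, so the stack depth equals the number of nonzero blocks; push and pop act at absolute positions $d{+}1$ and $d$, which avoids any shifting of existing slots. The central design choice is to let the output gate mask the readout so that $h_t=o_t\odot\tanh(c_t)$ exposes \emph{only the current top block, at its own index} (i.e.\ $h_t$ is zero except in block $d$, where it carries $\tanh(1)$ times the top bracket's one-hot). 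This single idea is what makes every downstream computation linear.

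Since $W_{\tilde c}=\mathbf{0}$, the candidate $\tilde c_t=\tanh(U_{\tilde c}x_t+b_{\tilde c})$ depends only on the current token, so I would set it to the saturated one-hot of the bracket type carried by $x_t$ (replicated across blocks) and push all stack logic into the gates. Reading an open bracket is implemented by setting the forget gate to $1$ everywhere and firing the input gate only on block $d{+}1$ at the token's coordinate; since old slots are preserved by $f_t=1$ and the target slot was empty, $c_t$ becomes the old stack with the new bracket appended. Reading a close bracket is implemented by zeroing the forget gate on block $d$ (and keeping it $1$ on $1,\dots,d{-}1$) with the input gate off. These gates are computable from $h_{t-1}$ and $x_t$ by saturated sigmoids: from the masked readout the indicator ``block $j$ is the top'' is $\sum_i h_{t-1}[j,i]$ and ``block $j$ is the first empty slot'' is $\sum_i h_{t-1}[j-1,i]$, both linear in $h_{t-1}$; combining either with a token-type indicator from $U x_t$ and a large gain $C$ realizes the desired AND inside the sigmoid. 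The output gate is set analogously to reveal the \emph{new} top block ($d{+}1$ after a push, $d{-}1$ after a pop), again a linear function of $h_{t-1}$ shifted by one block according to whether $x_t$ is open or close.

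For the next-token distribution I would use $g_\theta(h_{t-1})=V h_{t-1}+b$ with a large gain so that valid tokens receive logit $+L$ and invalid ones $-L$. Using the masked readout: an open bracket is valid iff the top is not block $m$, i.e.\ $\sum_i h_{t-1}[m,i]=0$; the close bracket $\rangle_i$ is valid iff $\sum_j h_{t-1}[j,i]>0$ (the top is type $i$); and $\symend$ is valid iff $\sum_{j,i}h_{t-1}[j,i]=0$ (empty stack). With gap $2L$ the softmax assigns every valid token probability at least $\epsilon=1/(2k+1)$ and every invalid token strictly less than $\epsilon$, uniformly in $T$, so the locally $\epsilon$-truncated support is exactly \dyckkm. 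Every quantity that ever appears ($c_t\in\{0,1\}$, their $\tanh$, binary gates, and $\pm1$ candidates) ranges over a finite set, so a suitable $\mathbb{P}$ of $O(1)$ bits suffices, and the gains $C,L$ need only drive preactivations past the saturation threshold $\beta$.

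The main obstacle is that reading the top-of-stack type looks quadratic — one must identify which block is the top \emph{and} read its one-hot — whereas $V$ and all gate maps are linear. The crux of the construction, and the step I would spend the most care on, is showing that the output gate can isolate the top block at its true index so that this quadratic collapses into the linear sums above; I would then verify that this one masked readout simultaneously supports (i) the next-token softmax, (ii) enforcement of the depth bound $d\le m$, and (iii) the push/pop gate logic, while correctly handling the empty-stack and depth-$m$ boundary cases and the one-step timing mismatch whereby $o_t$ is computed from $h_{t-1}$ but applied to $c_t$.
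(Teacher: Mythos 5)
Your proposal is correct and takes essentially the same approach as the paper's own construction: a contiguous stack of $k$-wide slots in the cell state with push/pop at absolute positions, the output gate isolating the top slot at its own index so that all gate logic and the softmax readout become linear in $h_{t-1}$, gates realized as saturated sigmoids of ``top/first-empty indicator plus token-type indicator,'' and explicit handling of the timing mismatch by having $o_t$ anticipate the new top from $h_{t-1}$ and $x_t$. The paper packages exactly this design into a stack correspondence lemma (proved by induction) and a probability correctness lemma for the softmax, but the mechanism is identical to yours.
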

We point out the added property that $W_{\tilde{c}}=\mathbf{0}$ because, informally, this matrix corresponds to the recurrent matrix $W$ of the Simple RNN; it's the only matrix operating on LSTM's memory that \textit{isn't} used in computing a gate.
Thus, the LSTM we provide as proof uses \textit{only} its gates.

Using the same mechanisms as in the proofs above but using an efficient encoding of each stack element in $O(\log k)$ units, we achieve the following.

\begin{restatable}[]{thm}{thmsimplernnmlogk}
\label{thm_simplernnmlogk}
  For any $k,m \in\mathbb{Z}^+$, where $k>1$, 
  there exists a Simple RNN $f_\theta$ with $6m\lceil\log k\rceil-2m$ hidden units that generates \dyckkm.
\end{restatable}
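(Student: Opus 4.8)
The plan is to reuse the stack-simulation skeleton established for Theorem~\ref{thm_simplernn2mk} verbatim at the level of control flow, and to change only how each of the $m$ stack slots is encoded, replacing the $O(k)$-sized one-hot code of a bracket type by an $O(\log k)$-sized binary code. Concretely, I partition the $d = 6m\lceil\log k\rceil - 2m$ hidden units into $m$ blocks, one per stack slot, each block holding (i) the $\lceil\log k\rceil$-bit binary code $\mathrm{bin}(i)$ of the bracket type currently stored in that slot and (ii) a small constant number of auxiliary bits per code bit. The recurrence must, on reading an open bracket $\symopen_i$, shift every slot one position deeper and write $\mathrm{bin}(i)$ into the top slot; on reading a close bracket $\symclose_i$ it must shift every slot one position shallower and clear the vacated slot; and the read-out $g_\theta(h_{t-1})=Vh_{t-1}+b_v$ must, at each step, place at least $\epsilon$ mass exactly on the tokens that keep the prefix in \dyckkm---every $\symopen_i$ when $d(w_{1:t})<m$, the unique matching $\symclose_i$ when $d(w_{1:t})>0$, and $\symend$ when $d(w_{1:t})=0$.

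Then, step by step: first I set $U$ so that the column of $x_t$ for $\symopen_i$ delivers $\mathrm{bin}(i)$ as a saturating bias into the top-slot bits, making a push a pure input-driven write. Second I realise the slot-to-slot shifts through $W$: since the sigmoid saturates (value $0$ below $-\beta$, value $1$ above $\beta$), each hidden bit behaves as a Boolean register, and routing old slot $j$'s bits into slot $j\pm1$'s pre-activation with weight exceeding $\beta$ and a compensating negative bias copies bits faithfully. Third I handle the read-out: to enable the matching close bracket I give the logit of $\symclose_i$ a large positive weight on each top-slot bit that agrees with $\mathrm{bin}(i)$, so the logit of the correct $\symclose_i$ exceeds every competitor by $\Omega(\beta)$ and the softmax concentrates on it; the depth-boundary conditions are read off from a constant number of ``occupied'' bits (enable all opens iff the deepest slot is empty, enable a close/$\symend$ iff the top slot is filled/empty). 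Fourth I choose $\epsilon$ and verify by induction on prefix length that the locally $\epsilon$-truncated support (Definition~\ref{defn_epsilontruncated}) equals \dyckkm, using saturation to make every long-distance dependency exact rather than merely approximate.

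The main obstacle is the conditional shift. On an interior slot the desired next value is a multiplexer---copy from the shallower neighbour on a push, from the deeper neighbour on a pop---and the $2\!:\!1$ multiplexer of three Boolean inputs is not a threshold function, so it cannot be computed by a single saturating unit within one recurrence step. This is precisely why the binary encoding costs more than the naive $\lceil\log k\rceil$ bits per slot: I must spend extra bits to break each multiplexer into threshold-computable pieces. Concretely I plan a dual-rail encoding (store each code bit together with its complement, so an empty slot is distinguishable from a stored $0$ and so that ``copy-on-push'' and ``copy-on-pop'' can each be written as an $\mathrm{AND}$), together with separate push-staged and pop-staged copies of each rail whose outputs the input bias selects between; overlapping these copies across the $m$ slots is what I expect to yield the $6\lceil\log k\rceil-2$ units per slot and hence the stated $6m\lceil\log k\rceil-2m$ total. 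Making this decomposition shift correctly without ever materialising a genuine multiplexer in a single layer, while keeping the read-out linearly separable, is the delicate part---and it is exactly the work the LSTM discharges for free with its multiplicative gates, which explains the factor-of-two gap to Theorem~\ref{thm_lstmmk}.
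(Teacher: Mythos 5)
Your proposal is correct and follows essentially the same route as the paper's proof: you keep the push/pop-staged skeleton of Theorem~\ref{thm_simplernn2mk} (computing both shifted candidates in disjoint coordinates and letting the saturating input term kill the wrong one is exactly how the paper, too, avoids ever materialising a multiplexer in one step), and you swap each one-hot slot for a dual-rail binary code whose top slot is decoded by agreement counting. The one misattribution worth noting: the dual rail is not what makes the shift threshold-computable---the paper's One-Step Lemma needs only that slot entries lie in $\{0,1\}$, copied coordinate-wise under saturation---rather it is needed solely for the linear softmax readout (raw codes $p^{(i)}$ can be bit-wise supersets of one another, so a single-rail code cannot be linearly disambiguated), and the remaining $\lceil\log k\rceil-1$ always-one units per slot in the paper's encoding exist to offset the close-bracket logits so the softmax biases stay at the constant $0.5$ rather than $\lceil\log k\rceil-0.5$, which is the bookkeeping your per-slot count of $6\lceil\log k\rceil-2$ is implicitly missing.
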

Likewise, for LSTMs, we achieve a more memory-efficient generator. 
\begin{restatable}[]{thm}{thmlstmmlogk}
\label{thm_lstmmlogk}
 For any $k,m\in\mathbb{Z}^+$, where $k>1$, there exists an LSTM $f_\theta$ with $3m\lceil\log k\rceil-m$ hidden units and $W_{\tilde{c}}=\mathbf{0}$ that generates \dyckkm.
\end{restatable}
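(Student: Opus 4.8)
The plan is to reuse the gate-only LSTM stack simulation underlying Theorem~\ref{thm_lstmmk} essentially verbatim, changing only how each stack cell encodes its bracket type. In the $mk$-unit construction each of the $m$ cells stores a $k$-dimensional one-hot vector in the cell state $c_t$, and push/pop are realized purely through the forget, input, and output gates (with $W_{\tilde c}=\mathbf 0$, so the candidate $\tilde c_t=\tanh(U_{\tilde c}x_t+b_{\tilde c})$ depends only on the current token). I would keep this shift-register skeleton and replace the one-hot code of a bracket type $i\in[k]$ by a fixed binary code $b(i)\in\{0,1\}^{\lceil\log k\rceil}$ (here the hypothesis $k>1$ ensures $\lceil\log k\rceil\geq 1$). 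Concretely I would fix an injection $[k]\to\{0,1\}^{\lceil\log k\rceil}$ once and for all, have $\tilde c_t$ write $b(i)$ (together with the bookkeeping companion below) into the top cell on an open bracket, and realize the shift through the recurrent gate matrices exactly as before, now acting on blocks of $\lceil\log k\rceil$ bits rather than blocks of $k$ bits.

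First I would set up the cell representation so that everything the generator must decide remains linearly readable. The subtle point is that, unlike the one-hot case, a raw binary code does not let a linear output map $V h+b_v$ test ``is the top of the stack equal to type $i$?'', which is exactly the test needed to single out the unique legal closing bracket. I would fix this with the standard complement trick: store alongside $b(i)$ its bitwise complement $\overline{b(i)}=\mathbf 1-b(i)$, so that $\langle b(i'),b(i)\rangle+\langle\overline{b(i')},\overline{b(i)}\rangle$ attains its maximum $\lceil\log k\rceil$ precisely when $i'=i$ and is strictly smaller otherwise. This turns the matching test into one linear functional per closing symbol. A pleasant side effect is that the pair $(\mathbf 0,\mathbf 0)$ is distinct from every genuine code (which satisfies $b+\bar b=\mathbf 1$ coordinatewise), giving a free ``empty cell'' marker and hence a linear way to read off whether the stack is nonempty and what its depth is.

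Next I would route the remaining legality constraints through the same output layer: $\symend$ is legal iff the stack is empty, an opening bracket is legal iff the depth is below $m$, and a closing bracket is legal only when the stack is nonempty and matches. Depth testing reduces to reading occupancy of the relevant cells via the empty-cell marker, so I would add the auxiliary units needed to expose depth and to carry out the block shift; accounting for storage, linear decodability, and these auxiliary units yields the stated $3\lceil\log k\rceil-1$ units per cell, i.e.\ $3m\lceil\log k\rceil-m$ in total, still with $W_{\tilde c}=\mathbf 0$ (the precise per-cell bookkeeping I would relegate to the appendix). Throughout I would rely on the finite-precision saturating $\sigma$ and $\tanh$ from Section~\ref{sec_prelims_definitions}, so that every gate is driven to $0$ or $1$ and every stored bit is preserved exactly across arbitrarily many steps; this is what guarantees the long-distance dependencies rather than mere approximation.

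The step I expect to be the main obstacle is the output-layer decoding: proving that a single linear map into the logits, followed by softmax and locally $\epsilon$-truncation (Definition~\ref{defn_epsilontruncated}), simultaneously (i) selects exactly the matching $\symclose_{i}$ from the compressed top-of-stack, (ii) permits every opening bracket iff the depth is $<m$, and (iii) permits $\symend$ iff the depth is $0$, all with a uniform margin so that one $\epsilon>0$ works for every string and every depth. The one-hot construction got (i) for free; here it hinges on the complement encoding yielding a clean, type-independent threshold and on checking that the depth-based logits for (ii)--(iii) can be separated from the matching logits without interference. Verifying that the push/pop gates move whole $\lceil\log k\rceil$-bit blocks without corrupting neighbors, and that writing a new code never disturbs the complement bookkeeping, is routine given the Theorem~\ref{thm_lstmmk} skeleton, so those calculations I would defer to the appendix.
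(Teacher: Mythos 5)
Your overall route coincides with the paper's: reuse the gate-only $mk$-unit LSTM stack construction of Theorem~\ref{thm_lstmmk}, swap the one-hot slot encoding for a binary code concatenated with its bitwise complement so that one linear functional per closing symbol detects the top of the stack, use the zero vector as the empty-slot marker, and re-verify the softmax margins. This is exactly how the appendix proceeds, via an encoding-validity property (for the stack-correspondence lemma) and a softmax-validity property (for the probability-correctness lemma), and you correctly flag the output-layer decoding as the place where margins must be checked uniformly.

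The gap is in the step you dismiss as routine. The paper's LSTM stack-correspondence lemma holds only for encodings whose coordinates sum to exactly $1$ (the encoding-validity property); one-hot vectors satisfy this, and every gate in the $mk$ skeleton is calibrated around it: slot occupancy is sensed through uniform dot products of the form $-\rscale\mathbf{1}^\top h_{t-1,j} = -\rscale\gamma$, against biases fixed at $\pm 0.5\rscale\gamma$, $1.5\rscale\gamma$, etc. With your code $[b(i);\overline{b(i)}]$ the coordinate sum is $\lceil\log k\rceil$, so, for instance, the forget gate at the top slot on a \emph{push} step evaluates to $\sigma\bigl((1.5-\lceil\log k\rceil)\rscale\gamma\bigr)$, which saturates to $0$ whenever $k>2$: the LSTM erases the top of its stack on every step, and the construction fails. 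Rescaling the gate weights or biases by $\lceil\log k\rceil$ would repair the arithmetic but introduces constants growing with $k$, which the $p=O(1)$ finite-precision setting is meant to forbid. The paper's fix---and the actual purpose of the extra $\lceil\log k\rceil-1$ units per slot, which you instead attribute to ``exposing depth'' and ``carrying out the block shift'' (neither of which needs them)---is to fill those units with the constant $-1$, giving the code $[p^{(i)};\,\mathbf{1}-p^{(i)};\,-\mathbf{1}]$ whose coordinates sum to exactly $1$; this restores the stack lemma verbatim and simultaneously pins the closing-bracket match threshold at the $k$-independent value $0.5$ rather than $\lceil\log k\rceil-0.5$, resolving the precision issue in the output layer as well. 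So your unit count $3m\lceil\log k\rceil-m$ is correct, but the plan as written does not contain the idea that makes it correct.
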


\paragraph{Note on memory.}
While we have emphasized expressive power under memory constraints---what functions can be expressed, not what is learned in practice---neural networks are frequently intentionally overparameterized to aid learning \cite{zhang2017understanding,shwartz2017opening}.
Even so, known constructions for \dyckkm would require a number of hidden units far beyond practicality.
Consider if we were to use a vocabulary size of $100{,}000$, and a practical depth bound of $3$.
Then if we were using a $k^{m+1}$ hidden unit construction to generate \dyckkm, we would need $100{,}000^4=10^{20}$ hidden units.
By using our LSTM construction, however, we would need only $3\times3\times\lceil\log_2(100{,}000)\rceil -1\times3 = 150$ hidden units, suggesting that networks of the size commonly used in practice are large enough to learn these languages.

\paragraph{Lower bound.}
We also show that the bounds in Theorems~\ref{thm_simplernnmlogk},~\ref{thm_lstmmlogk} are tight. Specifically, the following theorem formalizes the statement that \textit{any} algorithm that uses a $d$-dimensional finite-precision vector memory to generate \dyckkm must use $d \in \Omega(m\log k)$ memory, implying that RNNs are optimal for doing so, up to constant factors.
\begin{restatable}[$\Omega(m\log k)$ to generate \dyckkm]{thm}{thmlowerbound}
\label{thm_lower_bound}
  Let $A$ be an arbitrary function from $d$-dimensional vectors and symbols $w_t\in\Sigma$ to $d$-dimensional vectors; $A : \mathbb{P}^d\times \Sigma \rightarrow \mathbb{P}^d$, $A(h_{t-1},w_t) \mapsto h_t$.
  Let $\psi$ be an arbitrary function from $\mathbb{P}^d$ to probability distributions over $\Sigma\cup\{\symend\}$.
  Let $f_\psi$ be a probability distribution over $\Sigma^*\symend$, with the form $f_\psi(w_{1:T}) = \prod_{t=1}^{T}f(w_t|w_{1:t-1})$, where $f(w_t|w_{1:t-1}) = \psi(h_{t-1})$.
  If $f$ generates \dyckkm, then $d\geq\frac{m\log k}{p}=\Omega(m\log k)$. 
 \end{restatable}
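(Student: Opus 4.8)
The plan is to prove the bound by a fooling-set (distinguishability) argument on the finite memory $\mathbb{P}^d$, which is the counting core behind the ``communication complexity'' framing. The memory can occupy exactly $|\mathbb{P}^d| = 2^{pd}$ configurations, so it suffices to exhibit $k^m$ prefixes that the machine is \emph{forced} to drive to pairwise-distinct memory states; injectivity then yields $2^{pd} \geq k^m$, i.e. $pd \geq m\log k$, which is the claim.

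\emph{The fooling set.} I would take the $k^m$ ``maximal stacks'' $\sigma = \symopen_{i_1}\cdots\symopen_{i_m}$ with each $i_j \in [k]$. Each is a legal prefix of a \dyckkm string (the depth climbs $1,\dots,m$ and never exceeds $m$) and can be completed to a language member, so each drives the machine deterministically from the initial memory state through repeated application of $A$ to a state $h(\sigma)\in\mathbb{P}^d$.

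\emph{The distinguishing step.} The heart of the argument is the claim $\sigma\neq\sigma'\Rightarrow h(\sigma)\neq h(\sigma')$. Let $s = \symclose_{i_m}\cdots\symclose_{i_1}\symend$ be the LIFO completion of $\sigma$, so that $\sigma s \in$ \dyckkm. Suppose $h(\sigma)=h(\sigma')$. Since $A$ is a deterministic function and $\psi$ depends only on the current state, the entire trajectory of states—and hence all local probabilities—produced while reading $s$ is identical whether $s$ is appended to $\sigma$ or to $\sigma'$. Because $f$ generates the language with some threshold $\epsilon$, every local probability along $\sigma s \in \mathcal{L}$ is $\geq\epsilon$; moreover the opening part $\sigma'$ has local probabilities $\geq\epsilon$ as well, certified by independently closing it via $\sigma'\symclose_{j_m}\cdots\symclose_{j_1}\symend\in\mathcal{L}$. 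Concatenating, every local probability along $\sigma' s$ is $\geq\epsilon$, so $\sigma' s$ lies in the locally $\epsilon$-truncated support, which equals $\mathcal{L}$. But $\sigma' s = \sigma'\symclose_{i_m}\cdots\symclose_{i_1}\symend$ closes $\sigma'$ using $\sigma$'s bracket types in LIFO order; since $\sigma\neq\sigma'$, some close bracket meets a non-matching open bracket on top of the stack, so $\sigma' s\notin$ Dyck-$k\supseteq\mathcal{L}$—a contradiction. Hence $\sigma\mapsto h(\sigma)$ is injective on the $k^m$ stacks.

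\emph{Conclusion and main obstacle.} Injectivity gives $2^{pd}=|\mathbb{P}^d|\geq k^m$, hence $d\geq \frac{m\log k}{p}=\Omega(m\log k)$. The one delicate point is the interaction with Definition~\ref{defn_epsilontruncated}: membership in the $\epsilon$-truncated support requires \emph{every} token—not just the grafted suffix—to have local probability $\geq\epsilon$. So the care lies in separately justifying that the opening $\sigma'$ already carries large local probabilities (via its own valid completion) and then invoking determinism of $A$ to transport the suffix probabilities verbatim from $\sigma$ to $\sigma'$; once this is in hand, the remainder is a clean counting step.
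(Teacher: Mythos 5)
Your proposal is correct and takes essentially the same route as the paper's proof: the same fooling set of $k^m$ prefixes consisting of $m$ open brackets, the same pigeonhole count $2^{pd} \geq k^m$, and the same LIFO-completion suffix used to show that two such prefixes sharing a memory state would force an ill-nested string into the language. Your handling of the final step---certifying the local probabilities of the prefix $\sigma'$ via its own valid completion and then transporting the suffix probabilities by determinism of $A$ and $\psi$---is in fact spelled out more explicitly than in the paper, which compresses this into the single assertion that strings sharing a representation cannot be distinguished.
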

Intuitively, $A$ is an all-powerful recurrent algorithm that represents prefixes $w_{1:t}$ as vectors, and $\psi$, also all-powerful, turns each vector into a probability distribution over the next token.
The proof follows from a simple communication complexity argument:
to generate Dyck-$(k,m)$, any algorithm needs to distinguish between all subsequences of unclosed open brackets, of which there are $k^m$.
So, $2^{dp} \geq k^m$, and the dimensionality $d\geq \frac{m\log k}{p}$. Since $p=O(1)$, we have $d= \Omega(m\log k)$.

\section{Stack constructions in Simple RNNs}
The memory needed to close all the brackets in a \dyckkm prefix $w_{1:t}$ can be represented as a stack of (yet unclosed) open brackets $[\symopen_{i_1},\dots,\symopen_{i_{m'}}]$, $m'\leq m$; reading each new parenthesis either pushes or pops from this stack.
Informally, all of our efficient RNN constructions generate \dyckkm by writing to and reading from an implicit stack that they encode in their hidden states.
In this section, we present some challenges in a naive approach, and then describe a construction to solve these challenges.
We provide only the high-level intuition; rigorous proofs are provided in the Appendix.

\subsection{An extended model}

We start by describing what is achievable with an extended model family, \textit{second-order RNNs} \cite{rabusseau2019connecting,lee1986higher}, which allow their recurrent matrix $W$ to be chosen as a function of the input (unlike any of the RNNs we consider.)
Under such a model, we show how to store a stack of up to $m$ $k$-dimensional vectors in $mk$ memory.
Such a representation can be thought of as the concatenation of $m$ $k$-dimensional vectors in the hidden state, like this:
\begin{center}
  \includegraphics[width=.16\linewidth]{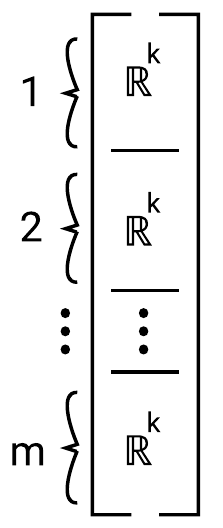}
\end{center}
We call each $k$-dimensional component a ``stack slot''.
If we want the first stack slot to always represent the top of the stack, then there's a natural way to implement \texttt{pop} and \texttt{push} operations.
In a \texttt{push}, we want to shift all the slots toward the bottom, so there's room at the top for a new element.
We can do this with an off-diagonal matrix $W_{\text{push}}$:\footnote{Note that only needing to store $m$ things means that when we push, there should be nothing in slot $m$; otherwise, we'd be pushing element $m+1$.}
\begin{center}
  \includegraphics[width=.62\linewidth]{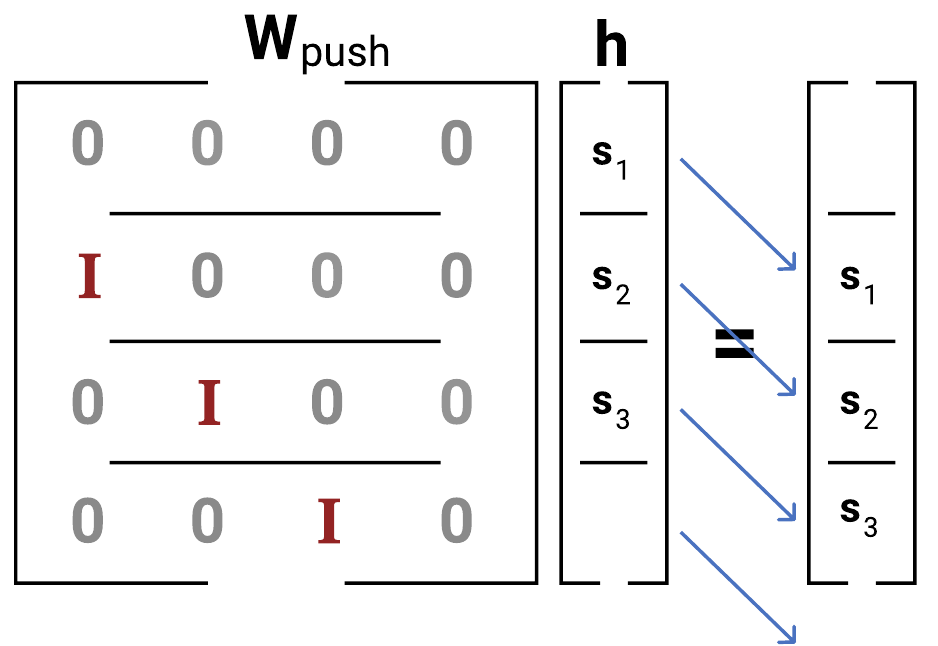}
\end{center}
This would implement the $Wh_{t-1}$ part of the Simple RNN equation.
We can then write the new element (given by $Ux_t$) to the first slot.
If we wanted to pop, we could do so with another off-diagonal matrix $W_{\text{pop}}$, shifting everything towards the top to get rid of the top element:
\begin{center}
  \includegraphics[width=.59\linewidth]{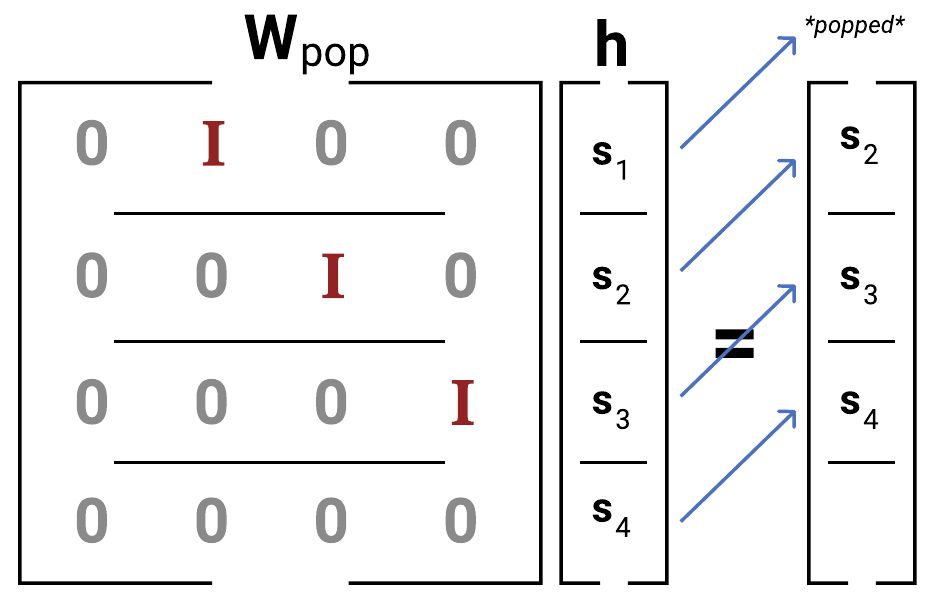}
\end{center}
This won't work for a Simple RNN because it only has one $W$.

\subsection{A Simple RNN Stack in $2mk$ memory}
Our construction gets around the limitation of only having a single $W$ matrix in the Simple RNN by doubling the space to $2mk$.
Splitting the space $h$ into two $mk$-sized partitions, we call one $h_{\text{pop}}$, the place where we write the stack if we see a \texttt{pop} operation, and the other $h_\text{push}$ analogously for the \texttt{push} operation.
If one of $h_\text{pop}$ or $h_\text{push}$ is empty (equal to $0$) at any time, we can try reading from both of them, as follows:
\begin{center}
  \includegraphics[width=.95\linewidth]{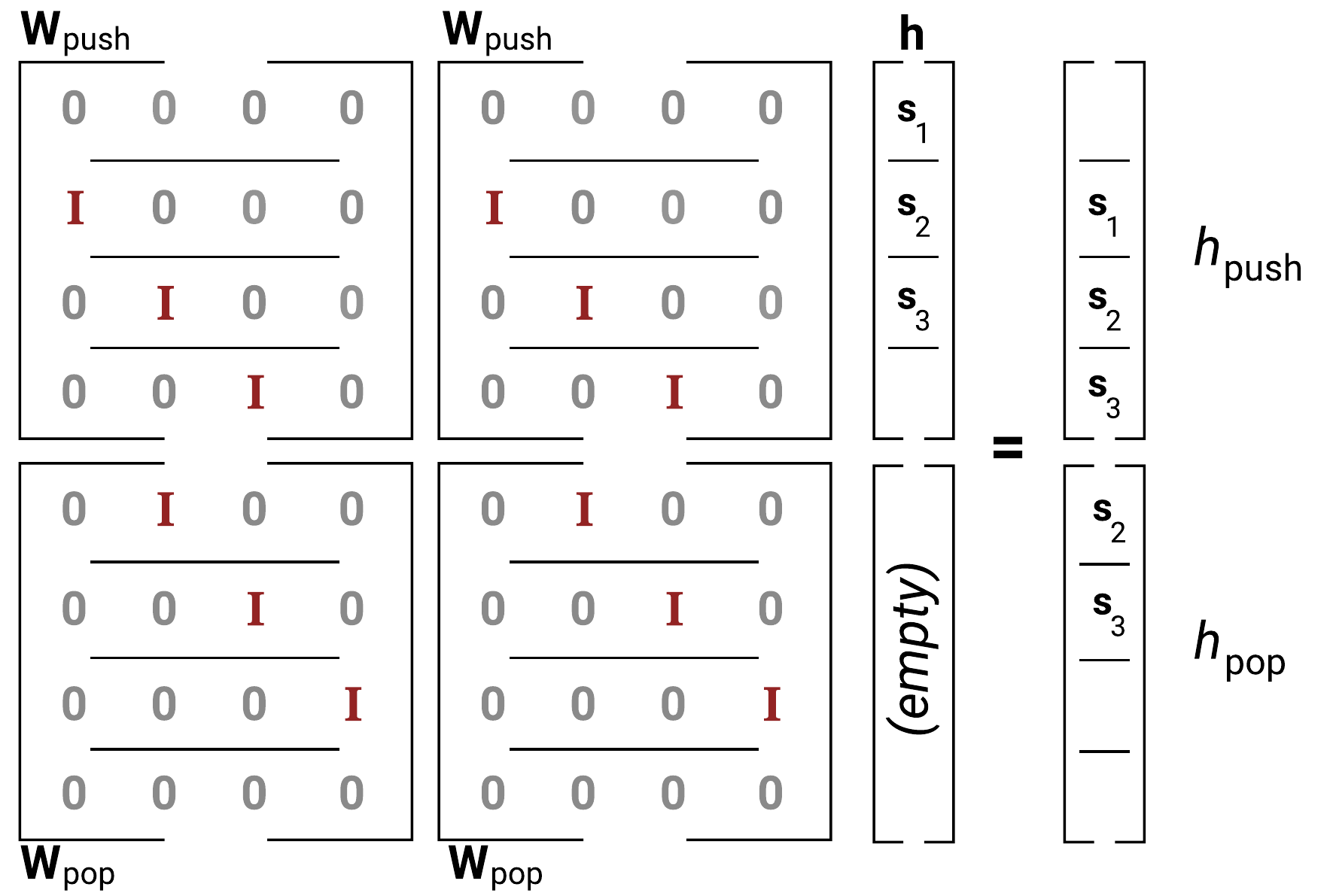}
\end{center}
Our $W$ matrix is actually the concatenation of two of the $W_{\text{pop}}$ and $W_{\text{push}}$ matrices.
Now we have two candidates, both $h_{\text{push}}$ and $h_{\text{pop}}$;  but we only want the one that corresponds to \texttt{push} if $w_t=\symopen_i$, or \texttt{pop} if $w_t=\symclose_i$.
We can zero out the unobserved option using the term $Ux_t$, adding a large negative value to every hidden unit in the stack that doesn't correspond to \texttt{push} if $x_t$ is an open bracket $\symopen_i$, or \texttt{pop} if $x_t$ is a close bracket $\symclose_i$:
\begin{center}
  \includegraphics[width=.72\linewidth]{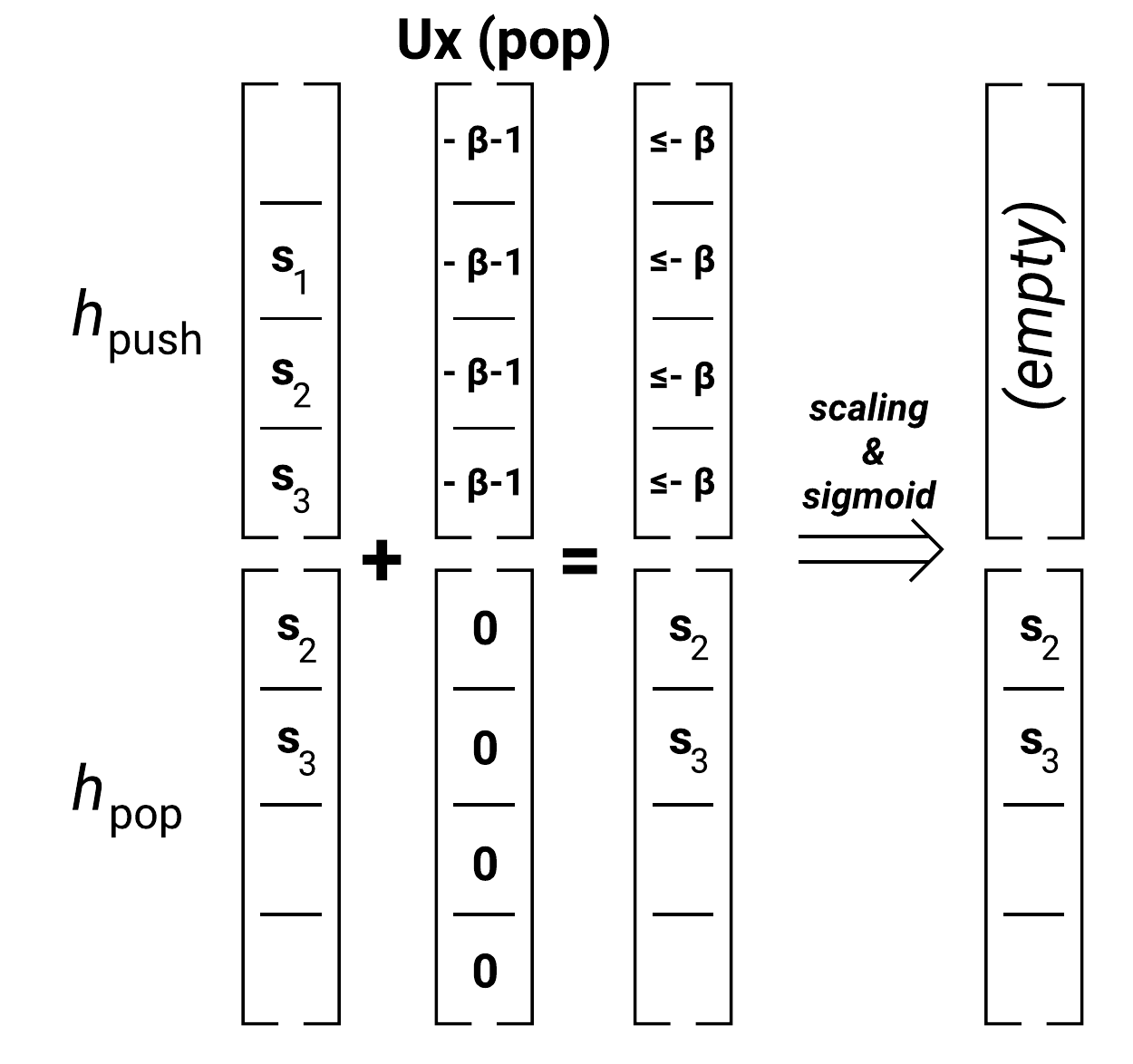}
\end{center}
So $Ux_{\symopen_i}=[e_i;0,\dots,-\beta-1,\dots]$, where $e_i$ is a one-hot representation of $\symopen_i$, and $Ux_{\symclose_i}=[-\beta-1,\dots,-\beta-1,0,\dots]$,
where $\beta$ is the very large value we specified in our finite-precision arithmetic.
Thus, when we apply the sigmoid to the new $Wh_{t-1} + Ux_t + b$, whichever of $\{h_{\text{pop}}, h_{\text{push}}\}$ doesn't correspond to the true new state is zeroed out.\footnote{For whichever of $h_{\text{tmp}}\in \{h_{\text{pop}},h_{\text{push}}\}$ that is not zeroed out, $\sigma(h_{\text{tmp}})\not=h_{\text{tmp}}$. Hence, we scale all of $U$ and $W$ to be large, such that $\sigma(Wh_{t-1})\in\{0,1\}$.} 

\begin{figure*}
  \centering
  \includegraphics[width=.85\linewidth]{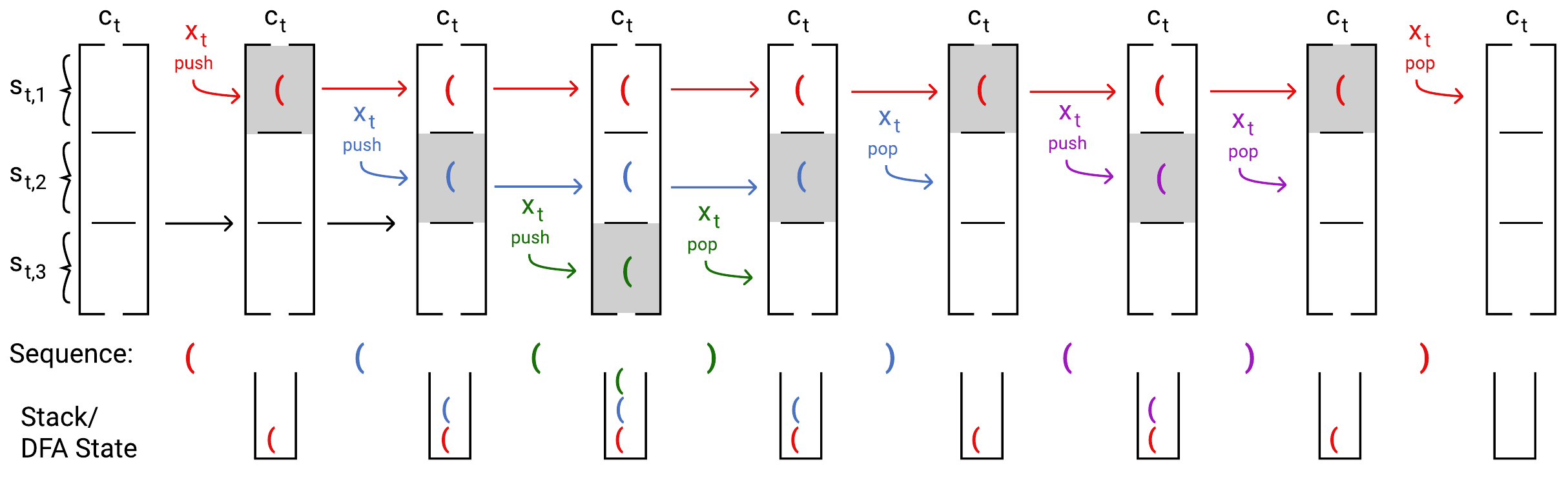}
  \caption{\label{figure_lstm_explanation}Example trace of the hidden state of an LSTM processing a string. The shaded slot is the top of the stack, passed through the output gate to the hidden state.}
\end{figure*}

\section{Stack construction in LSTMs}
We could implement our $2mk$ Simple RNN construction in an LSTM, but its gating functions suggest more flexibility in memory management, and \citet{levy2018long} claim that the LSTM's modeling power stems from its gates.
With an LSTM, we achieve the $mk$ of the oracle we described, all while \textit{exclusively} using its gates.
\looseness=-1

\subsection{An LSTM stack in $mk$ memory}
To implement a stack using the LSTM's gates, we use the same intuitive description of the stack as before: $m$ stack slots, each of dimensionality $k$.
However while the top of the stack is the first slot in the Simple RNN, the bottom of the stack is the first slot in the LSTM.
Before we discuss mechanics, we introduce the memory dynamics of the model.
Working through the example in Figure~\ref{figure_lstm_explanation}, when we push the first open bracket, it's assigned to slot $1$; then the second and third open brackets are assigned to slots $2$ and $3$.
Then a close bracket is seen, so slot $3$ is erased. %
In general, the stack is represented in a contiguous sequence of slots, where the first slot represents the bottom of the stack.
Thus, the top of the stack could be at \textit{any} of the $m$ stack slots.
So to allow for ease of linearly reading out information from the stack, we store the full stack only in the cell state $c_t$, and let only the slot corresponding to the top of the stack, which we'll refer to as the \textit{top slot}, through the output gate to the hidden state $h_t$.

Recall that the LSTM's cell state $c_t$ is specified as $c_t = f_t \odot c_{t-1} + i_t\odot \tilde{c}_t$.
The values $f_t$ and $i_t$ are the forget gate and input gate, while $\tilde{c}_t$ contains information about the new input.\footnote{Note that, since $W_{\tilde{c}}=\mathbf{0}$, we have $\tilde{c}_t = \text{tanh}(U_{\tilde{c}}x_t + b_{\tilde{c}})$, so it does not depend on the history $h_{t-1}$.}
An LSTM's hidden state $h$ is related to the cell state as $h_t = o_t \odot \text{tanh}(c_t)$, where $o_t$ is the output gate.

\paragraph{\texttt{push} mechanics.}
To implement a \texttt{push} operation, the \textbf{input gate} finds the first free slot (that is, equal to $\mathbf{0}\in \mathbb{R}^k$) by observing that it is directly after the top slot of $h_{t-1}$. The input gate is set to $1$ for all hidden units in this first free stack slot, and $0$ elsewhere.
This is where the new element will be written.
The \textbf{new cell candidate} $(\tilde{c}_t)$ is used to \textit{attempt} to write the input symbol $\symopen_i$ to \textit{all} $m$ stack slots.
But because of the input gate, $\symopen_i$ is only written to the first free slot.
The \textbf{forget gate} is set to $1$ everywhere, so the old stack state is copied into the new cell state. %
This is summarized in the following diagram, where the dark grey bar indicates where the gate is set to $0$:
\begin{center}
\includegraphics[width=0.57\linewidth]{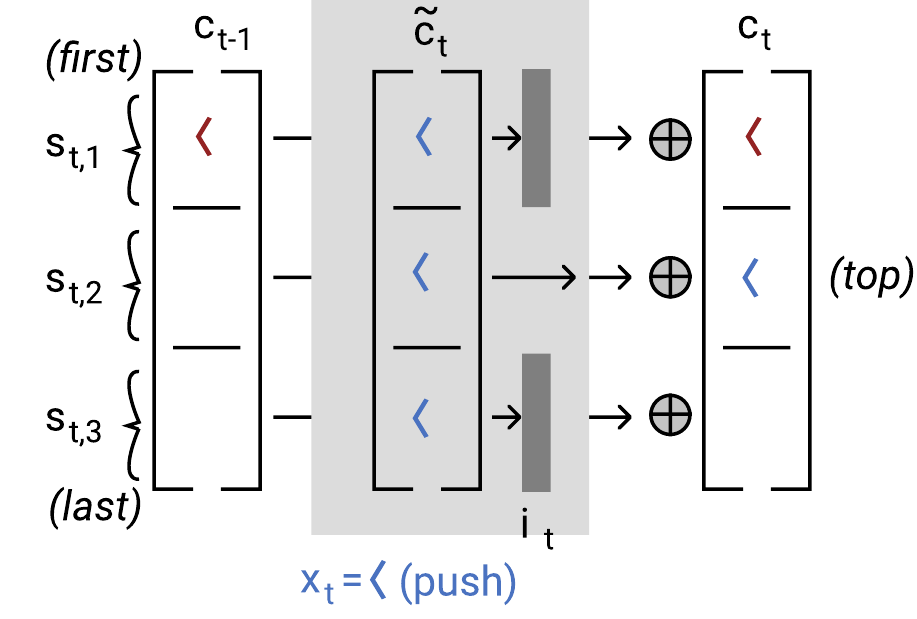}
\end{center}
\paragraph{\texttt{pop} mechanics.}
To implement a \texttt{pop} operation, the \textbf{forget gate} finds  the top slot, which is the slot farthest from slot $1$ that \textit{isn't} empty (that is, that encodes some $\symopen_i$.)
In practice, we do this by guaranteeing that the forget gate is equal to $1$ for all stack slots before (but excluding) the last non-empty stack slot.
Since this last non-empty stack slot encodes the top of the stac,, for it and all subsequent (empty) stack slots, the forget gate is set to $0$.\footnote{The \textbf{input gate} and the \textbf{output gate} are both set to $\mathbf{0}$.}
This erases the element at the top of the stack, summarized in the following diagram:
\begin{center}
\includegraphics[width=0.50\linewidth]{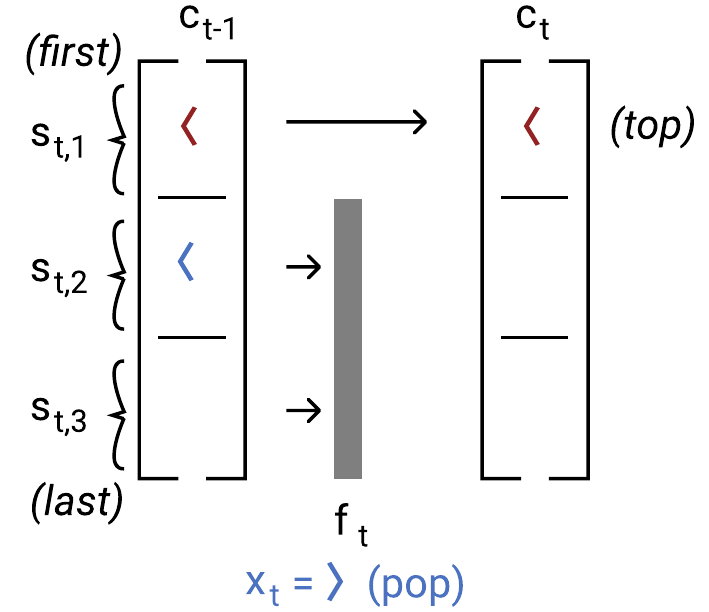}
\end{center}
\paragraph{\texttt{output} mechanics.}
We've so far described how the new cell state $c_t$ is determined.
So that it's easy to tell what symbol is at the top of the stack, we only want the \textit{top slot} of the stack passing through the output gate.
We do this by guaranteeing that the output gate is equal to $0$ for all stack slots from the first slot to the top slot (exclusive). %
The output gate is then set to $1$ for this top slot (and all subsequent empty slots,) summarized in the following diagram:
\begin{center}
\includegraphics[width=0.50\linewidth]{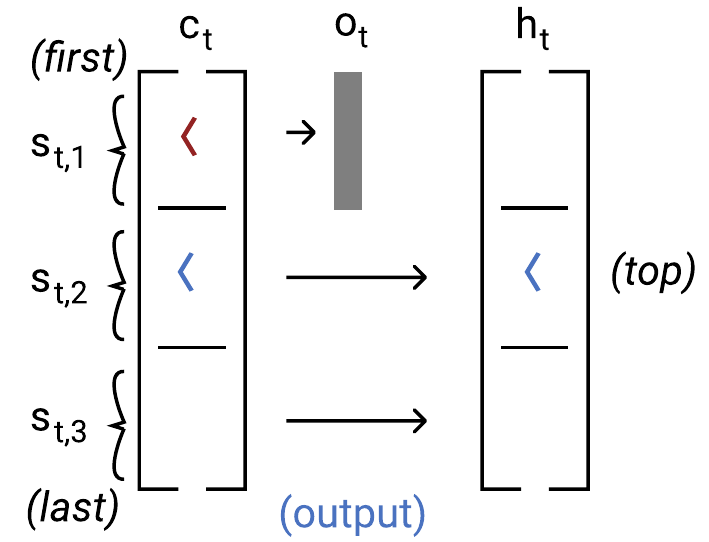}
\end{center}

\section{Defining the generating distribution}
So far, we've discussed how to implement implicit stack-like memory in the Simple RNN and the LSTM\@.
However, the formal claims we make center around RNNs' ability to \textit{generate} \dyckkm.
\subsection{Generation in $O(km)$ memory}
Assume that at any timestep $t$, our stack mechanism has correctly pushed and popped each open and close bracket, as encoded in $h_t,c_t$.
We still need to prove that our probability distribution,
\begin{align}
  w_t \mid w_{1:t-1} \sim \text{softmax}(Vh_{t-1} + b),
\end{align}
assigns greater than $\epsilon$ probability only to symbols that constitute continuations of some string in \dyckkm, by specifying the parameters $V$ and $b_v$.

\paragraph{Observing any open bracket $\symopen$.}
  If and only if fewer than $m$ elements are on the stack, all $\symopen_i$ must be assigned $\geq\epsilon$ probability.
    This encodes the depth bound.
    In our constructions, $m$ elements are on the stack if and only if stack slot $m$ is non-zero.
    So, each row $V_{\symopen_i}$ is zeros except for slot $m$, where each dimension is a large negative number, while the bias term $b_v$ is positive.
    
\paragraph{Observing the end of the string $\omega$.} If and only if $0$ elements are on the stack, the string can end.
    The row $V_{\omega}$ detects if any stack slot is non-empty.\footnote{In particular, the bias term $b_{\omega}$ is positive, but the sum $V_{\omega}h_{t-1} + b_\omega$ is negative if the stack is not empty.}
\paragraph{Observing close bracket $\symclose_i$.} The close bracket $\symclose_i$ can be observed if and only if the top stack slot encodes $\symopen_i$.
Both the Simple RNN construction and the LSTM construction make it clear which stack slot encodes the top of the stack.
In the Simple RNN, it's always the first slot.
In the LSTM, it's the only non-empty slot in $h_t$.
In our stack constructions, we assumed each stack slot is a $k$-dimensional one-hot vector $e_i$ to encode symbol $\symopen_i$.
So in the Simple RNN, $V_\symclose_i$ reads the top of the stack through a one-hot vector $e_i$ in slot $1$, while in the LSTM it does so through $e_i$ in all $m$ slots.
This ensures that $V_{\symclose_i}^\top h_t$ is positive if and only if $\symopen_i$ is at the top of the stack.

\subsection{Generation in $O(m\log k)$ memory}
We now show that $O(\log k)$-dimensional stack slots suffice to represent $k$ symbols.
The crucial difficulty is that we also need to be able to define $V$ and $b$ to be able to linearly detect which $\symopen_i$ is encoded in the top of the stack.\footnote{A further difficulty is guaranteeing that the stack constructions still work with the encoding; due to space, we detail all of this in the Appendix.}
A naive attempt might be treat the $\log k$ hidden units as binary variables, and represent each $\symopen_i$ using the $i^{\text{th}}$ of the $2^{\log k}=k$ binary configurations, which we denote $p^{(i)}$.
This does not work because some $p^{(i)}$ are strict supersets of other $p^{(j)}$, so the true symbol cannot decoded through any $V^\top h_t$.
To solve this, we use a constant factor more space, to ensure each symbol is decodable by $V$. %
In the first $\log k$ units of a stack slot we use the bit configuration $p^{(i)}$.
In the second, we use $(1-p^{(i)})$ (the binary negation.)
Call this encoding $\psi_i \in \mathbb{R}^{2\log k}$.
Using $\psi_i$ for the row $V_{\symclose_i}$, we have the following expression for the dot product in determining the probability distribution:
\begin{align*}
  V_{\symclose_j}^\top \psi_i =& \sum_{\ell=1}^{\lceil \log k\rceil} p^{(i)}_\ell p^{(j)}_\ell + \sum_{\ell=1}^{\lceil\log k\rceil} (1-p^{(i)}_\ell )(1-p^{(j)}_\ell)\\
  &\begin{cases}
    = \lceil\log k\rceil & i=j\\
    \leq \lceil\log k\rceil -1& i\not=j
  \end{cases}
\end{align*}
Thus, we can always detect which symbol $\symopen_i$ is encoded by setting $b_\symclose_i = \log k - 0.5$.\footnote{In actuality, we use a slightly less compact encoding, spending $\log k-1$ more hidden units set to $1$, to incrementally subtract $\log k - 1$ from all logits. Then the bias terms $b_\symclose_i$ are set to $0.5$, avoiding possible precision issues with representing the float $\lceil\log k\rceil - 0.5$.}

\section{Experiments}

\begin{figure}
  \includegraphics[width=\linewidth]{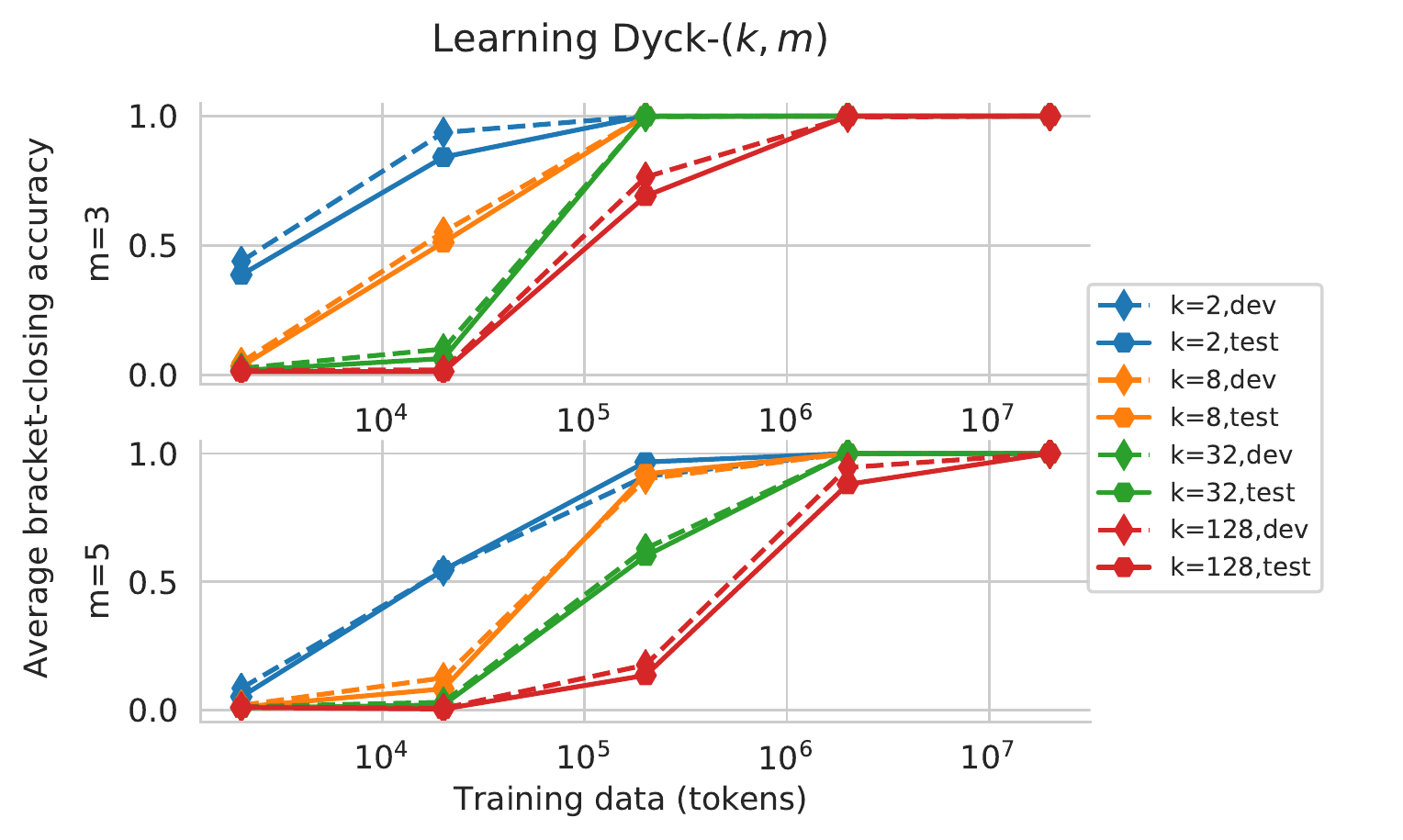}
  \caption{\label{fig_mainpaper_learningdyck} Learning curves for \dyckkm languages.}
\end{figure}
Our proofs have concerned constructing RNNs that generate \dyckkm; now we present a short study connecting our theory to learning \dyckkm from finite samples.
In particular, for $k\in\{2,8,32,128\}$ and $m\in\{3,5\}$, we use our theory to set the hidden dimensionality of LSTMs to $3m\lceil\log k\rceil -m$, and train them as LMs on samples from a distribution\footnote{Defined in Appendix~\ref{appendix_sec_experiments}} over \dyckkm.
For space, we provide an overview of the experiments, with details in the Appendix.
We evaluate the models' abilities to extrapolate to unseen lengths by setting a maximum length of $84$ for $m=3$, and $180$ for $m=5$, and testing on sequences longer than those seen at training time.\footnote{Up to twice as long as the training maximum.}
For our evaluation metric, let $p_j$ be the probability that the model predicts the correct closing bracket given that $j$ tokens separate it from its open bracket.
We report $\text{mean}_j p_j$, to evaluate the model's bracket-closing memory.

For all configurations, we find that the LSTMs using our memory limit achieve error less than $10^{-4}$ when trained on 20 million tokens. %
Strikingly, this is despite the fact that for large $m$ and $k$, a small fraction of the possible stack states is seen at training time;\footnote{See Table~\ref{appendix_table_dfa_percents}.} this shows that the LSTMs are not simply learning $k^m$ structureless DFA states.
Learning curves are provided in Figure~\ref{fig_mainpaper_learningdyck}.

\section{Discussion and conclusion}
We proved that finite-precision RNNs can generate \dyckkm, a canonical family of bounded-depth hierarchical languages, in $O(m\log k)$ memory, a result we also prove is tight.
Our constructions provide insight into the mechanisms that RNNs and LSTMs can implement. %

The Chomsky hierarchy puts all finite memory languages in the single category of regular languages.
But humans generating natural language have finite memory, and context-free languages are known to be both too expressive and not expressive enough \cite{chomsky1959certain,joshi1990convergence}.
We thus suggest the further study of what \textit{structure} networks can encode in their memory (here, stack-like) as opposed to (just) their position in the Chomsky hierarchy. %
While we have settled the representation question for \dyckkm,
many open questions still remain:
What broader class of bounded hierarchical languages can RNNs efficiently generate?
Our experiments point towards learnability; what class of memory-bounded languages are efficiently learnable?
We hope that answers to these questions will not just demystify the empirical success of RNNs
but ultimately drive new methodological improvements as well.

\paragraph{Reproducibility}
Code for running our experiments is available at \url{https://github.com/john-hewitt/dyckkm-learning}. An executable version of the experiments in this paper is on CodaLab at \url{https://worksheets.codalab.org/worksheets/0xd668cf62e9e0499089626e45affee864}.

\paragraph{Acknowledgements}
The authors would like to thank Nelson Liu, Amita Kamath, Robin Jia, Sidd Karamcheti, and Ben Newman. %
JH was supported by an NSF Graduate Research Fellowship, under grant number DGE-1656518.
Other funding was provided by a PECASE award.

\bibliography{emnlp2020}
\bibliographystyle{acl_natbib}

\newpage
\appendix

\appendix
\section{Appendix outline}
This Appendix has the following order.
In (\S\ref{appendix_sec_dyckkm}), we provide a definition of \dyckkm equivalent to that in the main text but more useful for our proofs.
In (\S\ref{appendix_sec_preliminaries}), we state preliminary definitions and assumptions, and prove the lower-bound of $\Omega(m\log k)$ hidden units to generate \dyckkm.
In (\S\ref{appendix_sec_simple_rnn}), we formally introduce our Simple RNN stack construction, and prove its correctness in a lemma.
In (\S\ref{appendix_sec_lstm}), we formally introduce our LSTM stack construction, and prove its correctness in a lemma.
In (\S\ref{appendix_sec_generating}), we prove that a linear (+softmax) decoder on the Simple RNN and LSTM hidden states can be used to generate \dyckkm in $O(mk)$ hidden units using a $1$-hot encoding of stack elements.
In this section we also prove that a general RNN construction of DFAs allows for generation of \dyckkm in $O(k^{m+1})$ hidden units.
In (\S\ref{appendix_sec_efficient_generating}), we provide an alternative encoding of elements in our stack constructions for the Simple RNN and LSTM that uses $O(\log k)$ space per element, and prove that our stack constructions still hold using this encoding.
We provide a linear (+softmax) decoder on the hidden states of the Simple RNN and LSTM (when using the $O(\log k)$ stack element encoding) that can be used as a drop-in replacement for the decoder from the $1$-hot representations, thus generating \dyckkm. This proving that both the Simple RNN and LSTM generate \dyckkm in $O(m\log k)$ hidden units.

\begin{table}
\small
\centering
\begin{tabular}{p{0.43\linewidth}p{0.20\linewidth}p{0.20\linewidth}}
\toprule
Definition/Proof & Main paper & Appendix\\
\midrule
\dyckkm & Definition 1 & \hyperref[appendix_def_dyckkm]{Definition}~\ref{appendix_def_dyckkm}\\
Simple RNN generator & Definition~\ref{defn_simplernn} & \hyperref[appendix_def_simple_rnn]{Definition~\ref*{defn_simplernn}}\\
LSTM generator & Definition~\ref{defn_lstmgenerator} & \hyperref[appendix_def_lstm]{Definition~\ref*{defn_lstmgenerator}}\\
Locally $\epsilon$-truncated support & Definition~\ref{defn_epsilontruncated} & \hyperref[appendix_def_locally_epsilon]{Definition}~\ref{defn_epsilontruncated}\\
Fixed-precision setting & & \hyperref[appendix_def_fixed_precision]{Definition}~\ref{appendix_def_fixed_precision}\\
\midrule
Stack correspondence lemma for Simple RNNs & & \hyperref[appendix_lemma_simple_rnn_stack_correspondence]{Lemma}~\ref{appendix_lemma_simple_rnn_stack_correspondence} \\
Stack correspondence lemma for LSTMs & & \hyperref[appendix_lemma_lstm_stack_correspondence]{Lemma}~\ref{appendix_lemma_lstm_stack_correspondence} \\
Simple RNN generates \dyckkm using $O(k^m)$ & Theorem~\ref{thm_naive_generation} &\hyperref[appendix_thm_naive_generation]{Theorem~\ref*{thm_naive_generation}} \\
Simple RNN generates \dyckkm using $2mk$ & Theorem~\ref{thm_simplernn2mk} & \hyperref[appendix_thm_simple_rnn_2mk]{Theorem~\ref*{thm_simplernn2mk}} \\
LSTM generates \dyckkm using $mk$ & Theorem~\ref{thm_lstmmk} & \hyperref[appendix_thm_lstm_mk]{Theorem~\ref*{thm_lstmmk}} \\
Simple RNN generates \dyckkm in $6m\lceil\log k \rceil-2m$ & Theorem~\ref{thm_simplernnmlogk} & \hyperref[appendix_thm_simple_rnn_mlogk]{Theorem~\ref*{thm_simplernnmlogk}} \\
LSTM generates \dyckkm in $3m\lceil\log k \rceil-m$ & Theorem~\ref{thm_lstmmlogk} &\hyperref[appendix_thm_lstm_mlogk]{Theorem~\ref*{thm_lstmmlogk}}\\
Lower bound of $\Omega(m\log k)$ & Theorem~\ref{thm_lower_bound} & \hyperref[appendix_thm_lower_bound]{Theorem~\ref*{thm_lower_bound}} \\
\bottomrule
\end{tabular}
\caption{\label{table_appendix_theorems}Correspondence between and hyperlinks for definitions and theorems in the main paper and the same objects in the appendix.}
\end{table}

\section{The \dyckkm languages} \label{appendix_sec_dyckkm}
To better understand the success of neural networks on natural language syntax, we aim for a formal language that models the unbounded recursiveness of natural language while also reflecting its bounded memory requirements.
We thus introduce the \dyckkm languages, corresponding to sequences of balanced brackets of $k$ types with a maximal number of $m$ unclosed brackets at any point in the sequences (yielding a bound stack depth of $m$ to parse such sentences). 

Though in the main text we defined \dyckkm by intersecting Dyck-$k$ with a language that simply bounds the difference between the number of open brackets and the number of close brackets, here we provide an equivalent definition that will aid in our proofs.
Each \dyckkm language, specified by fixing a value of $m$ and $k$, is defined using a deterministic finite automaton (DFA).
Here, we provide a general description of any \dyckkm DFA.

Formally, we define each language by the deterministic finite automaton $\mathcal{D}_{m,k} = (Q, \Sigma, \delta, F, q_0)$.
The vocabulary $\Sigma$ consists of $k$ types of open brackets: $\{\symopen_i\}_{i=1,\dots,k}$  corresponding closing brackets $\{\symclose_i\}_{i=1,\dots,k}$. %
Strings over the vocabulary are $w_{1:T} \in \Sigma^*\symend$, where $\symend\not\in\Sigma$ is a special symbol representing the end of the sequence.
$\Sigma\cup \{\symend\}$ are collectively referred to as symbols.

This slightly nonstandard requirement allows for a natural connection with language models, which must estimate the probability that a string ends at any given token.\footnote{But crucially, since $\symend\not\in\Sigma$, the language model need not define a distribution \textit{after} $\symend$ is seen; it can only be the last token.}
Overloading notation, we'll also use $\mathcal{D}_{m,k}$ to refer to the language (the set of strings) itself, defined as the strings accepted by the DFA.

\subsection{DFA States, $Q$}
We now define the states $q \in Q$.
First, we define reject state $r$, and accept state $[\symend]$.
Each other state is uniquely identified by a list of open bracket symbols of length up to $m$; thus the full set of states is provided by: %
\begin{align*}
  Q = \{[\symend], r \}
  &\cup \Big\{ [\symopen_{i_1}\symopen_{i_2},\dots,\symopen_{i_{m'}}]\Big\}_{m'\in 1\dots m, i_j \in 1\dots k}
\end{align*}
The number of states is thus $k^{m+1}+1$, where all but two states reflect a list of open brackets.
We will denote each list $[\symopen_{i_1}\symopen_{i_2},\dots,\symopen_{i_{m'}}]$, $0\leq m' \leq m$ as a \textit{stack state} with $m'$ \textit{elements}, and the value of $\symopen_{i_{m'}}$ as the top element of the stack.
We let $q_0 = [] = [\symopen_{i_1}\symopen_{i_2},\dots,\symopen_{i_{m'}}]$, $m'=0$, the empty stack state.

\begin{figure}
  \begin{center}
    \includegraphics[width=.8\linewidth]{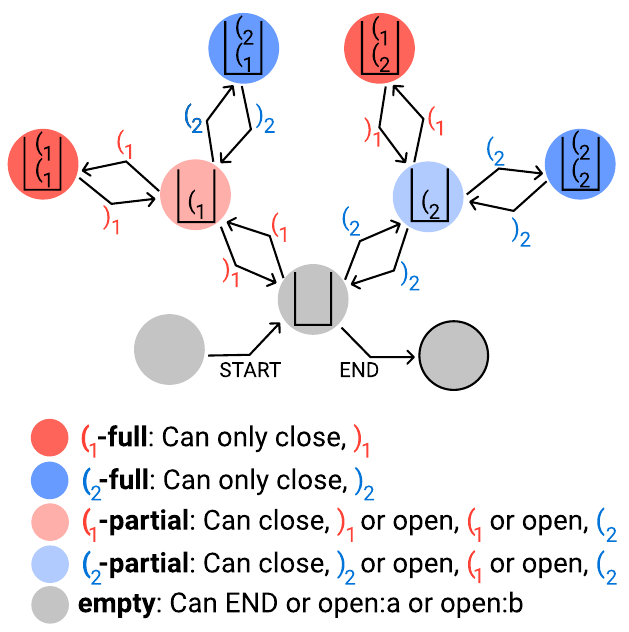}
    \caption{\label{figure_2bounded_dyck2}A deterministic finite automaton describing the transitions of $2$-bounded Dyck-$2$ that do not lead to a reject state (omitted for space) and a qualitative description of the classes of states.}
  \end{center}
\end{figure}

\subsection{Transition function, $\delta$}
We now define the transition function, $\delta$.

\begin{description}
  \item[Empty stack state] The state $[]$ can transition either to the accept state or to another stack state:
    \begin{align}
      &\delta([], \symend) = [\symend]\\
      &\delta([], \symopen_i) = [\symopen_i]
    \end{align}
    while any other symbol transitions to the reject state.
  \item[Partial list states]
    For any state of the form $[\symopen_{i_1}\symopen_{i_2},\dots,\symopen_{i_{m'}}]$, where $m'<m$, an open bracket can be pushed to the list (since $m'<m$), or the last open bracket can be removed, by observing its corresponding close bracket.
    \begin{align}
      \delta([\symopen_{i_1}\symopen_{i_2},\dots,\symopen_{i_{m'}}], \symopen_i) = [\symopen_{i_1}\symopen_{i_2},\dots,\symopen_{i_{m'}}, \symopen_i]\\
      \delta([\symopen_{i_1}\symopen_{i_2},\dots,\symopen_{i_{m'}}], \symclose_{i_{m'}}) = [\symopen_{i_1}\symopen_{i_2},\dots,\symopen_{i_{m'-1}}]
    \end{align}
    All other symbols transition to the reject state.
  \item[Full list states] For any state of the form $[\symopen_{i_1}\symopen_{i_2},\dots,\symopen_{i_m}]$, that is, the list is of length $m$ and thus full, the close bracket of the top of the stack removes that bracket,
    \begin{align}
      \delta([\symopen_{i_1}\symopen_{i_2},\dots,\symopen_{i_m}], \symclose_{i_m}) = [\symopen_{i_1}\symopen_{i_2},\dots,\symopen_{i_{m-1}}]
    \end{align}
    All other symbols transition to the reject state.

  \item[$r$] The reject state $r$ transitions to itself for all symbols.
  \item[{$[\symend]$}] No transitions from the accept state need be defined, since only $\delta([],\symend)=[\symend]$, and $\symend$ must be the last symbol of any string in the universe, and can only occur once.
\end{description}

This accounts for the transition function from all states in $Q$, and completes our definition of the DFA $D_{m,k}$.
We are now prepared to formally define the language:
\begin{defn}[\dyckkm] \label{appendix_def_dyckkm}
  For any $k,m\in\mathbb{Z}^{+}$, The language \dyckkm is the set of strings accepted by the DFA $D_{m,k}$; overloading notation: $D_{m,k} \subseteq \Sigma^*\symend$.\footnote{Where $Z^{+}$ denotes the positive integers.}
\end{defn}

Overloading notation, for any $w_{1:T}\in\Sigma^*\omega$, for any $t\in1,\dots,T$, we'll say $q_t = D_{m,k}(w_{1:t})$, denoting the state that $D_{m,k}$ is in after consuming $w_{1:t}$.

\section{Preliminaries}\label{appendix_sec_preliminaries}

\defnepsilontruncated* \label{appendix_def_locally_epsilon}
Through the notion of $\epsilon$-truncated support, a language model specifies which tokens are allowable continuations of each string prefix by assigning them greater than $\epsilon$ probability.
With this, we're ready to connect RNNs and formal languages:
\begin{defn}[generating a language]
  A probability distribution $f_\theta$ generates a formal language $\mathcal{L}$ if there exists $\epsilon>0$ such that the $\epsilon$-truncated support $\mathcal{L}_{f_\theta}$ is equal to $\mathcal{L}$.
\end{defn}

\subsection{Technical considerations}
\begin{defn}[fixed-precision setting] \label{appendix_def_fixed_precision}
  For language parameters $m,k$ of $D_{m,k}$, and input sequence lengths $T$, we assume that each floating-point value can be specified in $p\in O(1)$ bits (that is, not scaling in any parameter.) These bits choose elements of the set $\mathbb{P}\subset \mathbb{Q}$; we do not specify how $\mathbb{P}$ must be set.
  The $\mathbb{P}$ chosen for our constructions is described in Appendix~\ref{appendix_accounting_of_finite_precision}, after the relevant constants have been defined.
\end{defn}

Under our finite-precision setting, a reasonable assumption about the properties of the sigmoid and hyperbolic tangent functions make our claim considerably simpler.
The sigmoid function, $\sigma(x) = \frac{1}{1+e^{-x}}$, has range $(0,1)$, excluding its boundaries $\{0,1\}$.
However, in a finite-precision arithmetic, $\sigma(x)$ cannot become arbitrarily close to $0$ or $1$.
In fact, in popular deep learning library PyTorch\footnote{Under the \texttt{float} datatype, PyTorch v1.3.1 \cite{paszke2019pytorch}.}, $\sigma(x)$ is exactly equal to $1$ for all $x>6$.
We define the floating point sigmoid function to equal $0$ or $1$ if the absolute value of its input is greater or equal in absolute value to some threshold $\beta$:
\begin{align}
  \sigma_{\text{fp}}(x) = \begin{cases}
    \sigma(x) & -\beta < x < \beta \\
    1 & x \geq \beta\\
    0 & x \leq -\beta,
  \end{cases}
\end{align}
Similarly for the hyperbolic tangent function, we define:
\begin{align}
  \text{tanh}_{\text{fp}}(x) = \begin{cases}
    \text{tanh}(x) & -\beta < x < \beta \\
    1 & x \geq \beta\\
    -1 & x \leq -\beta,
  \end{cases}
\end{align}
For the rest of this paper, we will refer to $\sigma_{\text{fp}}$ as $\sigma$, and $\text{tanh}_{\text{fp}}$ as $\text{tanh}$.

\subsection{Lower bound of $\Omega(m\log k)$ for generation of \dyckkm}
\thmlowerbound* \label{appendix_thm_lower_bound}
\paragraph{Proof.}
We provide a communication complexity argument.
Assume for contradiction that there exists such a machine with $d<\frac{m\log k}{p}$.
Consider any string $w_{1:m}=\symopen_{i_1},\dots,\symopen_{i_{m}}$ of $m$ open brackets, where each is one of $k$ types.
There are $k^{m}$ such strings, but because $d < \frac{m\log k }{p}$, we have that the model has $2^{dp} < k^m$ possible representations, so at least two such strings must share the same representation.
Let such a pair be $w\not=w'$, where likewise $w'$ is defined as $\symopen_{i'_1},\dots,\symopen_{i'_m}$.
Consider the sequence of open bracket indices that defines $w$: $i_1, \dots, i_m$.
Let $s$ be the string $\symclose_{i_m},\dots,\symclose_{i_1}$ of close brackets in the reverse order of the indices of $w$, and likewise $s'$ for $w'$.

The string $w::s$, where $::$ denotes concatenation, is in \dyckkm.
However, $w'::s$ is not in \dyckkm as it breaks the well-nested brackets condition wherever $\symopen_{i_j}\not = \symopen_{i'_j}$.
This must occur at least once, since $w\not=w'$.
Since the model assigns them the same representation even though they must be distinguished to generate \dyckkm, the model does not generate \dyckkm.

\section{Proving Simple RNN stack correspondence in $2mk$ units} \label{appendix_sec_simple_rnn}

In this section, we provide a formal description of our Simple RNN stack construction, and introduce and prove the \textit{stack correspondence lemma}, to guarantee its correctness.
\defnsimplernn* \label{appendix_def_simple_rnn}

\paragraph{Encoding a Stack in the Hidden State}
We define mappings between $Q$---the DFA state space---and the RNN's state space $\mathbb{R}^{2km}$.
First, we encode a stack into a $km$-dimensional vector as follows:
\begin{align}\label{def:srnn-r-map}
    R([\symopen_{i_1}, &\symopen_{i_2}, \dots, \symopen_{i_{m'}}]) \nonumber \\
    &= [e_{i_1}, e_{i_2}, \dots, e_{i_{m'}}, {\bf 0}_{\in \mathbb{R}^{(m-m')k}}]
\end{align}
where $[\dots]$ denotes concatenation of vectors into a $km$-dimensional vector.
We note that $R$ is injective, that is, the inverse map $R^{-1}$ is well-defined on all vectors in the image of $R$.

Later, when constructing the efficient $O(m  \operatorname{log} k)$ memory encoding in Section~\ref{sec:logk-srnn}, we will swap the one-hot vectors $e_i$ for other vectors that will also have entries in $\{0,1\}$.\footnote{Looking ahead to the more complex LSTM construction, we'll introduce notation to denote these vectors $e_i = \psi^{-1}(\symopen_i)$, and look to replace $\psi$ when we develop the $O(m\log k)$ construction.}

Based on this, $\mathcal{S}$ maps a pair of a stack operation ($\sigma \in \{push, pop\}$) and a DFA state $q$ (assuming $q \not\in \{r, [\symend]\}$) to a hidden state in $\mathbb{R}^{2km}$:
\begin{equation}
\mathcal{S}(\sigma, q) = \begin{cases} [R(q), {\bf 0}_{\in \mathbb{R}^{km}}] &\sigma=push\\
[{\bf 0}_{\in \mathbb{R}^{km}}, R(q)] &\sigma=pop
\end{cases}
\end{equation}
Let $\mathcal{R} := Im(S) \subset \mathbb{R}^{2km}$. 
By writing a hidden state $h \in \mathcal{R}$ as $h = [h_{push}, h_{pop}]$, we can obtain the corresponding DFA state by
\begin{equation}\label{def:srnn-q}
    \mathcal{Q}(h) = R^{-1}\left(h_{push} + h_{pop}\right)
\end{equation}
If $h = \mathcal{S}(\sigma, q)$, then the definition guarantees that $\mathcal{Q}(h)=q$.

\paragraph{Defining Transition Matrices}

To define the RNN (apart from $V$, $b_v$), we have to specify the matrices $W, U, E,$, and the vector $b$.

We start by defining $W \in \mathbb{R}^{2km \times 2km}$.
Let $M_{pop}, M_{push} \in \mathbb{R}^{km \times km}$ be the matrices where 
\begin{align} \label{m_up}
  (M_{pop})_{i, j} = \begin{cases}
    2\beta & i=j+k\\
    0 & o.w.
  \end{cases}
\end{align}
\begin{align} \label{m_down}
  (M_{push})_{i, j} = \begin{cases}
    2\beta & i=j-k\\
    0 & o.w.
  \end{cases}
\end{align}
Let $I_{km}$ be the $km\times km$ identity matrix.
Take $W \in \mathbb{R}^{2km \times 2km}$ defined as
\begin{equation}
W = 
\begin{pmatrix}
M_{push}\\
M_{pop}
\end{pmatrix}
\cdot
\begin{pmatrix}
I_{km} & I_{km}
\end{pmatrix}
\end{equation}

To specify the matrices $E$ and $U$, we need to fix an assignment to the integers $1, \dots, 2k$ to the symbols in $\Sigma$. We will assign the integers $1, \dots, k$ to the opening brackets (i.e., $\symopen_1 \mapsto 1, \dots, \symopen_k \mapsto k$), and the integers $1, \dots, k$ to the closing brackets (i.e., $\symclose_1 \mapsto k+1, \dots, \symclose_k \mapsto 2k$). %

Define $E := I_{2k\times 2k}$, and define $U \in \mathbb{R}^{2km\times 2k}$ as:
\begin{equation}
    \begin{pmatrix}
2\beta I_{k\times k} & -2\beta \cdot {\bf 1}_{k\times k}  \\
{\bf 0}_{(m-1)k\times k}             & -2\beta \cdot {\bf 1}_{(m-1)k\times k} \\
-2\beta \cdot {\bf 1}_{k\times k}            & {\bf 0}_{k\times k}  \\
-2\beta \cdot {\bf 1}_{(m-1)k\times k}            & {\bf 0}_{(m-1)k\times k}  \\
    \end{pmatrix}
\end{equation}
Here, we write $I_{k\times k}$ for the identity matrix, and ${\bf 1}_{k\times k}$ for a matrix filled entirely with ones.

Finally, we define
\begin{equation}
    b := -\beta \cdot {\bf 1}_{\mathbb{R}^{2km}}
\end{equation}

\paragraph{Proof of Correctness} 

To prove correctness of the construction, the first step is to show that the transition dynamics of the Simple RNN correctly simulates the dynamics of the stack when consuming one symbol.

Let $\phi(\symopen_i) = push$, $\phi(\symclose_i) = pop$.

\begin{lemma}[One-Step Lemma]
Assume that the hidden state $h_{t}$ encodes a stack $q$:
\begin{equation}
\mathcal{Q}(h_t) = q
\end{equation}
Let $w_{t+1}$ be the new input symbol, and assume that $w_{1:t+1}$ is a valid prefix of a word in \dyckkm.
Let $h_{t+1}$ be the next hidden state, after reading $w_{t+1}$.
Then
\begin{equation}
\mathcal{S}(\phi(w_{t+1}), \delta(q, w_{t+1})) = h_{t+1}
\end{equation}
\end{lemma}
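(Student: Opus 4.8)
The plan is to prove the One-Step Lemma by a direct, coordinatewise computation of the update $h_{t+1} = \sigma(W h_t + U x_{t+1} + b)$, splitting into the two cases $w_{t+1} = \symopen_i$ (a push) and $w_{t+1} = \symclose_i$ (a pop). The first thing I would record is a reduction that makes everything clean: because $W = \begin{pmatrix} M_{push} \\ M_{pop} \end{pmatrix}\begin{pmatrix} I & I\end{pmatrix}$, the product $W h_t$ depends on $h_t = [h_{push}, h_{pop}]$ only through the sum $h_{push} + h_{pop}$, and the hypothesis $\mathcal{Q}(h_t) = q$ means exactly $h_{push} + h_{pop} = R(q)$. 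Since $U x_{t+1} + b$ is independent of $h_t$ and $\sigma$ acts coordinatewise, the whole next state depends on $h_t$ only through $R(q)$, so I may compute with $s := R(q) = [e_{i_1}, \dots, e_{i_{m'}}, \mathbf{0}]$ and never track which half $h_t$ lived in. This is what lets the single-step claim be a computation rather than an induction.

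For each case I would then evaluate the two $km$-blocks of the pre-activation $W h_t + U x_{t+1} + b$ separately. Writing $W h_t = [M_{push} s,\ M_{pop} s]$, the shift matrices move $s$ by one stack slot, while the relevant column of $U$ (column $i$ or $k+i$ of the given block form) contributes $+2\beta e_i$ in the slot that receives the new symbol and floods the "wrong" half with $-2\beta$ everywhere. In the push case, validity of the prefix forces $m' < m$ (this is exactly where the depth bound of \dyckkm enters), so the slot being written is free and the shift does not drop a stack element; after adding $b = -\beta\,\mathbf{1}$ the designated block reads $\ge \beta$ precisely on the entries of $R(\delta(q,\symopen_i))$ and $\le -\beta$ elsewhere, while the off-block is everywhere $\le -\beta$. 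The pop case is the mirror image: validity forces the top of $q$ to be $\symopen_i$, so that $\symclose_i$ is a legal continuation, and symmetric bookkeeping deposits $R(\delta(q,\symclose_i))$ in the other block with the remaining block driven negative. In both cases this yields $h_{t+1} = \mathcal{S}(\phi(w_{t+1}), \delta(q, w_{t+1}))$ on the nose.

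The crux is the saturation step: I must show every coordinate of the pre-activation lands in $\{x : x \ge \beta\} \cup \{x : x \le -\beta\}$ and never in the open interval $(-\beta, \beta)$, so that $\sigma$ returns an exact $0/1$ pattern equal to $R(q')$. This works because every nonzero contribution is an integer multiple of $2\beta$ and the bias is exactly $-\beta$: a kept coordinate evaluates to $2\beta - \beta = \beta$ and a killed one to at most $2\beta - 2\beta - \beta = -\beta$, each meeting the threshold with no slack. I expect this tightness to be the main obstacle and the place to be most careful, since it demands checking that the $-2\beta$ flood from $U x_{t+1}$ dominates the $+2\beta$ the shift can deposit in the off-half for \emph{every} stack content, and that the disjoint (one-hot, non-overlapping) supports of the blocks of $R(q)$ guarantee no coordinate accumulates two positive contributions and thereby lands a supposedly-killed entry back above $-\beta$. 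A secondary but error-prone task is pinning down a single orientation convention (which block is the top of the stack, and hence the direction of the $M_{push}/M_{pop}$ shifts) and verifying that $R$, $U$, and the shift matrices are mutually consistent with it; once that convention is fixed, the two cases are symmetric and the remaining computation is routine.
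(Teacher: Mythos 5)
Your proposal is correct and follows essentially the same route as the paper's proof: reduce to $h' = h_{push} + h_{pop} = R(q)$, split into push/pop cases, compute the two $km$-blocks of the pre-activation $Wh_t + Ux_{t+1} + b$, and invoke saturation of the finite-precision sigmoid at exactly $\pm\beta$ (kept coordinates hit $\beta$, killed ones at most $-\beta$). The only cosmetic difference is that the paper packages your inline threshold arithmetic into a small helper lemma ($\sigma(\beta(2u-1))=u$, $\sigma(\beta(2u-3))=0$ for $u\in\{0,1\}$), and you are more explicit than the paper about where prefix validity ($m'<m$ for push, matching top for pop) and the disjointness of the shift and $U$ contributions are used.
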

Before showing this lemma, we note the following property of the activation function $\sigma$, under the finite precision assumption:
\begin{lemma}\label{lemma:sigma-binary}
If $u \in \{0,1\}$, then
\begin{align}
    \sigma\left(\beta \cdot (2u-1)\right) &= u \\
    \sigma\left(\beta \cdot (2u-3)\right) &= 0
\end{align}
\end{lemma}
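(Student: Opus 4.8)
The plan is to prove Lemma~\ref{lemma:sigma-binary} by a direct case analysis on the two possible values of $u$, using nothing but the piecewise definition of the finite-precision sigmoid $\sigma$ fixed in the preliminaries. Recall that $\sigma(x) = 1$ whenever $x \geq \beta$ and $\sigma(x) = 0$ whenever $x \leq -\beta$ (where $\beta > 0$). The key observation is that for $u \in \{0,1\}$ the arguments $\beta(2u-1)$ and $\beta(2u-3)$ never fall strictly inside the open interval $(-\beta, \beta)$, so the two saturating branches of the definition always apply and the continuous branch $\sigma(x)$ is never invoked. Once this is noted, each equation reduces to evaluating an integer multiple of $\beta$ against the thresholds $\pm\beta$.

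For the first identity, I would substitute the two values of $u$. When $u = 1$ the argument is $\beta(2\cdot 1 - 1) = \beta$, which satisfies $x \geq \beta$, so $\sigma(\beta) = 1 = u$. When $u = 0$ the argument is $\beta(2\cdot 0 - 1) = -\beta$, which satisfies $x \leq -\beta$, so $\sigma(-\beta) = 0 = u$. In both cases the output equals $u$, establishing $\sigma(\beta(2u-1)) = u$.

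For the second identity, the same substitution gives, for $u = 1$, the argument $\beta(2-3) = -\beta \leq -\beta$, hence $\sigma(-\beta) = 0$; and for $u = 0$, the argument $\beta(-3) = -3\beta \leq -\beta$ (using $\beta > 0$), hence $\sigma(-3\beta) = 0$. Thus the output is $0$ regardless of $u$, establishing $\sigma(\beta(2u-3)) = 0$.

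I expect no genuine obstacle: the statement is a one-line consequence of the definition, and the proof is exhausted by the four substitutions above. The only point requiring any care is that several of the arguments land \emph{exactly} on the threshold $\pm\beta$ rather than strictly beyond it; this is harmless precisely because the finite-precision definition uses the non-strict inequalities $x \geq \beta$ and $x \leq -\beta$ at the boundaries, so the saturated values are returned. This is exactly the role the lemma will play in the One-Step Lemma: it guarantees that the sigmoid collapses the Simple RNN preactivations to clean values in $\{0,1\}$, so the hidden state stays in the image $\mathcal{R}$ of $\mathcal{S}$ and keeps encoding a valid stack state across the transition.
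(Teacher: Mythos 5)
Your proof is correct and takes the same approach as the paper, whose entire proof of this lemma is the phrase ``by calculation''; you simply carry out that calculation explicitly, with the four substitutions and the (correct) observation that the non-strict inequalities $x \geq \beta$ and $x \leq -\beta$ in the finite-precision definition of $\sigma$ make the boundary cases $x = \pm\beta$ saturate as needed. Nothing is missing.
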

\begin{proof}
By calculation.
\end{proof}

\begin{proof}[Proof of the Lemma]
We can write $h_t = [h_{push}, h_{pop}]$.
Set $h' = h_{push} + h_{pop}$.
By construction of $\mathcal{S}$,
\begin{equation}
    h' = R(q)
\end{equation}
We will write $h' = [h'_1, \dots, h'_m]$, where $h'_i \in \mathbb{R}^{k}$.

At this point, we note that the only relevant property of the encodings $h'_i$ for this proof is that their entries are in $\{0,1\}$, not that they are one-hot vectors. This will make it possible to plug in a more efficient $O(\log k)$ encoding later in Section~\ref{sec:logk-srnn}.

With this, we can write
\begin{align}
Wh_{t} = 
\begin{pmatrix}
M_{push} h' \\
M_{pop} h'
\end{pmatrix}
=
\begin{pmatrix}
2\beta [{\bf 0}_{\in \mathbb{R}^{km}}, h'_1, \dots, h'_{m-1}] \\
    2\beta [h'_2, \dots, h'_m, {\bf 0}_{\in \mathbb{R}^{km}}]
    \end{pmatrix}
\end{align}

\paragraph{Case 1:} Assume $w_{t+1} = \symopen_i$.
We have
\begin{equation}
    Ux_t = [2 \beta e_i, {\bf 0}_{(m-1)k}, -2\beta {\bf 1}_{mk}]
\end{equation}
so that $W h_{t-1} + Ux_t+b$ equals
\begin{equation}
    \begin{pmatrix}
    2\beta [e_i, h'_1, \dots, h'_{m-1}] - \beta {\bf 1} \\
    2\beta [h'_2, \dots, h'_{m-1}, h'_m, {\bf 0}] -3\beta {\bf 1}
    \end{pmatrix}
\end{equation}
Then, by Lemma~\ref{lemma:sigma-binary}, $h_{t+1}$ equals
\begin{equation}
    \begin{pmatrix}
    [e_i, h'_1, \dots, h'_{m-1}] \\
    {\bf 0}_{km}
    \end{pmatrix}
\end{equation}
which is equal to $S(push, \delta(q, \symopen_i))$.

\paragraph{Case 2:} Assume $w_{t+1} = \symclose_i$.
We have
\begin{equation}
    Ux_t = [-2\beta {\bf 1}_{mk}, {\bf 0}_{mk}]
\end{equation}
so that $W h_{t-1} + Ux_t+b$ equals
\begin{equation}
    \begin{pmatrix}
    2\beta [{\bf 0}, h'_1, \dots, h'_{m-1}] - 3 \beta {\bf 1} \\
    2\beta [h'_2, \dots, h'_{m-1}, h'_m, {\bf 0}] -\beta {\bf 1}
    \end{pmatrix}
\end{equation}
Then, $h_{t+1}$ equals
\begin{equation}
    \begin{pmatrix}
    {\bf 0}_{km} \\
    [h'_2, \dots, h'_{m}, {\bf 0}] \\
    \end{pmatrix}
\end{equation}
which is equal to $S(push, \delta(q, \symclose_i))$.
\end{proof}

From the previous lemma, we can derive the following Stack Correspondence lemma, which asserts that the Simple RNN correctly reflects stack dynamics over an entire input string:
\begin{lemma}[Stack Correspondence] \label{appendix_lemma_simple_rnn_stack_correspondence}%
  For all strings $w_{1:T}$ in \dyckkm, for all $t=1,...,T$,
  let $q_t$ be the state of $D_{m,k}$ after consuming prefix $w_{1:t}$.
  Let $h_t$ be the RNN hidden state after consuming $w_{1:t}$.
  Then if $q_t \not \in \{[\symend]\}$, 
  \begin{align}\label{eq:srnn-first-claim}
    \mathcal{Q}(h_t) = q_t,
  \end{align}
\end{lemma}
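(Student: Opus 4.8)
The plan is to prove the Stack Correspondence lemma by induction on $t$, using the One-Step Lemma as the inductive engine. The base case is $t=0$: the initial hidden state is $h_0 = \mathbf{0}_{\in\mathbb{R}^{2km}}$, and the DFA starts in $q_0 = []$, the empty stack state. Since $R([]) = \mathbf{0}_{\in\mathbb{R}^{km}}$, we have $\mathcal{Q}(h_0) = R^{-1}(\mathbf{0} + \mathbf{0}) = [] = q_0$, so the claim holds at $t=0$.

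For the inductive step, I would assume $\mathcal{Q}(h_t) = q_t$ and that $q_t \notin \{r, [\symend]\}$, and then invoke the One-Step Lemma. Because $w_{1:T} \in$ \dyckkm, the prefix $w_{1:t+1}$ is a valid prefix of a word in the language, so the DFA never enters the reject state $r$ and $q_{t+1} = \delta(q_t, w_{t+1})$ is again a genuine stack state. The One-Step Lemma then yields $h_{t+1} = \mathcal{S}(\phi(w_{t+1}), \delta(q_t, w_{t+1})) = \mathcal{S}(\phi(w_{t+1}), q_{t+1})$. Applying $\mathcal{Q}$ and using the stated property that $\mathcal{Q}(\mathcal{S}(\sigma, q)) = q$ for any stack operation $\sigma$, I conclude $\mathcal{Q}(h_{t+1}) = q_{t+1}$, closing the induction.

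The one subtlety to handle carefully is the boundary between stack states and the terminal symbols. The lemma explicitly excludes the case $q_t = [\symend]$, and I would note that this is exactly the accept state reached only by reading $\symend$, which must be the final symbol; hence whenever we are at a step $t < T$ with a further symbol $w_{t+1}$ to read, $q_t$ is a bona fide stack state and the One-Step Lemma applies verbatim. I would verify that the One-Step Lemma's hypotheses — that $h_t$ encodes a stack $q$ via $\mathcal{Q}(h_t) = q$ and that $w_{1:t+1}$ is a valid prefix — are precisely what the inductive hypothesis and the membership $w_{1:T} \in$ \dyckkm supply.

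The main obstacle is not the induction itself, which is routine, but ensuring the interface with the One-Step Lemma is airtight: specifically, confirming that $h_t \in \mathcal{R} = \mathrm{Im}(\mathcal{S})$ at every step so that $\mathcal{Q}$ and $R^{-1}$ are well-defined (recall $R^{-1}$ is only defined on the image of $R$). This requires observing that the One-Step Lemma outputs a hidden state of the form $\mathcal{S}(\cdot,\cdot) \in \mathcal{R}$, so the inductive hypothesis that $h_t$ lies in $\mathcal{R}$ is preserved; the base case $h_0 = \mathcal{S}(\textit{push}, [])$ lies in $\mathcal{R}$ as well. With this invariant maintained, every application of $\mathcal{Q}$ is legitimate and the correspondence holds for all $t = 1, \dots, T$ with $q_t \neq [\symend]$.
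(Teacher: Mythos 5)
Your proof is correct and follows essentially the same route as the paper's: induction on $t$ with the zero initial hidden state as base case, the One-Step Lemma driving the inductive step, and the identity $\mathcal{Q}(\mathcal{S}(\sigma,q))=q$ closing the argument. Your extra care about the well-definedness of $\mathcal{Q}$ on $\mathrm{Im}(\mathcal{S})$ and the $[\symend]$ boundary makes explicit what the paper leaves implicit, but does not change the approach.
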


\begin{proof}
The claim is shown by induction over $t$, applying the previous lemma in each step.
We will show the claim for all $t= 0, 1, \dots, T$, setting $h_0$ to be the  zero vector ${\bf 0}_{2km}$.

As $\mathcal{Q}(h_0) = q_0$, this proves the claim for $t=0$.
To prove the inductive step, we assume $\mathcal{Q}(h_t) = q_t$ has already been shown.

Then, by the preceding One-Step Lemma,
\begin{equation}
\mathcal{S}(\phi(w_{t+1}), \delta(q_t, w_{t+1})) = h_{t+1}
\end{equation}
Noting $q_{t+1} = \delta(q_t, w_{t+1})$, we obtain
\begin{equation}\label{eq:srnn-claim-s}
\mathcal{S}(\phi(w_{t+1}), q_{t+1}) = h_{t+1}
\end{equation}
As noted in the definition of $\mathcal{Q}$ above (\ref{def:srnn-q}), this entails
\begin{equation}
\mathcal{Q}(h_{t+1}) = q_{t+1}
\end{equation}
concluding the inductive step.

\end{proof}

\section{Proving LSTM stack correspondence in $mk$ units} \label{appendix_sec_lstm}

In this section, we prove our lemma describing the stack construction implemented by an LSTM.
To do so, we first define the LSTM:
\defnlstmgenerator* \label{appendix_def_lstm}

First, we provide notation for describing the structure of the memory cell ($c_t$) that $f_\theta$ will maintain.
Next, we proceed to describe precisely, but without referring to the LSTM equations, how this memory changes over time.
We then describe how these high-level dynamics are implemented by the LSTM equations, assuming that we're able to set the values of the intermediate values (gates and new cell candidate) as desired.
We then explicitly construct LSTM weight matrices that provide the desired intermediate values.
This completes the proof of the memory dynamics, which we formalize in a \textit{stack correspondence lemma} once we've introduced the notation.

\subsection{Description of stack-structured memory} 
Our LSTM construction operates by constructing, in its cell state, a representation of the list that defines the DFA stack state.
To make this idea formal, we define a mapping from $\mathbb{R}^d$, the space of the LSTM cell state, to $Q$, the DFA state space.

To do so, we need some language for discussing the LSTM cell states.
For a cell state $c_t\in \mathbb{R}^{mk}$, we define a list of $m$ \textit{stack slots}, $s_{t,1},...,s_{t,m}$, where $s_{t,j} \in \mathbb{R}^k$ is given by dimensions 
\begin{align}
  s_{t,j} = c_{t}[k(j-1)+1:kj+1]
\end{align}

Intuitively, each stack slot will correspond to one item in the list defining a stack state in the DFA $D_{m,k}$, or be empty.
We formalize this by defining a function from stack slots $s_{t,j}\in\mathbb{R}^d$ to elements:
\begin{align}
  \psi(s_{t,j}) = \begin{cases}
    \symopen_i & s_{t,j} = e_i\\
    \varnothing & s_{t,j} = \mathbf{0}
  \end{cases}
\end{align}
where $e_i\in\mathbb{R}^k$ is the $i^{th}$ standard basis vector (one-hot vector) of $\mathbb{R}^k$.
We'll also use the inverse map, $\psi^{-1}$, to map from elements to vectors in $\mathbb{R}^{k}$.

Applying this function to each stack slot allows us to map from $s_{t,1},\dots,s_{t,m}$ to stack states:
\begin{align}\label{eqn_stack_map}
  \mathcal{Q}(s_{t,1},\dots,s_{t,m}) = [\psi(s_{t,1}),\dots,\psi(s_{t,m'})]
\end{align}
where $m'\leq m$ is the maximal integer such that $\psi(s_{t,m'}) \not = \varnothing$.
Intuitively, this means we filter out any empty stack slots at the end of the sequence (those slots $s_{t,j}$ where $j>m'$.)
We'll also make use of the inverse, $\mathcal{Q}^{-1}$, mapping DFA stack states to stack slot lists,
\begin{align*}
  \mathcal{Q}^{-1}&([\symopen_{i_1}, \dots, \symopen_{i_{m'}}]) \\
 &= \mathcal{Q}^{-1}([\symopen_{i_1}, \dots, \symopen_{i_{m'}}, \varnothing,\dots,\varnothing]) \\
  &=\psi^{-1}(\symopen_{i_1}),\dots,\psi^{-1}(\symopen_{i_{m'}}),\mathbf{0}, \dots, \mathbf{0}\\
  &=e_{i_1},\dots,e_{i_{m'}},\mathbf{0}, \dots, \mathbf{0}
\end{align*}
where slots $s_{m'+1},\dots,s_{m}$ are the zero vector $\mathbf{0}$ because only the first $m'$ slots encode a symbol $\symopen_i$, and $\psi(\mathbf{0})=\varnothing$.

\paragraph{On the $\psi$ function}
These one-hot encodings ($e_i$) of symbols ($\symopen_i$) allow for clear exposition, but are not fundamental to the construction; in (\S~\ref{appendix_sec_efficient_generating}), we replace these $k$-dimensional one-hot encodings given by $\psi$ with $O(\log k)$-dimensional encodings.
Throughout our description of the stack construction, we'll explicitly rely on the following property, so we can easily replace this particular $\psi$ in (\S~\ref{appendix_sec_efficient_generating}) by providing another $\psi'$ with the same property.
\begin{defn}[encoding-validity property] \label{appendix_def_encoding_validity}
  Let $\psi'$ be a function $\psi': \{\symopen_i\}_{i\in[k]} \rightarrow \mathbb{R}^n$ mapping open brackets to vectors in $\mathbb{R}^n$.
  Then $\psi'$ obeys the encoding-validity property if for all $i=1,\dots,k$,
  \begin{align}
    \sum_{\ell=1}^{n} \psi^{-1}(\symopen_i)_{\ell} = 1
  \end{align}
  that is, the sum of all dimensions of the representation is equal to $1$, and
  \begin{align}
    \forall_{i\in[k]}\psi^{-1}(\symopen_i) \in \{0,1\}^n
  \end{align}
  that is, the representation takes on values only in $0$ and $1$, and
  \begin{align}
    \psi(\mathbf{0}) = \varnothing,
  \end{align}
  that is, the empty stack slot is encoded by the zero vector, and
  \begin{align}
    &\forall_{i,j\in[k]}, i \not = j \implies \psi^{-1}(\symopen_i) \not = \psi^{-1}(\symopen_j)\\
    &\forall_{i\in[k]}, \psi^{-1}(\symopen_i) \not = \mathbf{0}
  \end{align}
    that is, encodings of symbols $\symopen_i$ are unique, and none are equal to to the encoding of the empty symbol $\varnothing$.
\end{defn}

The encoding we've so far provided, $\psi^{-1}(e_i) = \symopen_i$, obeys these properties: the sum of a single one-hot vector is $1$, no two such vectors are the same, one-hot vectors only take on values in $\{0,1\}$, and we let $\psi(\mathbf{0})=\varnothing$.

\subsection{Description of stack state dynamics} 

Before we describe the memory dynamics, we introduce a useful property of the stack slots.
\begin{defn}[$j$-top]
  A cell state $c_t$ composed of stack slots $s_{t,1},\dots,s_{t,m}$ is $j$-top if there exists $i\in\{1,\dots,k\}$ such that $s_{t,j}=\psi^{-1}(\symopen_i)$, and for all  $j'\in\{j+1,\dots,m\}$, $s_{t,j'} = \mathbf{0}$.
\end{defn}
Intuitively, we'll enforce the constraint that a cell state that is $j$-top \textbf{encodes the element $\symopen_{m'}$ at the top of the stack $[\symopen_i,\dots,\symopen_{m'}]$ in stack slot $j=m'$.}

We're now ready to describe the memory dynamics of $f_\theta$.
The memory $c_t$ is initialized to $\mathbf{0}$. So for all $j\in\{1,\dots,m\}$,
\begin{align}
  &s_{0,j} = \mathbf{0}
\end{align}
The recurrent equation of stack slots is then
\begin{align}\label{eqn_stack_recurrence}
  &s_{j,t} = \begin{cases}
    \mathbf{0} & c_{t-1} \text{ is } j\text{-top, and } w_t = \symclose_i\\
    \psi^{-1}(\symopen_i) & j=1, c_{t-1}=\mathbf{0}, \text{ and } w_t=\symopen_i\\
    \psi^{-1}(\symopen_i) & c_{t-1} \text{ is } (j-1)\text{-top and } w_t = \symopen_i\\
    s_{t-1,j} & o.w.
  \end{cases}
\end{align}
Intuitively, the first case of Equation~\ref{eqn_stack_recurrence} implements pop, the second case implements push (if pushing to an otherwise empty stack) and the third case implement push (if pushing to a non-empty stack).
The fourth case specifies that slots that are not pushed to or popped are maintained from timestep $t-1$ to timestep $t$.

So far, our definitions of the LSTM memory have concerned the cell state, $c_t$.
However, the values of the gates $f_t, i_t, o_t$, as well as the new cell candidate $\tilde{c}_t$, are functions not of the previous cell state but the previous hidden state, $h_{t-1} = o_{t-1} \odot \text{tanh}(c_{t-1})$.
The dynamics of $h_t$ under $f_\theta$ we specify as follows, using the same notation $h_{t,j}$ to refer to the dimensions of $h_t$ that correspond to slack slot $s_{t,j}$ of $c_t$:
\begin{align}\label{eqn_stack_top}
  h_{t,j} = \begin{cases}
  \text{tanh}(\psi^{-1}(\symopen_i)) & c_t \text{ is $j$-top}\\
  &\text{and } s_{t,j} = \psi^{-1}(\symopen_i)\\
    \mathbf{0} \in \mathbb{R}^k & o.w.,
  \end{cases}
\end{align}
Intuitively, this means \textbf{only the top of the stack is stored in the hidden state}, and it's stored in whichever units correspond to the $j$-top slot in $c_t$.
Since $\psi^{-1}(\symopen_i)$ takes on values in $\{0,1\}$, $\text{tanh}(\psi^{-1}(\symopen_i))$ takes on values in $\{0, \text{tanh}(1)\}$.

We're now ready to formalize the first part of our proof in a lemma that encompasses how $f_\theta$ structures and manipulates its memory.
\begin{lemma}[stack correspondence] \label{appendix_lemma_lstm_stack_correspondence} %
  For all strings $w_{1:T}$ in \dyckkm, for all $t=1,...,T$,
  let $q_t$ be the state of $D_{m,k}$ after consuming prefix $w_{1:t}$.
  Let $c_t$ be the hidden state of the LSTM after consuming $w_{1:t}$.
  Then if $q_t=\not \in \{[\symend]\}$, 
  \begin{align} \label{eqn_lemma1_eqn1}
    \mathcal{Q}(s_{t,1},\dots,s_{t,m}) = q_t,
  \end{align}
  and letting $q_t = [\symopen_{i_1},\dots,\symopen_{i_{m'}}]$ for $m'\leq m$ without loss of generality, 
  \begin{align}\label{eqn_lemma1_eqn2}
  h_{t,j} = \begin{cases}
    \text{tanh}(\psi^{-1}(\symopen_{i_{m'}})) & j=m'\\
      \mathbf{0} & o.w.
  \end{cases}
  \end{align}
  where if $q_t=[]$, then $m'=0$ and $h_{t,j}=\mathbf{0}$ for all $j$.
\end{lemma}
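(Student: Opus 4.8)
**The plan is to prove the LSTM stack correspondence lemma by induction on $t$, mirroring the structure of the Simple RNN proof.** The lemma has two conjuncts: that the cell-state slots $(s_{t,1},\dots,s_{t,m})$ decode (via $\mathcal{Q}$) to the DFA state $q_t$, and that the hidden state $h_t$ exposes \emph{only} the top stack slot (in $\text{tanh}$-form), with everything else zero. I would first establish a One-Step Lemma analogous to the Simple RNN's: assuming $c_{t-1}$ correctly encodes $q_{t-1}$ (and $h_{t-1}$ exposes its top slot as per Equation~\ref{eqn_stack_top}), reading a valid next symbol $w_t$ yields a $c_t$ satisfying the recurrence of Equation~\ref{eqn_stack_recurrence} and an $h_t$ satisfying Equation~\ref{eqn_stack_top}. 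The main work is verifying that the informally-described \texttt{push}, \texttt{pop}, and \texttt{output} gate mechanics are actually realizable by the LSTM equations with concrete weight matrices $W_f,U_f,b_f$, etc.\ (with $W_{\tilde c}=\mathbf{0}$), analogous to how the Simple RNN section pinned down $W,U,b$ explicitly.

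The key computation I would carry out is, for each of the two cases $w_t=\symopen_i$ and $w_t=\symclose_i$, to show the gates take their intended saturated values. In the \texttt{push} case, the input gate $i_t$ must saturate to $1$ exactly on the slot immediately after the current top slot and $0$ elsewhere; this requires reading $h_{t-1}$ (which by the inductive hypothesis marks the $j$-top slot) to \emph{locate} that slot, so the weight $W_i$ must implement an off-diagonal shift that propagates the ``top marker'' one slot forward, combined with $U_i x_t$ to activate only on opens. Meanwhile $f_t\equiv 1$ copies the old stack forward and $\tilde c_t=\text{tanh}(U_{\tilde c}x_t+b_{\tilde c})$ writes $\psi^{-1}(\symopen_i)$ everywhere (gated down to only the free slot by $i_t$). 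In the \texttt{pop} case, $f_t$ must saturate to $1$ on all slots strictly below the top and to $0$ on the top slot and above, erasing exactly the top element; this again uses $h_{t-1}$'s top-marker to set the cutoff. Throughout, I would invoke the saturation property of $\sigma$ and $\text{tanh}$ (the $\beta$-thresholding from the finite-precision setting) to argue the gates land cleanly in $\{0,1\}$, using an analogue of Lemma~\ref{lemma:sigma-binary}.

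Having proven the One-Step Lemma, the Stack Correspondence lemma follows by a clean induction: the base case $t=0$ holds since $c_0=\mathbf{0}$ decodes to $q_0=[]$ and $h_0=\mathbf{0}$; the inductive step applies the One-Step Lemma once, using $q_{t+1}=\delta(q_t,w_{t+1})$ and the fact that the recurrence of Equation~\ref{eqn_stack_recurrence} faithfully tracks the DFA transition function $\delta$ (push on $\symopen_i$, pop on $\symclose_i$). I would verify each branch of $\delta$ (empty/partial/full stack states) against the corresponding branch of Equation~\ref{eqn_stack_recurrence}. One subtlety is that I should only rely on the encoding property that each $\psi^{-1}(\symopen_i)\in\{0,1\}^k$ sums to $1$ and is distinct from $\mathbf{0}$ and from the others---i.e., the encoding-validity property of Definition~\ref{appendix_def_encoding_validity}---rather than on one-hotness per se, so that the same argument transfers verbatim to the $O(\log k)$ encoding later.

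\textbf{The main obstacle} I anticipate is formalizing the ``find the top slot'' operation inside the gates purely from the previous \emph{hidden} state $h_{t-1}$. Because $h_{t-1}$ exposes only the single top slot (by the second conjunct of the very lemma being proven), the gate computations must detect ``the slot right after the marked one'' or ``everything strictly below the marked one'' using a fixed linear map plus saturation---and getting the off-diagonal shift structure and the threshold biases right so that the forget/input/output gates cut at precisely the correct slot boundary is the delicate bookkeeping. This is exactly why the two conjuncts of the lemma must be carried \emph{together} through the induction: the hidden-state invariant (Equation~\ref{eqn_lemma1_eqn2}) is what makes the cell-state transition (Equation~\ref{eqn_lemma1_eqn1}) computable in the next step, so neither can be proven in isolation.
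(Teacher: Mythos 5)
Your proposal is correct and follows essentially the same route as the paper's proof: an induction carrying both invariants (cell-state/DFA correspondence and the top-slot-only hidden state) simultaneously, with the inductive step decomposed into constructing explicit gate parameters that locate the top slot from $h_{t-1}$ via shifted/thresholded linear maps, showing those saturated gate values realize the slot dynamics of Equation~\ref{eqn_stack_recurrence} and Equation~\ref{eqn_stack_top}, and relying only on the encoding-validity property of Definition~\ref{appendix_def_encoding_validity} so the argument transfers to the $O(\log k)$ encoding. The paper packages the inductive step as three stacked reductions rather than a named One-Step Lemma, but the content, the case analysis, and the key observation that the two conjuncts cannot be separated are identical to yours.
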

Note again that the value of $\psi^{-1}$ is not specified by the lemma; whereas so far we've used $\psi^{-1}(\symopen_i)=e_i$, a one-hot encoding, in (\S~\ref{appendix_sec_efficient_generating}) we'll replace $\psi^{-1}$ while maintaining the lemma.

Intuitively, Equation~\ref{eqn_lemma1_eqn1} states that $f_\theta$ keeps an exact representation of the stack of unclosed open brackets so far, a statement solely concerning the cell state $c_t$.
However, all intermediate values in the LSTM equations are functions of the hidden state $h_t$, not the cell state directly; Equation~\ref{eqn_lemma1_eqn2} states how the top of the stack is represented in the hidden state, $h_t$.
We state these as one lemma since it is convenient to prove them by induction together.

\subsection{Proof of the stack correspondence lemma assuming stack slot dynamics}
We first prove Lemma~\ref{appendix_lemma_lstm_stack_correspondence} under the assumption that the stack slot dynamics given in Equation~\ref{eqn_stack_recurrence} hold.
We proceed by induction on the prefix length.

When the prefix length is $0$, the DFA is in state $[]$, and $s_{0,j}=\mathbf{0}$ for all $j$, so $\mathcal{Q}(s_{0,1},\dots,s_{0,m})=[]$, as required.
Further, we have $h_0=\mathbf{0}$, as required.
This completes the base case.

Assume for the sake of induction that Lemma~\ref{appendix_lemma_lstm_stack_correspondence} holds for strings up to length $t$.
Now consider a prefix $w_{1:t+1}$. 
From it, we can take the prefix $w_{1:t}$.
Running $f_\theta$, we have $s_{t,1},\dots,s_{t,m}$.
By the induction hypothesis, we have that $\mathcal{Q}(s_{t,1},\dots,s_{t,m}) = q_t = [\symopen_{i_1},\dots,\symopen_{i_{m'}}]$, (Equation~\ref{eqn_lemma1_eqn1}) as well as that
\begin{align*}
  h_{t,j} = \begin{cases}
    \text{tanh}(\psi^{-1}(\symopen_{i_{m'}})) & j=m'\\
      \mathbf{0} & o.w.
  \end{cases}
\end{align*}
by Equation~\ref{eqn_lemma1_eqn2}.
Now, the symbol $w_{t+1}$ can be one of $2k+1$ symbols: any of the $k$ open brackets $\symopen_i$, the $k$ close brackets $\symclose_i$, or $\symend$.
The proof now proceeds by each of these three cases.
\paragraph{Case $w_{t+1} = \symopen_i$:}
Without loss of generality, we let $q_t = [\symopen_{i_1},\dots,\symopen_{i_{m'}}]$ for $m'<m$. 
Strict inequality is guaranteed since if $m'=m$, then $\delta(q_t,\symopen_i)=r$, meaning $w_{1:t+1}$ cannot be a prefix of any string in \dyckkm, contradicting a premise of the lemma.
Through the inverse of the stack mapping Equation~\ref{eqn_stack_map}, we have 
\begin{align*}
  \mathcal{Q}^{-1}(q_t) &= s_{t,1},\dots,s_{t,m}\\
  &= \psi^{-1}(\symopen_{i_1}),\dots,\psi^{-1}(\symopen_{i_{m'}}),\mathbf{0}_{m'+1},\dots,\mathbf{0}_{m}\\
  &= e_{i_1},\dots,e_{i_{m'}},\mathbf{0}_{m'+1},\dots,\mathbf{0}_{m},
\end{align*}
the stack slots of $f_\theta$ at timestep $t$.
Since $s_{t,m'}=\psi^{-1}(\symopen_{i_j})$ and $s_{t,m'+1}=\mathbf{0}$, we have that $c_t$ is $m'$-top.
Thus, since $w_{t+1}=\symopen_i$, the second or third condition in Equation~\ref{eqn_stack_recurrence} (depending on whether $m'=0$) dictate that $s_{t+1,m'+1} = \psi^{-1}(\symopen_i)$ (where the $i$ is because $w_{t+1} = \symopen_i$).
All other stack slots fall under the condition $s_{t+1,j} = s_{t,j}$.

With this, we can reason about the DFA state encoded by $c_{t+1}$ at timestep $w_{1:t+1}$:
\begin{align*}
  \mathcal{Q}&(s_{t+1,1},\dots,s_{t+1,m}) \\
  &= \mathcal{Q}(\psi^{-1}(\symopen_{i_1}),\dots,\psi^{-1}(\symopen_{i_{m'}}),\psi^{-1}(\symopen_{i}),\mathbf{0},\dots,\mathbf{0})\\
  &= [\symopen_{i_1},\dots,\symopen_{i_{m'}},\symopen_i]\\
  &= \delta(q_t, \symopen_i)
\end{align*}
which is the DFA's state at timestep $t+1$, as required.

Finally, we reason about $h_{t+1,j}$.
We've just shown that  that $c_{t+1}$ is $(m'+1)$-top, and that $s_{t+1,m'+1}=\psi^{-1}(\symopen_i)$.
By Equation~\ref{eqn_stack_top}, we have that $h_{t+1,m'+1}= \text{tanh}(\psi^{-1}(\symopen_i))$, and $h_{t+1,j}=\mathbf{0}$ for all $j\not= m'+1$ as required, completing this case. %
\paragraph{Case $w_{t+1} = \symclose_i$:} 
We have $q_t=[\symopen_{i_1},\dots,\symopen_{i_{m'}}]$ for $0<m'$.
this inequality is strict since if $m'=0$, then $q_t=[]$, and $\delta(q_t,\symclose_i)=r$, meaning $w_{1:t+1}$ is not a prefix of any string in $D_{m,k}$.
Thus, we have that $c_t$ is $m'$-top.
Since $w_{t+1} = \symclose_i$, we have that $s_{t+1,m'} = \mathbf{0}$, and $s_{t+1,j}=s_{t,j}$ for all $j\not=m'$.
Thus, we have
\begin{align*}
  s_{t+1,1},\dots,s_{t+1,m} &= \psi^{-1}(\symopen_{i_1}),\dots,\psi^{-1}(\symopen_{i_{m'-1}})
  \\&,\mathbf{0},\dots,\mathbf{0}
\end{align*}
and thus the DFA state corresponding to the stack slots is:
\begin{align*}
\mathcal{Q}(s_{t+1,1}&,\dots,s_{t+1,m})\\
&=[\psi^{-1}(s_{t+1,1}),\dots,\psi^{-1}(s_{t+1,m'-1})\\
&\ \ \ \ \ \ , \mathbf{0},\dots,\mathbf{0}]\\
&=[\symopen_{i_1},\dots,\symopen_{m'-1}]
\end{align*}
Which is $\delta(q_t, \symclose_i)$, as required.
By the same reasoning as for case $w_{t+1}=\symopen_i$, since $c_{t+1}$ is $(m'-1)$-top and that $s_{t+1,m'-1}=\psi^{-1}(\symopen_{m'-1})$, we have that $h_{t+1,m'-1} = \text{tanh}(\psi^{-1}(\symopen_i))$ and $h_{t+1,j}=\mathbf{0}$ for all $j\not=m'-1$, as required.
This completes the case.

\paragraph{Case $w_{t+1} = \symend$:} 
If $w_{t+1}=\symend$, then by the definition of $\delta$, $\delta(q_t, \symend)\in\{[\symend], r\}$; hence the premises of Lemma~\ref{appendix_lemma_lstm_stack_correspondence} don't hold, so the lemma vacuously holds.

\paragraph{Summary}
In this section we've proved the stack correspondence lemma for our LSTM construction, assuming we can implement the dynamics of the model as specified in Equations~\ref{eqn_stack_recurrence}~\ref{eqn_stack_top}; we have yet to show that the LSTM update can implement the equations as promised.

\subsection{Implementation of stack state dynamics in LSTM equations assuming intermediate values}
In this subsection, we define how the dynamics defined in Equations~\ref{eqn_stack_recurrence},~\ref{eqn_stack_top} are implemented in the LSTM equations, referring to intermediate gate values, assuming such values can be reached with some setting of LSTM parameters.

As we discussed stack slots $s_{t,1},\dots,s_{t,m}$ of cell state $c_t$, where each $s_{t,j}\in\mathbb{R}^k$, we similarly discuss the gates in terms of slots, where e.g., $f_{t,j}\in[0,1]^k$ are the $k$ elementwise gates that apply to the values $s_{t,j}$ (and similarly for $i_{t,j}$, $o_{t,j}$, and the new cell candidate values $\tilde{c}_{t,j}\in[-1,1]^k$.)
With this, we re-write Equation~\ref{eqn_stack_recurrence} using the definition of $c_t$:
\begin{align} \label{eqn_cell_dynamic}
  &s_{t,j} = f_{t,j} \odot s_{t-1,j} + i_{t,j}\odot \psi^{-1}(w_t),
\end{align}
as well as Equation~\ref{eqn_stack_top}:
\begin{align}\label{eqn_hidden_dynamic}
  h_{t,j} = o_{t,j} \odot \text{tanh}(s_{t,j})
\end{align}
How  these fulfill Equations~\ref{eqn_stack_recurrence},~\ref{eqn_stack_top}?
We can set the gate values to do so as follows.
First, the forget gate is used to detect the condition under which slot $j$ should be erased:
\begin{align} \label{eqn_f_gate}
  f_{t,j} = \begin{cases}
    \mathbf{0}\in\mathbb{R}^k & c_{t-1} \text{ is $j$-top and } w_t = \symclose_i\\
    \mathbf{1}\in\mathbb{R}^k & o.w.
  \end{cases}
\end{align}
Second, the new cell candidate expression is used to give the \textit{option} of writing a new open bracket, $\psi^{-1}(w_t)$ to \textit{any} stack slot:
\begin{align} \label{eqn_cell_candidate}
  \tilde{c}_{t,j} = \begin{cases}
    \psi^{-1}(\symopen_i) & w_t = \symopen_i\\
    \mathbf{0}\in\mathbb{R}^k & o.w.
  \end{cases}
\end{align}
Third, the input gate determines which stack slot is the first non-empty slot in the sequence, and only allows the information of the new open bracket to be written to that slot:
\begin{align} \label{eqn_i_gate}
  i_{t,j} = \begin{cases}
    \mathbf{1} & c_{t-1} =\mathbf{0},j=1, \text{ and } w_t =\symopen_i\\
    \mathbf{1} & c_{t-1} \text{ is $(j-1)$-top, and } w_t =\symopen_i\\
    \mathbf{0}\in\mathbb{R}^k & o.w.
  \end{cases}
\end{align}
Finally, the output gate is set to identify the top of the stack and not let any other non-$\mathbf{0}$ (i.e., all stack slots below the top) through:
\begin{align} \label{eqn_o_gate}
  o_{t,j} = \begin{cases}
    \mathbf{0} & c_t \text{ is $j'$-top for $j'> j$}\\
    \mathbf{1}\in\mathbb{R}^k & o.w.
  \end{cases}
\end{align}
Many of these conditions refer to values, like $c_t$ and $c_{t-1}$, not available when computing the gates (as only $h_{t-1}$ and $x_t$  are available); it is convenient to refer to these conditions and later show how they are implemented with the available values.

\subsection{Proof of stack state dynamics given gate and new cell candidate values}
In this section, we prove that Equations~\ref{eqn_stack_recurrence},~\ref{eqn_stack_top} are implemented by the LSTM $f_\theta$ given the gate and new cell candidate values defined in the previous subsection.

We start with Equation~\ref{eqn_stack_recurrence}, the definition of the stack slot dynamics, proceeding by each of the four cases for defining $s_{t,j}$ in Equation~\ref{eqn_stack_recurrence}.

\paragraph{Case 1 (popping $s_{t-1,j}$)} In this case, the condition is that $c_{t-1}$ is $j$-top, and $w_t = \symclose_i$.
By Equation~\ref{eqn_f_gate}, we have the forget gate value $f_{j,t}=\mathbf{0}$.
By Equation~\ref{eqn_i_gate}, we have the input gate value $i_{j,t} = \mathbf{0}$.
Finally, by Equation~\ref{eqn_cell_candidate}, we have $\tilde{c}_{t,j} = \mathbf{0}$.
Plugging these values into the LSTM cell expression in Equation~\ref{eqn_cell_dynamic}, we get
\begin{align*}
  s_{t,j} &= \mathbf{0} \cdot s_{t-1,j} + \mathbf{0} \cdot \mathbf{0}\\
  &= \mathbf{0},
\end{align*}
as required.

\paragraph{Case 2 (pushing to $s_{t,1}$)} In this case, the condition is that $c_{t-1}=\mathbf{0}$, that is, the stack is empty, $j=1$, that is, we'll write to the first stack slot, and $w_t=\symopen_i$.
By Equation~\ref{eqn_i_gate}, we have $i_{t,1}=\mathbf{1}$.
By Equation~\ref{eqn_cell_candidate}, we have $\tilde{c}_{t,j} = \psi^{-1}(\symopen_i)$.
Further, we have $s_{t-1,j} = \mathbf{0}$ for all $j$, since $c_{t-1}=\mathbf{0}$.
Plugging these valeus into the LSTM cell expression in Equation~\ref{eqn_cell_dynamic}, we get:
\begin{align*}
  s_{t,1} &= f_{1,j} \cdot \mathbf{0} + \mathbf{1} \cdot \psi^{-1}(\symopen_i)\\
  &= \psi^{-1}(\symopen_i),
\end{align*}
as required.

\paragraph{Case 3 (pushing to $s_{t,j>1}$)} In this case, the condition is that $c_{t-1}$ is $(j-1)$-top or $c_{t-1}=\mathbf{0}$, and $w_t=\symopen_i$.
By Equation~\ref{eqn_f_gate}, we have the forget gate value $f_{j,t}=\mathbf{1}$, since Case 1 does not hold.
By Equation~\ref{eqn_i_gate}, we have the input gate value $i_{j,t} = \mathbf{1}$.
Finally, by Equation~\ref{eqn_cell_candidate}, we have $\tilde{c}_{t,j} = \psi^{-1}(\symopen_i)$.
Plugging these values into the LSTM cell expression in Equation~\ref{eqn_cell_dynamic}, we get
\begin{align*}
  s_{t,j} &= \mathbf{1} \cdot s_{t-1,j} + \mathbf{1} \cdot \psi^{-1}(\symopen_i)\\
  &= s_{t-1,j} + \psi^{-1}(\symopen_i)\\
  &= \psi^{-1}(\symopen_i),
\end{align*}
where the last equality hold because of the condition that $c_{t-1}$ is $(j-1)$-top, which implies $s_{t-1,j} = \mathbf{0}$.
\paragraph{Case 3 (maintaining $s_{t-1,j}$)} This case catches all conditions under which Cases 1 and 2 do not hold.
By Equation~\ref{eqn_f_gate}, we have the forget gate value $f_{j,t}=\mathbf{1}$, since Case 1 does not hold.
By Equation~\ref{eqn_i_gate}, we have the input gate value $i_{j,t} = \mathbf{0}$, since Case 2 does not hold.
Finally, by Equation~\ref{eqn_cell_candidate}, we have $\tilde{c}_{t,j} = \mathbf{0}$.
Plugging these values into the LSTM cell expression in Equation~\ref{eqn_cell_dynamic}, we get
\begin{align*}
  s_{t,j} &= \mathbf{1} \cdot s_{t-1,j} + \mathbf{0} \cdot \mathbf{0}\\
  &= s_{t-1,j},
\end{align*}
as required.

\paragraph{Proving Equation~\ref{eqn_stack_top} is implemented}
Next, we prove that Equation~\ref{eqn_stack_top} is implemented by the LSTM $f_\theta$, given that Equation~\ref{eqn_stack_recurrence} is.
First, we must show $h_{t,j}$ is equal to $\text{tanh}(e_i)$ if $c_t$ is $j$-top. If $c_t$ is $j$-top, it must be that $s_{t,j}=e_i$ for some $i$.
Since $c_t$ is $j$-top, we have $o_{t,j}=\mathbf{1}$, and $s_{t,j}=e_{i}$.
Plugging these values into the hidden state expression in Equation~\ref{eqn_hidden_dynamic}, we get:
\begin{align*}
  h_{t,j} &= o_{t,j}\odot \text{tanh}(s_{t,j})\\
  &= \mathbf{1}\odot \text{tanh}(e_{i})\\
  &= \text{tanh}(e_i),
\end{align*}
as required.

If that condition does not hold, we must have that $c_{t}$ not $j$-top.
thus, $c_{t}$ is either $j$-top for some $j'<j$ or $j'>j$, or $c_{t}=\mathbf{0}$.

If $c_{t}$ is $j'$-top for $j'<j$, then we have that $s_{t,j} = \mathbf{0}$, by the definition of $j$-top.
In this case, we have
\begin{align*}
  h_{t,j} &= o_{t,j}\odot\text{tanh}(s_{t,j})\\
  &= \mathbf{1} \odot \text{tanh}(\mathbf{0})\\
  &= \mathbf{0},
\end{align*}
as required.

If $c_{t}$ is $j'$-top for $j'>j$, then we have that $o_{t,j} = \mathbf{0}$, by Equation~\ref{eqn_o_gate}.
In this case, we have
\begin{align*}
  h_{t,j} &= o_{t,j}\odot \text{tanh}(s_{t,j})\\
  &= \mathbf{0} \odot s_{t,j}\\
  &= \mathbf{0},
\end{align*}
as required.

Finally, if $c_t = \mathbf{0}$, meaning it is not $j$-top for any $j$, then
\begin{align*}
  h_{t,j} &= o_{t,j}\odot \text{tanh}(\mathbf{0})\\
  &= \mathbf{0},
\end{align*}
as required.

\paragraph{Summary}
So far, we've proved the stack correspondence lemma's induction step for LSTMs assuming that we can provide parameters of the network such that the values assumed in Equations~\ref{eqn_f_gate},~\ref{eqn_cell_candidate},~\ref{eqn_i_gate},~\ref{eqn_o_gate}, that is, the gates and new cell candidate, are achieved.
We have yet to provide the settings of parameters to do so.

\subsection{Construction and proof of LSTM parameters providing gate and new cell candidate values} \label{section_lstm_params_appendix}
We now come to the final portion of the proof of the inductive step in Lemma~\ref{appendix_lemma_lstm_stack_correspondence}.
So far, we've proven the induction step conditioned on the existence of parameters that implement the intermediate gate values $f_{t,j}$, $i_{t,j}$, $\tilde{c}_{t,j}$, $o_{t,j}$.
Now we construct said parameters, and prove their correctness assuming the induction hypothesis.
We'll describe all matrices $W$, that is, ($W_i, W_f, W_0, W_{\tilde{c}}$), as block matrices, with the block structure specified by the stack slot structure.
Precisely, we'll refer to $W_{j,j'}$ as rows $(j-1)k+1$ through $(j-1)k+k$, intersected with columns $(j'-1)k+1$ through $(j'-1)k+k$.
This can be interpreted as the block that specifies how the values of slot $j'$ affect the intermediate values constructed for slot $j$.
We'll describe all matrices $U$ as block-matrices through rows alone, that is, $U_{j}$ referring to rows $(j-1)k'+1$ through $(j-1)k+k$.
We'll describe biases $b$ through contiguous chunks $b_j$, similarly.

\paragraph{Embedding parameters}
By definition, $x_t = Ew_t$, where each $w_t\in\Sigma$ is overloaded to be a one-hot vector picking out a row of embedding matrix $E\in\mathbb{R}^{d\times |\Sigma|}$.
We define $E=I^{2k\times 2k}$, the identity matrix.
Note that $\{Ew\}_{w\in\Sigma}$ is a set of mutually orthogonal vectors, with $\|Ew\|_2=1$.

\paragraph{Parameters implementing $o_{t,j}$ as Equation~\ref{eqn_o_gate}}
The equation defining $o_{t,j}$ is as follows:
\begin{align}
  o_{t,j} &= [\sigma(W_{o}h_{t-1} + U_ox_{t} + b_o)]_j\\
  &= \sigma([W_{o}h_{t-1}]_j + [U_ox_{t}]_j + [b_o]_j)
\end{align}
where the subscript $j$ on the RHS indicates that we're picking the units corresponding to stack slot $j$.
Let $\gamma = \text{tanh}(1)\approx 0.762$ for ease of notation.
We define $W_{o,j,j'}$ as follows:
\begin{align}
  W_{o,j,j'} = 
    \begin{cases}
      \mathbf{-\rscale} \in \mathbb{R}^{k\times k} & j'\in\{j,j+1\}\\
      \mathbf{-2\rscale} \in \mathbb{R}^{k\times k} & j'>j+1\\
      \mathbf{0} \in \mathbb{R}^{k\times k} & j'<j
    \end{cases},
\end{align}
where $\rscale \in \mathbb{R}$ is a scaling factor to be defined.
This gives the intermediate value:
\begin{align} \label{eqn_h_dotprod_output_gate}
  [W_{o}h_{t-1}]_j = \begin{cases}
    \mathbf{-\rscale\gamma}  & \exists_{j'\in\{j,j+1\},i} :\\&h_{t-1,j'} = \text{tanh}(\psi^{-1}(\symopen_i))\\
    \mathbf{-2\rscale\gamma}  & \exists_{j'>j+1,i} :\\&h_{t-1,j'} = \text{tanh}(\psi^{-1}(\symopen_i))\\
    \mathbf{0} \in \mathbb{R}^k & o.w.
  \end{cases}
\end{align}
Note that Equation~\ref{eqn_h_dotprod_output_gate} relies on the encoding-validity property, in particular that $\psi^{-1}(\symopen_i)\in\{0,1\}^k$ and $\sum_{\ell=1}^{k}\psi^{-1}(\symopen_i)=1$ to ensure that, in the first condition, 
\begin{align*}
  -\lambda\mathbf{1}^\top h_{t-1,j} &= -\lambda\mathbf{1}^\top\text{tanh}(s_{t-1,j})\\
  &= -\lambda\mathbf{1}^\top\text{tanh}(\psi^{-1}(\symopen_i))\\
  &= -\lambda\sum_{\ell=1}^{k}\text{tanh}(\psi^{-1})_\ell\\
  &= -\lambda\text{tanh}(1)\\
  &= -\lambda\gamma.
\end{align*}
and likewise for the second condition.
The third condition relies on the fact that $\psi^{-1}(\varnothing)=\mathbf{0}$.

We define $U_{o,j}$  as follows:
\begin{align}
  U_{o,j} = \begin{bmatrix}
   \rscale\gamma \sum_{i} x_{\symclose_i}^\top\\
   \dots\\
  \end{bmatrix}
\end{align}
that is, each row of $U_{f}$ is equal to the sum, over all close brackets, of the embedding of that close bracket (transposed.)
Since all $x_{\symclose_i}$ are orthogonal and unit norm (by the definition of $E$), this gives the following intermediate value:
\begin{align}
  [U_ox_t]_j = \begin{cases}
    \mathbf{\rscale}\gamma \in \mathbb{R}^k & \exists_i : w_t = \symopen_i\\
    \mathbf{0} & o.w.
  \end{cases}
\end{align}
Finally, we specify the bias term as
\begin{align}
  b_{o,j} = \begin{bmatrix}
    0.5\rscale\gamma\\
    \dots\\
  \end{bmatrix},
\end{align}
that is, all units of the bias are equal to the same value.

As we stated earlier, Equation~\ref{eqn_o_gate} refers to conditions on $c_t$, which do not show up in the computation of $o_t$; instead, $o_t$ is a function of $h_{t-1}$, and $x_t$.
As such, we re-write Equation~\ref{eqn_o_gate} in terms of these values, in particular splitting the condition on $c_t$ being $j'$-top for $j'>j$ into two separate conditions,
\begin{align} \label{eqn_o_gate2}
  o_{t,j} = \begin{cases}
    \mathbf{0} & \exists_{j'>j+1,i}: h_{t-1,j'} = \tanh(\psi^{-1}(\symopen_i))\\
    \mathbf{0} & \exists_{j'\in\{j,j+1\},i}: h_{t-1,j'} = \tanh(\psi^{-1}(\symopen_i))\\
    &\text{and } w_{t}\not=\symclose_i\\
    \mathbf{1} & o.w.
  \end{cases}
\end{align}
Recall that $h_{t,j'}=\text{tanh}(\psi^{-1}(\symopen_i))$ indicates that $c_{t-1}$ is $j'$-top, by the induction hypothesis.
With our equation above we outline two different conditions that we'll show together are necessary and sufficient to guarantee that the top slot is located at some $j'>j$. Under each condition, we set the output gate to $0$ because of that (since if the top slot is at $j'>j$, then slot $j$ is not equal to $0$, and must not be let through the gate.)

While we cannot condition directly on the value of $c_t$, we know that $c_t$ is $j'$-top for $j'>j$ under two conditions on $h_{t-1}$, and thus on $c_{t-1}$.
First, $c_{t-1}$ can be $(j+2)$-top or greater; since only one element can be popped at once, $c_t$ can be no less than $(j+1)$-top.
Second, $c_{t-1}$ can be $j$-top or $(j+1)$-top, and the input $w_t$ is not some $\symclose_i$; that is, it does not pop the top element off of the stack.
Because $w_t$ is not $\symclose_i$, it must be an open bracket, $\symopen_i$, pushing to the stack.
Hence, under these conditions, $c_t$ must be at least $(j+1)$-top.
If neither of these conditions hold, then $c_t$ cannot be $j'$-top for $j'>j$, since if $c_{t-1}$ is $j$-top or $(j-1)$-top and $w_t = \symclose_i$, then $c_t$ must be at most $j$-top.
And if $c_{t-1}$ is $j'$-top for $j'<j$, then it is impossible since only one bracket can be pushed at a time.
Thus Equation~\ref{eqn_o_gate2} is equivalent to Equation~\ref{eqn_o_gate}, as required.

We now prove that parameters $(W_o, U_o, b_o)$ implement Equation~\ref{eqn_o_gate}, by implementing Equation~\ref{eqn_o_gate2}.
In the first condition, $h_{t-1,j'}= \psi^{-1}(\symopen_i)$ for some $j'>j+1$ and $i$; we want to show that $o_{t,j}=\mathbf{0}$.
Based on our construction of parameters, we have that $o_{t,j}$ is upper-bounded by the following, when $w_t=\symclose_i$:
\begin{align*}
  o_{t,j} = &\sigma([W_{o}h_{t-1}]_j + [U_ox_{t}]_j + [b_o]_j)\\
  &\leq \sigma(-2\rscale \gamma +\rscale\gamma + .5\rscale\gamma)\\
  &= \sigma(-0.5\rscale\gamma)\\
  &= \mathbf{0}
\end{align*}
where the last equality holds under our finite precision $\sigma$ by setting the scaling factor $\rscale$ such that $-0.5\rscale\gamma < -\beta$, that is $\rscale > \frac{2\beta}{\gamma}$.

In the next condition, we have that $h_{t-1,j'}=e_i$for $j'\in\{j,j+1\}$, and $i$, and $x_t \not = \symclose_i$.
In this case, we can write out the value of $o_{t,j}$ as follows:
\begin{align*}
  o_{t,j} &= \sigma(-\rscale \gamma + 0 + .5\rscale\gamma)\\
  &= \sigma(-0.5\rscale\gamma)\\
  &= \mathbf{0},
\end{align*}
as required.

Finally, if neither of those conditions hold, we have that $h_{t-1,j'} \not= \psi^{-1}(\symopen_i)$ for any $j'\geq j$, and so $h_{t-1,j'}=\text{tanh}(\mathbf{0})=\mathbf{0}$.
In this case, we can lower-bound the value of $o_{t,j}$:
\begin{align*}
  o_{t,j} &\leq \sigma(0 + 0 + 0.5\rscale\gamma)\\
  &= \mathbf{1},
\end{align*}
as required.
This completes the proof of the output gate.

\paragraph{Parameters implementing $f_{t,j}$ as Equation~\ref{eqn_f_gate}}
The equation defining $f_{t,j}$ is as follows:
\begin{align}
  f_{t,j} &= [\sigma(W_{f}h_{t-1} + U_fx_{t} + b_f)]_j\\
  &= \sigma([W_{f}h_{t-1}]_j + [U_fx_{t}]_j + [b_f]_j)
\end{align}
We define $W_{f,j,j'}$ as follows:
\begin{align}
  W_{f,j,j'} = 
    \begin{cases}
      \mathbf{-\rscale} \in \mathbb{R}^{k\times k} & j=j'\\
      \mathbf{0} \in \mathbb{R}^{k\times k} & o.w.
    \end{cases}
\end{align}
which gives the intermediate value:
\begin{align}\label{eqn_h_dotprod_forget_gate}
  [W_{f}h_{t-1}]_j = \begin{cases}
    \mathbf{-\rscale\gamma} \in \mathbb{R}^k & \exists_i : h_{t-1,j} \\
    & \ \ \ \ \ = \text{tanh}(\psi^{-1}(\symopen_i))\\
    \mathbf{0} \in \mathbb{R}^k & o.w.,
  \end{cases}
\end{align}
  again relying on the embedding-validity property.

We define $U_{f,j}$  as follows:
\begin{align}
  U_{f,j} = \begin{bmatrix}
   - \rscale\gamma\sum_{i} x_{\symclose_i}^\top\\
   \dots\\
  \end{bmatrix}
\end{align}
that is, each row of $U_{f}$ is equal to the negative of the sum, over all close brackets, of the embedding of that close bracket (transposed.)
Since all $x_{\symclose_i}$ are orthogonal and unit norm, this gives the following intermediate value:
\begin{align}
  [U_fx_t]_j = \begin{cases}
    \mathbf{-\rscale\gamma} \in \mathbb{R} & \exists_i : w_t = \symclose_i\\
    \mathbf{0} & o.w.
  \end{cases}
\end{align}
Finally, we specify the bias term as
\begin{align}
  b_{f,j} = \begin{bmatrix}
    1.5\rscale\gamma \\
    \dots\\
  \end{bmatrix},
\end{align}
that is, all units of the bias are equal to the same value.

We now prove that, as given, the parameters $(W_f, U_f, b_f)$ implement Equation~\ref{eqn_f_gate} assuming the induction hypothesis of Lemma~\ref{appendix_lemma_lstm_stack_correspondence}.

Equation~\ref{eqn_f_gate} has two cases.
In the first case, $c_{t-1}$ is $j$-top and $w_t=\symclose_i$, and we need to prove $f_{t,j}=\mathbf{0}$.
From the induction hypothesis, this means that $h_{t-1,j} = \text{tanh}(\psi^{-1}(\symopen_i))$, and all other slots $h_{t-1,j'}=\mathbf{0}$.
The value of the gate is as follows:
\begin{align*}
  f_{t,j} &= \sigma([W_{f}h_{t-1}]_j + [U_fx_{t}]_j + [b_f]_j)\\
  &= \sigma(-\rscale\gamma - \rscale\gamma + 1.5\rscale\gamma)\\
  &= \sigma(-0.5\rscale\gamma)\\
  &= \mathbf{0}
\end{align*}
where the last equality holds by setting $\rscale$ as already stated: $\rscale>\frac{2\beta}{\gamma}$.

The second case is whenever the conditions of the first case don't hold, in which case we must prove $f_{t,j}=\mathbf{1}$.
Thus, we either have $c_{t-1}$ \text{not} $j$-top, in which case,
\begin{align*}
  f_{t,j} &\geq \sigma(0 - \rscale\gamma + 1.5\rscale\gamma)\\
  &= \sigma(0.5\rscale\gamma)\\
  &= \mathbf{1},
\end{align*}
or we have $w_t\not=\symclose_i$, in which case,
\begin{align*}
  f_{t,j} &\geq \sigma(-\rscale\gamma -0 + 1.5\rscale\gamma)\\
  &= \sigma(0.5\rscale\gamma)\\
  &= \mathbf{1},
\end{align*}
as required.

\paragraph{Parameters implementing $i_{t,j}$ as Equation~\ref{eqn_i_gate}}
The equation defining $i_{t,j}$ is as follows:
\begin{align}
  i_{t,j} &= [\sigma(W_{i}h_{t-1} + U_ix_{t} + b_i)]_j\\
  &= \sigma([W_{i}h_{t-1}]_j + [U_ix_{t}]_j + [b_i]_j)
\end{align}
We define $W_{i,j,j'}$ as follows:
\begin{align}
  W_{i,j,j'} = 
    \begin{cases}
      \mathbf{-\rscale} \in \mathbb{R}^{k\times k} & j=1\\
      \mathbf{\rscale} \in \mathbb{R}^{k\times k} & j=j'+1\\
      \mathbf{0} \in \mathbb{R}^{k\times k} & o.w.
    \end{cases}
\end{align}
which gives the intermediate value:
\begin{align}\label{eqn_h_dotprod_input_gate}
  [W_{i}h_{t-1}]_j = \begin{cases}
    \mathbf{-\rscale\gamma} \in \mathbb{R}^k & j=1, \exists_{i,j'} :\\&h_{t-1,j'} = \text{tanh}(e_i)\\
    \mathbf{\rscale\gamma} \in \mathbb{R}^k & \exists_i : h_{t-1,j-1} \\
    &= \text{tanh}(e_i)\\
    \mathbf{0} \in \mathbb{R}^k & o.w.
  \end{cases}
\end{align}
once again relying only on the encoding-validity property.

We define $U_{i,j}$  as follows:
\begin{align}
  U_{i,j} = \begin{bmatrix}
   - \rscale\gamma\sum_{i} x_{\symopen_i}^\top\\
   \dots\\
  \end{bmatrix}
\end{align}
Since all $x_{\symclose_i}$ are orthogonal and unit norm, this gives the following intermediate value:
\begin{align}
  [U_ix_t]_j = \begin{cases}
    \mathbf{-\rscale\gamma} \in \mathbb{R}^k & \exists_i : w_t = \symopen_i\\
    \mathbf{0} & o.w.
  \end{cases}
\end{align}
Finally, we specify the bias term as
\begin{align}
  b_{i,j} = \begin{cases}
    -.5\rscale\gamma\in\mathbb{R}^k & j=1 \\
    -1.5\rscale\gamma\in\mathbb{R}^k & o.w.\\
  \end{cases}
\end{align}
We now prove that the parameters $(W_i, U_i, b_i)$, when plugged into $f_\theta$, implement Equation~\ref{eqn_i_gate} assuming the induction hypothesis of Lemma~\ref{appendix_lemma_lstm_stack_correspondence}.
Equation~\ref{eqn_i_gate} has three cases.

The condition of the first case is that $c_{t-1}=\mathbf{0}$, $j=1$, and $w_t=\symopen_i$; we need to show that $i_{t,1}=\mathbf{1}$.
Since $c_{t-1}=\mathbf{0}$, we have that $h_{t-1}=\mathbf{0}$.
We can compute $i_{t,1}$ as follows:
\begin{align*}
  i_{t,1} &= \sigma([W_{i}h_{t-1}]_j + [U_ix_{t}]_j + [b_i]_j)\\
  &= \sigma(0 + \rscale\gamma - 0.5\rscale\gamma)\\
  &= \sigma(0.5\rscale\gamma)\\
  &= \mathbf{1}.
\end{align*}
note the bias term $-0.5\rscale\gamma$ is only set that way for $j=1$.

The condition of the second case is that $c_{t-1}$ is $(j-1)$-top, and $w_t = \symopen_i$; we need to show that $i_{t,j} = \mathbf{1}$.
From the induction hypothesis, we have that $h_{t-1,j-1} = \psi^{-1}(\symopen_{i'})$ for some $i'$, and $h_{t-1}$ is $0$ elsewhere.
We calculate $i_{t,j}$ as follows,
\begin{align*}
  i_{t,j>1} &= \sigma([W_{i}h_{t-1}]_j + [U_ix_{t}]_j + [b_i]_j)\\
  &= \sigma(\rscale\gamma + \rscale\gamma - 1.5\rscale\gamma)\\
  &= \sigma(0.5\rscale\gamma)\\
  &= \mathbf{1},
\end{align*}
where we do not have to consider the case $i_{t,1}$ since $c_{t-1}$ cannot be $0$-top, so this case cannot hold.

Finally, in the third case, none of the above conditions hold, and we need to prove $i_{t,j}=\mathbf{0}$.
There are a number of possibilities here to enumerate.
First, let $c_{t-1}=\mathbf{0}$ and $j=1$, but $w_t\not =\symopen_i$.
Then,
\begin{align*}
  i_{t,1} & = \sigma(0 + 0 - 0.5\rscale\gamma)\\
  &= \sigma(-0.5\rscale\gamma)\\
  &= \mathbf{0},
\end{align*}
Next, we let $c_{t-1}=\mathbf{0}$ and $w_t=\symopen_i$, but $j>1$.
Then,
\begin{align*}
  i_{t,1} & = \sigma(0 + \rscale\gamma - 1.5\rscale\gamma)\\
  &= \sigma(-0.5\rscale\gamma)\\
  &= \mathbf{0},
\end{align*}
Next, we let $j=1$, $w_t=\symopen_i$, but $c_{t-1}\not = \mathbf{0}$.
Thus $c_{t-1}$ is $j'$-top for some $j'$.

If the second case of Equation~\ref{eqn_i_gate} doesn't hold, then either $c_{t-1}$ is not $(j-1)$-top, or $w_t\not = \symopen_i$; we need to prove $i_{t,j}=\mathbf{0}$.
If $c_{t-1}$ is not $(j-1)$-top and $j>1$, then %
we can upper-bound the value of $i_{t,j}$ as follows:
\begin{align*}
  i_{t,j>1} &\leq \sigma(0 + \rscale\gamma -1.5\rscale\gamma)\\
  &= \sigma(-0.5\rscale\gamma)\\
  &=\mathbf{0}.
\end{align*}
If $c_{t-1}$ is $(j-1)$-top but $w_t\not=\symopen_i$, then the value of $i_{t,j}$ is as follows,
\begin{align*}
  i_{t,j>1} &= \sigma(\rscale\gamma + 0 -1.5\rscale\gamma)\\
  &= \sigma(-0.5\rscale\gamma)\\
  &=\mathbf{0},
\end{align*}
as required.

\paragraph{Parameters implementing $\tilde{c}_{t,j}$ as Equation~\ref{eqn_cell_candidate}}

The equation defining $\tilde{c}_{t,j}$ is as follows:
\begin{align}
  \tilde{c_{t,j}} &= [\text{tanh}(W_{\tilde{c}}h_{t-1} + U_{\tilde{c}}x_t = b_{\tilde{c}})]_j\\
  &= \text{tanh}([W_{\tilde{c}}h_{t-1}]_j + [U_{\tilde{c}}x_t]_j = [b_{\tilde{c}}]_j)
\end{align}
We define $W_{\tilde{c}}=\mathbf{0}$, which leads to the intermediate values:
\begin{align}
  [W_{\tilde{c}}h_{t-1}]_j = \mathbf{0}
\end{align}
We define $U_{\tilde{c}}$ as follows:
\begin{align}
  U_{\tilde{c},j} = \begin{bmatrix}
    \rscale x_{\symopen_1}^\top\\
    \rscale x_{\symopen_2}^\top\\
   \dots\\
    \rscale x_{\symopen_k}^\top
  \end{bmatrix}
\end{align}
which leads to the intermediate values:
\begin{align}
  [U_{\tilde{c}}x_t]_j = \begin{cases}
    \rscale e_i & w_t = \symopen_i\\
    \mathbf{0} & o.w.
  \end{cases}
\end{align}
We define $b_{\tilde{c}} = \mathbf{0}$.

We now prove that the parameters $W_{\tilde{c}}, U_{\tilde{c}}, b_{\tilde{c}}$ implement Equation~\ref{eqn_cell_candidate}.
There are two cases.

In the first case, $w_t=\symopen_i$, and need to show $\tilde{c}_{t,j}=\psi^{-1}(\symopen_i)$.
In this case, we have
\begin{align*}
  \tilde{c}_{t,j} &= \text{tanh}([W_{\tilde{c}}h_{t-1}]_j + [U_{\tilde{c}}x_t]_j = [b_{\tilde{c}}]_j)\\
  &= \text{tanh}(\mathbf{0} + \rscale e_i + 0)\\
  &= e_i,\\
  &= \psi^{-1}(\symopen_i)
\end{align*}
where the last equality holds by setting $\rscale>\beta$.\footnote{Which we've already required earlier by letting $\lambda>\frac{2\beta}{\gamma}$.}
In the second case, we have $w_t\not=\symclose_i$, and need to show $\tilde{c}_{t,j} = \mathbf{0}$.
In this case, we have
\begin{align*}
  \tilde{c}_{t,j} &= \text{tanh}([W_{\tilde{c}}h_{t-1}]_j + [U_{\tilde{c}}x_t]_j = [b_{\tilde{c}}]_j)\\
  &= \text{tanh}(\mathbf{0} + \mathbf{0} + \mathbf{0})\\
  &= \mathbf{0}
\end{align*}
as required.

\paragraph{Summary}
We've now specified all parameters of the LSTM and completed the induction step in our proof of the stack correspondence lemma, Lemma~\ref{appendix_lemma_lstm_stack_correspondence}; this completes the proof of the lemma.

\section{Proving generation in $O(mk)$ hidden units} \label{appendix_sec_generating}
For both the Simple RNN and the LSTM, we've proven \textit{stack correspondence lemmas}, Lemmas~\ref{appendix_lemma_simple_rnn_stack_correspondence},~\ref{appendix_lemma_lstm_stack_correspondence}.
These guarantee that for all prefixes of strings in \dyckkm, we can rely on properties of the hidden states of each model to be perfectly informative about the state of the DFA stack.

Given those lemmas, this section describes the proof of the following results, for the Simple RNN:
\thmsimplernntwomk* \label{appendix_thm_simple_rnn_2mk}
and for the LSTM, noting that the condition that $W_{\tilde{c}}=\mathbf{0}$ holds from our proof of the stack correspondence lemma.
\thmlstmmk* \label{appendix_thm_lstm_mk}

The two constructions differ in where they make information accessible; we describe those differences here and then provide a general proof that is agnostic to which construction is used.

\subsection{Softmax distribution parameters}
Recall that we must show that the $\epsilon$-truncated support of $f_\theta$ (for both Simple RNN and LSTM) is the same set as \dyckkm.
The token-level probability distribution conditioned on the history is given as follows:
\begin{align}
  w_t \sim \text{softmax}(Vh_{t-1} + b),
\end{align}
and we denote this distribution $p_{f_\theta}$.
We first specify $V$ by row; a single row exists for each of the $2k+1$ words in the vocabulary.
We let $v_{w,j}$ refer to the row of word $w$, and the $k$ columns that will participate in the dot product with the rows of stack slot $j$ in $h_{t-1}$.

We start with the only difference in the softmax matrix between the Simple RNN and the LSTM.
For the Simple RNN:
\begin{align}
  &v_{\symclose_i,1} = \pscale e_i\\
  &v_{\symclose_i,j>1} = \mathbf{0},
\end{align}
because the top of the stack is guaranteed to be in the first stack slot, and where $\pscale$ is a positive scaling constant we'll define later.
And for the LSTM,
\begin{align}
  &v_{\symclose_i,j} = \pscale e_i
\end{align}
for all $j$, since exactly one slot is guaranteed to be non-empty (if the stack is non-empty) and it could be at any of the $m$ slots.
The rest of the softmax construction is common between the Simple RNN and the LSTM:
\begin{align}
  &v_{\symopen_i,j<m} = \mathbf{0}\in\mathbb{R}^k\\
  &v_{\symopen_i,m} =  \mathbf{-1}\pscale\in\mathbb{R}^k\\
  &v_{\symend,j,1} =  \mathbf{-1}\pscale\in\mathbb{R}^k
\end{align}
Likewise, the bias terms:
\begin{align}
  &b_{v, \symclose_i} = -0.5\pscale\gamma\\
  &b_{v, \symopen_i} = 0.5\pscale \gamma\\
  &b_{v, \symend} = 0.5\pscale \gamma
\end{align}

So that we can swap out $V$ when we swap out $\psi$ for a more efficient encoding, we state here a properties of $V$ we rely on (once $\psi$ is fixed):
\begin{defn}[softmax validity property]\label{appendix_def_softmax_validity}
  Given encoding $\psi: \{\symopen_i\}_{i\in[k]}\rightarrow \mathbb{R}^n$, a softmax matrix $V$ obeys the softmax validity property relative to $\psi$ if
  \begin{align}
    \forall_{i\in[k]}, v_{\symclose_i,u}^\top \psi^{-1}(\symopen_j) \begin{cases}
      =\pscale & i = j\\
      \leq 0 & i \not = j,
    \end{cases}
  \end{align}
  where $u$ specifies the stack slot where the construction (Simple RNN or LSTM) stores the element at the top of the stack; for the Simple RNN, $u=1$, for the LSTM, $u$ is such that $c_t$ is $u$-top, as defined.
  This ensures that the softmax matrix correctly distinguishes between the symbol, $\symopen_i$, that is encoded in the top of the stack, from any other symbol.
  Further, we rely on
  \begin{align}
    \forall_{i,j\in[k]}, v_{\symopen_i,m}^\top \psi^{-1}(\symopen_j) = -\pscale\\
    \forall_{i,j\in[k]}, v_{\symopen_i,m'<m}^\top \psi^{-1}(\symopen_j) = 0
  \end{align}
  to detect whether the stack is full, and
  \begin{align}
    \forall_{i,j\in[k]}, \forall_{z\in[m]},v_{\symend,z}^\top \psi^{-1}(\symopen_j) = -\pscale
  \end{align}
  to detect if the stack is not empty.
\end{defn}
The softmax matrix we've provided obeys the softmax validity property with respect to $\psi$ since $\pscale e_i^\top e_i = \pscale$, and $\pscale e_i^\top e_j=0$, $j\not=i$.
Further, $-\pscale\mathbf{1}^\top e_i = -\pscale$ for all $i$.

We're now prepared to state the final lemma in our proof of Theorems~\hyperref[appendix_thm_simple_rnn_2mk]{\ref*{thm_simplernn2mk}},~\hyperref[appendix_thm_lstm_mk]{\ref*{thm_lstmmk}}.
\begin{lemma}[probability correctness] \label{lemma_probs}
  Let $w_{1:T}\in D_{m,k}$.
  For $t=1,...,t$, let $q_t = D_{m,k}(w_{1:T})$.
  Then for all $w\in\Sigma$,
  \begin{align}
    \delta(q_t, w) \not = r \leftrightarrow p_{f_\theta}(w|{w_{1:t}}) \geq \epsilon
  \end{align}
\end{lemma}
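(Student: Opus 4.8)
The plan is to read the logit of every symbol off the softmax validity property (Definition~\ref{appendix_def_softmax_validity}) together with the stack correspondence lemma (Lemma~\ref{appendix_lemma_simple_rnn_stack_correspondence} for the Simple RNN, Lemma~\ref{appendix_lemma_lstm_stack_correspondence} for the LSTM), show that the \emph{sign} of each logit tracks whether $\delta(q_t,w)=r$, and then pick a single $\epsilon$ that the softmax places on the correct side of every symbol at once. Since $w_{t+1}\mid w_{1:t}\sim\text{softmax}(Vh_t+b_v)$, the decoder acts on $h_t$, and by the stack correspondence lemma $\mathcal{Q}(h_t)=q_t$. Concretely this means: the top-of-stack symbol is stored in the slot $u$ named in Definition~\ref{appendix_def_softmax_validity} (directly as $\psi^{-1}(\symopen_i)$ in the Simple RNN, and as $\tanh(\psi^{-1}(\symopen_i))$, i.e.\ scaled by $\gamma=\tanh(1)$, in the LSTM); slot $m$ is nonzero iff the stack is full; and some slot is nonzero iff the stack is nonempty.

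I would then treat the symbol classes in turn. For a close bracket $\symclose_i$, only slot $u$ contributes, so the logit is $v_{\symclose_i,u}^\top(\text{top slot})+b_{v,\symclose_i}$; by Definition~\ref{appendix_def_softmax_validity} the inner product is $\pscale$ (times $\gamma$ for the LSTM) exactly when $\symopen_i$ is on top and is $\le 0$ otherwise, so with $b_{v,\symclose_i}=-0.5\pscale\gamma$ the logit is a positive multiple of $\pscale$ precisely when $\delta(q_t,\symclose_i)\ne r$ and equals $-0.5\pscale\gamma<0$ otherwise. For an open bracket $\symopen_i$, only the slot-$m$ row $v_{\symopen_i,m}=-\pscale\mathbf{1}$ is nonzero, so the logit is $0.5\pscale\gamma>0$ when slot $m$ is empty (stack not full, $\delta\ne r$) and a negative multiple of $\pscale$ when slot $m$ is occupied (stack full, $\delta=r$). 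The end symbol $\symend$ (needed to finish generating \dyckkm, just beyond the $w\in\Sigma$ scope of the statement) is handled identically: its row charges $-\pscale$ against every occupied slot, so the logit is $0.5\pscale\gamma>0$ on the empty stack and a negative multiple of $\pscale$ otherwise. In each case the logit's sign equals the truth value of $\delta(q_t,w)\ne r$, with a margin growing linearly in $\pscale$.

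Finally I would pass the logits through the softmax over the $2k+1$ symbols. Because the vocabulary is finite and each hidden unit lies in the finite set $\mathbb{P}$, the logits take finitely many values, so there is a prefix-independent margin: valid logits are at least $c^{+}\pscale$ and invalid ones at most $-c^{-}\pscale$ for fixed constants $c^{+},c^{-}>0$. Hence every invalid symbol receives probability at most $(2k+1)e^{-(c^{+}+c^{-})\pscale}$, while every valid symbol receives at least a bounded-away-from-zero fraction (on the order of $\tfrac{1}{2k+1}$). Choosing $\pscale$ large enough separates these two bands, and any $\epsilon$ strictly between them yields the biconditional $\delta(q_t,w)\ne r\Leftrightarrow p_{f_\theta}(w\mid w_{1:t})\ge\epsilon$ simultaneously for all symbols and all $t$, as required.

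The main obstacle is organisational rather than conceptual: the case analysis must be carried out uniformly for both constructions, whose top-of-stack encodings differ by the $\gamma=\tanh(1)$ factor (which is exactly why the biases are written as multiples of $\pscale\gamma$), and I must confirm the separating $\epsilon$ can be fixed once for the whole language, independent of the prefix length $T$, which is what turns a per-prefix logit computation into a statement about generating \dyckkm. Verifying that the slot-$m$ fullness test and the all-slots emptiness test behave correctly on the boundary cases $m'=0$ and $m'=m$ is the one place that needs care.
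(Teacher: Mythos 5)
Your proposal is correct and takes essentially the same route as the paper's proof: both combine the stack correspondence lemmas with the softmax validity property to pin down every logit, show that allowed symbols receive logits bounded below by $0.5\pscale\gamma$ while disallowed ones are bounded above by $-0.5\pscale\gamma$, and then choose $\pscale$ large and an $\epsilon$ lying between the two resulting probability bands (the paper fixes $\epsilon=\frac{1}{2(k+1)}$ and $\pscale>\frac{2.4}{\gamma}$ explicitly, and organizes the case analysis by DFA-state class---empty, partial, full, accept---rather than by symbol class as you do, a purely cosmetic transposition). The one caveat, shared equally by the paper's own write-up, is that the claim that every allowed symbol retains probability on the order of $\frac{1}{2k+1}$ is exact only for the LSTM, whose allowed logits all equal $0.5\pscale\gamma$; in the Simple RNN the allowed close bracket's logit is $\pscale-0.5\pscale\gamma$, so as $\pscale$ grows the softmax mass concentrates on it and the other allowed symbols' probabilities decay like $e^{-(1-\gamma)\pscale}$, with the bands still separating only because this decays more slowly than the disallowed symbols' bound $e^{-\gamma\pscale}$ (as $\gamma>\tfrac12$).
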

Intuitively, Lemma~\ref{lemma_probs} states that $f_\theta$ assigns greater than $\epsilon$ probability mass in context to tokens $w_{t}$ such that the prefix $w_{1:t+1}$ is the prefix of some member of $D_{m,k}$.

\paragraph{Proof of Lemma~\ref{lemma_probs}}
We proceed in cases by the state, $q$.
We'll show that a lower-bound on the probabilities of allowed symbols is greater than an upper-bound on the probabilities of disallowed symbols.
We should note, however, that this is effectively a technicality to ensure our construction is fully constructive -- that is, we provide concrete values for each parameter in the model; else, we could simply indicate that as $\pscale$ grows, the probability mass on $w$ such that $\delta(q,w)=r$ converges to $0$, while the mass on all other symbols converges 1 over the number of such symbols.

\paragraph{Case $q=[]$}
First, consider the case that $q_{t-1}=[]$.
For all $\symclose_i$, $\delta(q,\symclose_i)=r$, and for all other symbols $w$, $\delta(q,w) \not = r$.
By the stack correspondence lemmas, Lemma~\ref{appendix_lemma_lstm_stack_correspondence}~\ref{appendix_lemma_simple_rnn_stack_correspondence}, we have that $\mathcal{Q}(h_{t-1})=q_{t-1}$ (Simple RNN) or $\mathcal{Q}(c_{t-1})=q_{t-1}$ (LSTM), and $h_{t-1} = \mathbf{0}$.

For all $\symclose_i$, we have $\delta(q, \symclose_i) = r$, and we have logits:
\begin{align*}
  p_{f_\theta}&(w_t=\symclose_i|h_{t-1}) \\
  &\propto v_{\symclose_i}^\top h_{t-1} + b_{v,\symclose_i}\\
  &\leq 0 - 0.5\pscale\gamma\\
  &= - 0.5\pscale\gamma,
\end{align*}
as required.
For all $w_t=\symopen_i$, we have $\delta(q,\symopen_i) \not=r$, and we have logits:
\begin{align*}
  p_{f_\theta}&(w_t=\symopen_i|h_{t-1}) \\
  &\propto v_{\symopen_i}^\top h_{t-1} + b_{v,\symopen_i}\\
  &= 0 + 0.5\pscale\gamma\\
  &= 0.5\pscale\gamma
\end{align*}
Finally, for $\symend$, $\delta(q,\symend)\not=r$, and we have logits:
\begin{align*}
  p_{f_\theta}&(w_t=\symend|h_{t-1}) \\
  &\propto v_{\symend}^\top h_{t-1} + b_{v,\symend_i}\\
  &= 0 + 0.5\pscale\gamma\\
  &= 0.5\pscale\gamma,
\end{align*}
as required.
With the logits specified for our whole vocabulary, we can compute the partition function of the softmax function by summing over all $k$ open brackets, all $k$ close brackets, and the END bracket, to determine probabilities for each token,
\begin{align*}
  p_{f_\theta}&(w_t=\symclose_i|h_{t-1}) \\
  &=\frac{e^{-0.5\pscale}}{Z_{[]}}\\
  &Z_{[]}\leq(k+1) e^{0.5\pscale\gamma} + ke^{-0.5\pscale\gamma}
\end{align*}
while the probability for any $\symopen_i$ and $\symend$ is $p_{f_\theta}(w_t=\symopen_i|h_{t-1})=e^{0.5\pscale\gamma}/Z_{[]}$.
The partition function is computed as $e^{0.5\pscale\gamma}$ multiplied by the number of words $w$ such that $\delta(q,w)\not=r$ (in this case, $k$ open brackets and the end bracket), plus a quantity upper bounded by $e^{-0.5\pscale\gamma}$ multiplied by the number of words $w$ such that $\delta(q,w)=r$ (in this case, $k$ close brackets.)
All probabilities under this model are of this form. %

\paragraph{Case $q=[\symopen_{i_1},...,\symopen_{i_{m'}}], m'<m$}
Next, consider the case that $q = [\symopen_{i_1},...,\symopen_{i_{m'}}]$ for $0<m'<m$.
For the Simple RNN, we have by Lemma~\ref{appendix_lemma_simple_rnn_stack_correspondence} that $h_{t-1,1}=\psi^{-1}(\symopen_m')$ (the top of the stack.)
Likewise for the LSTM, we have by Lemma~\ref{appendix_lemma_simple_rnn_stack_correspondence}, that $h_{t-1,m'}=\text{tanh}(e_{i_{m'}})$, and $h_{t-1,j\not=m'} = \mathbf{0}$.
Either way, we have $v_{\symclose_i}^\top h_{t-1} \geq \pscale\gamma$ by the softmax validity property.\footnote{In particular, for the Simple RNN, this expression is equal to $\pscale$ due to the softmax validity property; because of the tanh in $h_t = o_t\odot \text{tanh}(c_t)$ in the LSTM expression, it's equal to $\pscale\gamma<\pscale$ where $\gamma=\text{tanh}(1)$.}

As with the first case, we construct the logits for each of the symbols.
First, we have $\delta(q,\symopen_{m'})\not=r$; this is the top element of the stack, which can be popped with a close bracket. We have logits:
\begin{align*}
  p_{f_\theta}&(w_t=\symclose_i|h_{t-1}) \\
  &\propto v_{\symclose_i}^\top h_{t-1} + b_{v,\symclose_i}\\
  &=  \pscale\gamma - 0.5\pscale\gamma\\
  &=  0.5\pscale\gamma
\end{align*}
as required. %
For $w_t=\symclose_i$ where $i\not=i_{m'}$, we have $\delta(q,\symopen_i)=r$ (since $\symopen_i$ is not the top of the stack, seeing close bracket $\symclose_i$ would break the well-balancing requirement), and the logits:
\begin{align*}
  p_{f_\theta}&(w_t=\symclose_i|h_{t-1}) \\
  &\propto v_{\symclose_i}^\top h_{t-1} + b_{v,\symclose_i}\\
  &\leq  0 - 0.5\pscale\gamma\\
  &=  - 0.5\pscale\gamma,
\end{align*}
again by the softmax validity property.
For $w_t=\symopen_i$, we have $\delta(q,\symopen_i)\not=r$, and the logits:
\begin{align*}
  p_{f_\theta}&(w_t=\symopen_i|h_{t-1}) \\
  &\propto v_{\symopen_i}^\top h_{t-1} + b_{v,\symopen_i}\\
  &= 0 + 0.5\pscale\gamma\\
  &= 0.5\pscale\gamma
\end{align*}
as required, since $m'<m$, and by the softmax validity property, $v_{\symopen_i}^\top h_{t,m'<m}=\mathbf{0}$.
Finally, for $\symend$, we have $\delta(q,\symend)=r$, and the logits
\begin{align*}
  p_{f_\theta}&(w_t=\symend_i|h_{t-1}) \\
  &\propto v_{\symend}^\top h_{t-1} + b_{v,\symend_i}\\
  &= -\pscale\gamma + 0.5\pscale\gamma\\
  &= -0.5\pscale\gamma,
\end{align*}
by the softmax validity property, as required.
To reason about the probabilities, we again need to construct the partition function of the softmax
\begin{align*}
  Z_p \leq (k+1)e^{0.5\pscale\gamma} + ke^{-0.5\pscale\gamma}
\end{align*}
where $Z_p$ stands for $Z$-``partial'', for a partially full stack. ($k$ open brackets are allowed, plus 1 close bracket; $k-1$ close brackets are disallowed, as well as the $\symend$ word.)

\paragraph{Case $q=[\symopen_{i_1},...,\symopen_{i_{m}}]$}
Next, consider the case that $q=[\symopen_{i_1},...,\symopen_{i_{m}}]$, that is, a full stack with $m$ elements.
For the Simple RNN, by Lemma~\ref{appendix_lemma_simple_rnn_stack_correspondence}, we have that $h_{t-1,1}=\psi^{-1}(\symopen_i)$, and $h_{t-1,m}=\psi^{-1}(\symopen_j)$ for some $j$. (The top of the stack is $\symopen_i$, and the stack is full.)
And for the LSTM, by Lemma~\ref{appendix_lemma_lstm_stack_correspondence}, we have that $h_{t-1,m} = \text{tanh}(\psi^{-1}(\symopen_{i_m}))$, and $h_{t-1,j\not=m}=\mathbf{0}$.
The logits for each symbol are as follows.
Identical to the last case, we have $\delta(q,\symclose_{i_m})\not=r$, and  $p_{f_\theta}(\symclose_{i_m}|h_{t-1}) \propto e^{0.5\pscale\gamma}$ as required; this is the element at the top of the stack.
Also identically, all other brackets have $\delta(q,\symclose_{i\not=i_m})=r$, and the values for those symbols are $\leq -0.5\pscale\gamma$, as required.
For $w_t=\symopen_i$, we have $\delta(q,\symopen_i)=r$; this is because with $m$ elements, the $m$-bound means no more elements can be pushed. The logits are:
\begin{align*}
  p_{f_\theta}&(w_t=\symopen_i|h_{t-1}) \\
  &\propto v_{\symopen_i}^\top h_{t-1} + b_{v,\symopen_i}\\
  &= -\pscale\gamma + 0.5\pscale\gamma\\
  &= -0.5\pscale\gamma,
\end{align*}
by the softmax validity property, as required.
Finally, for $\symend$, identically to the previous case, we have $\delta(q,\symend)=r$, and logits $e^{-0.5\pscale\gamma}$, as required.
The partition function is as follows:
\begin{align*}
  Z_f &\leq ke^{0.5\pscale\gamma} + (k+1)e^{-0.5\pscale\gamma}
\end{align*}
where $Z_f$ stands for $Z$-``full'', for a full stack.

\paragraph{Case $q=[\symend]$}
Last, consider the case that $q=[\symend]$.
In this case, if at timestep $t$, then symbol $w_t=\symend$, since the only transition to state $[\symend]$ is $\delta([],\symend)=[\symend]$.
Because of this and the definition of the universe of strings as $\Sigma^*\symend$, $f_\theta$ need not be defined on strings progressing from $q=[\symend]$.
Intuitively, this is because $\symend$ indicates that the string has terminated.
Hence this case vacuously holds.

We thus have three partition functions, $Z_p$, $Z_{[]}$, and $Z_f$, whose values we only have bounds for.
However, we can see that as we scale $\pscale$ large, they converge to $Z_{[]} = (k+1)e^{0.5\pscale\gamma}$, $Z_{f} = ke^{0.5\pscale\gamma}$, $Z_p=(k+1)e^{0.5\pscale\gamma}$ because the contributions from the disallowed symbols converges to zero.
Thus, the probability assigned to the lowest-probability allowed symbol converges to $\frac{1}{k+1}$ as $\pscale$ grows large, while the probability assigned to the highest-probability disallowed symbol converges to $0$.
So, we choose $\epsilon = \frac{1}{2(k+1)}$,and let $\pscale>\frac{2.4}{\gamma}$, so that $e^{0.5\pscale\gamma}> 10e^{-0.5\pscale\gamma}$.
Under this, the smallest probability assigned to any allowed symbol is lower-bounded by $\frac{e^{0.5\pscale\gamma}}{(k+1)e^{0.5\pscale\gamma} + ke^{-0.5\pscale\gamma}}>\frac{1}{(k+1)+0.1k}>\epsilon$, and the largest probability assigned to any disallowed symbol is upper-bounded by $\frac{e^{-0.5\pscale\gamma}}{ke^{+0.5\pscale\gamma}} \leq \frac{0.1}{k}=\frac{1}{10k}<\epsilon$.
This completes the proof of Lemma~\ref{lemma_probs}.

\subsection{Completing proofs of generation (Theorems~\hyperref[appendix_thm_simple_rnn_2mk]{\ref*{thm_simplernn2mk}},~\hyperref[appendix_thm_lstm_mk]{\ref*{thm_lstmmk}})}
Now that we've proved the probability correctness lemma, we're ready to complete our proof that our Simple RNN and LSTM constructions generate \dyckkm.

Recall that we've overloaded notation, calling $D_{m,k} \subset \sigma^*$ the set of strings defining the language \dyckkm.
We must show that the $\epsilon$-truncated support of $f_\theta$, which we'll call $\mathcal{L}_{f_\theta}$ is equal to $D_{m,k}$.
We show both inclusions.
\paragraph{Pf. ($D_{m,k} \subseteq \mathcal{L}_{f_\theta}$)}
Let $w\in D_{m,k}$.
We'll show $w\in \mathcal{L}_{f_\theta}$.
For all prefixes $w_{1:t}$, $t=1\dots,T$, let $q_{t-1}=D_{m,k}(w_{1:t-1})$, the state of the DFA after consuming all tokens of the prefix except the last.
For the Simple RNN, by Lemma~\ref{appendix_lemma_simple_rnn_stack_correspondence}, we have that $\mathcal{Q}(h_{t-1}) = q_{t-1}$.
For the LSTM, by Lemma~\ref{appendix_lemma_lstm_stack_correspondence}, we have that $\mathcal{Q}(s_{t-1,1},\dots,s_{t-1,m})=q_{t-1}$.
Given this, for either construction, by Lemma~\ref{lemma_probs}, we have that $p_{f_\theta}(w_t|w_{1:t-1})>\epsilon$.
Since this is true for all $t=1,\dots,T$, we have that $w$ is in the $\epsilon$-truncated support of $f_\theta$, and so $w\in\mathcal{L}_{f_\theta}$.

\paragraph{Pf. ($\mathcal{L}_{f_\theta} \subseteq D_{m,k}$)}
Let $w_{1:T}\in \mathcal{L}_{f_\theta}$.
We'll show $w_{1:T}\in D_{m,k}$ by proving the contrapositive.

Let $w_{1:T}\not\in D_{m,k}$.
We have that $w_{T}=\symend$ by definition.
From the transition function $\delta$, we know that $\delta(q,\symend)\in\{[\symend],r\}$, that is, $\symend$ transitions from any state either to the accept state or the reject state.
Because $w_{1:T}\not\in D_{m,k}$, it must be that $\delta(q_{T-1},\symend)=r$, that is, $q_T=r$.
We have no guarantee about $f_{\theta}(w_T|w_{1:T-1})$, however, since we don't know whether $w_{T-1}$ is a prefix of some string in \dyckkm. %
However, we do know that there must be some $t'$ such that the first time $q_t=r$ is for $t=t'$, that is, the first timestep in which a disallowed symbol is seen and $D_{m,k}$ transitions to the reject state $r$ (after which it self-loops in $r$ by definition.)

Consider then the prefix $w_{1:t'-1}$.
We know that $q_{t'-1}\not=r$.
So without loss of generality, let $q_{t'-1}=[\symopen_{i_1},\dots,\symopen_{i_{m'}}]$.
We then construct a string:
\begin{align*}
  w_{1:T'} = w_{1},\dots,w_{1:t'-1},\symclose_{i_1},\dots,\symclose_{i_{m'}}, \symend
\end{align*}
which simply closes all the brackets on the stack represented by $q_{t'}$.
By recursive application of $\delta$, we have that $q_{T'-1}=[]$, and thus $q_{T'}=[\symend]$, meaning $w_{1:T'}\in D_{m,k}$.

Coming back to our prefix $w_{t:t'-1}$, we now know that it is a prefix of a string $w_{1:T'}\in D_{m,k}$.
Thus, for our Simple RNN, by Lemma~\ref{appendix_lemma_simple_rnn_stack_correspondence}, we have that $\mathcal{Q}(h_{t'-1}) = q_{t'-1}$.
Likewise for our LSTM, by Lemma~\ref{appendix_lemma_lstm_stack_correspondence}, we have that $\mathcal{Q}(c_{t'-1})=q_{t'-1}$.
And by Lemma~\ref{lemma_probs}, since $\delta(q_{t'-1},w_{t'})=r$, we have $p_{f_\theta}(w_{t'}|w_{1:t'-1})<\epsilon$.
And so, $w_{1:T}\not \in \mathcal{L}_{f_\theta}$.
This completes the proof of Theorems~\hyperref[appendix_thm_simple_rnn_2mk]{~\ref{thm_simplernn2mk}},~\hyperref[appendix_thm_lstm_mk]{~\ref{thm_lstmmk}}.
\qed

\subsection{Proving the general construction in $O(k^{m+1})$}
As a corrolary of the above, we now formalize our proof that a general DFA construction in RNNs, using $O(k^{m+1})$ hidden units to simulate the DFA $D_{k,m}$ of \dyckkm, permits an RNN construction that generates \dyckkm.
Formally,
\thmnaivegeneration* \label{appendix_thm_naive_generation}

\paragraph{Proof.}
A general construction of any DFA in an RNN has $|Q||\Sigma|$ states, \cite{merrill2019sequential,giles1990higher}.
For \dyckkm, $|Q||\Sigma| \in O(k^{m+1})$.
Each state of the DFA $q \in Q$ is represented $\Sigma$ times, once for each word in the vocabulary.
If $q_t = \delta(q_{t-1}, w)$, then the hidden state is $h_{q_{t},w}$, a 1-hot vector, equal to one at an index specified by and unique to $(q_{t-1},w)$.
By defining the mapping $\mathcal{Q}(h_{q,w}) = q$, this construction obeys a stack correspondence lemma.
Finally, since the state of the RNN specifies the DFA state as a 1-hot encoding, we can define the softmax matrix $V$ as follows.
For each state $q$, we can simply set all rows of $V$ corresponding to state $q$ to explicitly encode the log-probabilities of probability distributions we just proved in (\S~\ref{appendix_sec_generating}).
\qed

\section{Extending to generation in $O(m\log k)$ hidden units} \label{appendix_sec_efficient_generating}

In this section, we prove an $O(m \log k)$ upper bound on the number of hidden units necessary to capture \dyckkm with an RNN, matching the $\Omega(m\log k)$ lower-bound. 
This is accomplished by defining a new mapping $\psi$ from slots $s_{t}$ to open brackets $\symopen_i$ such that we can encode $k$ open brackets using just $3\log k-1$ hidden units while maintaining the stack correspondence lemmas and \dyckkm generation properties.
Formally, for the Simple RNN:
\thmsimplernnmlogk* \label{appendix_thm_simple_rnn_mlogk}
Next, for the LSTM:
\thmlstmmlogk* \label{appendix_thm_lstm_mlogk}

\subsection{$O(\log k)$ vocabulary encoding in the Simple RNN.}\label{sec:logk-srnn}
Intuitively, a simple way to encode $k$ elements in $\log k$ space without making use of floating-point precision is to assign each element one of the $2^{\log k}$ binary configurations of $\{0,1\}^{\log k}$.
Our construction will build off of this.

Recall that there are $k$ open brackets that need encodings in $O(\log k)$ space.
Let $p^{(i)}$, for $i\in\{1,\dots,k\}$, be the $i^{\text{th}}$ member of an arbitrary ordering of the set $\{0,1\}^{\log k}$, that is, the set of $\log k$ binary variables.
Let $\mathbf{1}\in\mathbb{R}^{\log k-1}$ be the vector of all $1$s.

Then the encoding of symbol $i$ is:
\begin{align}
  \psi^{-1}(\symopen_i)_* = \pscale[p^{(i)}; (1-p^{(i)}); \mathbf{1}] \in \{0,1\}^{3\lceil\log k\rceil -1}
\end{align}
Where the semicolon $(;)$ denotes concatenation.
This can be efficiently implemented in our simple RNN construction by modifying the $U$ matrix to encode each $\psi^{-1}$.
Each row of the softmax matrix $V$ is as follows:
\begin{align}
  &v_{\symclose_i,1} = \pscale[p^{(i)}; (1-p^{(i)}); -\mathbf{1};\mathbf{0}]\\
  &v_{\symopen_i,m} = -\pscale[\mathbf{1}\in\mathbb{R}^{2\lceil\log k\rceil}; -\mathbf{1}\in\mathbb{R}^{\lceil\log k\rceil - 1}]\\
  &v_{\symend,j} = -\pscale[\mathbf{1}\in\mathbb{R}^{2\lceil \log k\rceil}; -\mathbf{1}\in\mathbb{R}^{\lceil\log k\rceil - 1}]
\end{align}
where $j$ is for all $j\in[1,\dots,m]$, and all slots not specified are equal to $\mathbf{0}$.

Now it suffices to prove the stack correspondence lemma and probability correctness lemmas using $\psi_*$ and $V$.

\paragraph{Proof of stack correspondence lemma, Lemma~\ref{appendix_lemma_simple_rnn_stack_correspondence}.}
As noted in $(\S~\ref{appendix_sec_simple_rnn})$, the only relevant property of the encodings $e_i$ used for symbols $\symopen_i$ in Lemma~\ref{appendix_lemma_simple_rnn_stack_correspondence} is that it take on values in $\{0,1\}$.
This is also true of $\psi^{-1}_*$, so the lemma still holds.

\paragraph{Proof of probability correctness lemma, Lemma~\ref{lemma_probs}.}
It suffices to show that $V$ obeys the softmax validity property (Definition~\ref{appendix_def_softmax_validity}) with respect to $\psi_*$ to prove Lemma~\ref{lemma_probs} using new encoding $\psi_*$.
First, we have that 
\begin{align*}
  v_{\symclose_j,1}^\top \psi_*^{-1}(\symopen_i) &= \Big(\sum_{i=1}^{\lceil \log k \rceil}p^{(i)}p^{(j)} \\
  &+ \sum_{i=1}^{\lceil \log k \rceil}(1-p^{(i)})(1-p^{(j)}) \\
  &+ \sum_{i=1}^{\lceil \log\rceil k -1} 1(-1)\Big)\\
  &\begin{cases}
    = \log k - (\log k -1) & i=j\\
    \leq (\log k -1) - (\log k -1) & i\not=j\\
  \end{cases}\\
  &\begin{cases}
    = 1 & i=j\\
    \leq 0 & i\not=j\\
  \end{cases},
\end{align*}
Where $v_{\symclose_j,1}$ is specified since the top of the stack is always in slot $1$, ensuring the first requirement.
Intuitively, this dot product simply counts up the number of bits that agree between the row $j$ and the symbol $i$ encoded; if they're the same, all bits agree; if not, at least 1 must disagree.

Further, we have
\begin{align*}
  v_{\symopen_i,m}^\top \psi_*^{-1}(\symopen_i) &= -\pscale\Big(\sum_{\ell=1}^{\lceil \log k \rceil}1(p^{(i)}_\ell)\\
    &+ \sum_{\ell=1}^{\lceil \log k \rceil} 1(1-p^{(i)}_\ell) \\
&+ \sum_{\ell=1}^{\lceil \log k \rceil-1}(-1*1)\Big)\\
&= -\pscale\big((\lceil\log k\rceil) - (\lceil\log k \rceil - 1)\big)\\
&= -\pscale 
\end{align*}
since the count of bits $p^{(i)}_\ell$ that are $1$ plus the count of negated bits $(1-p^{(i)}_\ell)$ that are one must always equal exactly $\lceil \log k\rceil$.
This holds identically for $v_{\symend,j}^\top \psi_*^{-1}(\symopen_i)$, for all $j\in[m]$.
Finally, $v_{\symopen_i,m'} =0$, for $m'<m$, ensuring that $v_{\symopen_i,m'<m}^\top\psi^{-1}_*(\symopen_i)=0$, as required.
This proves that $\psi_*$ obeys the softmax validity properties, so Lemma~\ref{lemma_probs} still holds for our Simple RNN construction.

\paragraph{Counting hidden units}
Since $\psi_*$ encodes each symbol in $d=3\lceil \log k \rceil -1$ space, and the Simple RNN construction constructs a stack in $2md$ space, we have that Simple RNNs can generate \dyckkm in $2m(3\lceil \log k \rceil - 1 ) = 6m\lceil\log k \rceil - 2m$ space.
This proves Theorem~\hyperref[appendix_thm_simple_rnn_mlogk]{\ref*{thm_simplernnmlogk}}.

\subsection{$O(\log k)$ vocabulary encoding in the LSTM}
We'll use a slight variation of the construction we used for the Simple RNN, made possible since the LSTM can encode the value $-1$ due to the hyperbolic tangent.
First, for the encoding, we swap the last $\lceil \log k \rceil -1$ values $1$ for $-1$:
\begin{align}
  &\psi'^{-1}(\symopen_i) = [p^{(i)}; (1-p^{(i)}); -\mathbf{1}]\\
\end{align}

And negate the corresponding values of the softmax matrix, as follows:
\begin{align}
  &v_{\symopen_i,m} = -\pscale[\mathbf{1}\in\mathbb{R}^{2\lceil\log k\rceil}; -\mathbf{1}\in\mathbb{R}^{\lceil\log k\rceil - 1}]\\
  &v_{\symend,j} = -\pscale[\mathbf{1}\in\mathbb{R}^{2\lceil \log k\rceil}; -\mathbf{1}\in\mathbb{R}^{\lceil\log k\rceil - 1}]
\end{align}
where $j$ is for all $j\in[1,\dots,m]$, and all slots not specified are equal to $\mathbf{0}$.

Now it suffices to prove the stack correspondence lemma and probability correctness lemmas using $\psi_*$ and $V$.

\paragraph{Proof of stack correspondence lemma, Lemma~\ref{appendix_lemma_lstm_stack_correspondence}.}
It suffices to show that $\psi_*$ obeys the encoding-validity property (Definition~\ref{appendix_def_encoding_validity}).
First, for all $i$, we have,
\begin{align*}
  \mathbf{1}^\top \psi_*^{-1}(\symopen_i) &= \mathbf{1}^\top[p^{(i)}; (1-p^{(i)})] - (\log k- 1)\\
  &= \log k - (\log k - 1) = 1,
\end{align*}
as required.
This was possible because we could use the $-1$ value in the encoding $\psi_*$.
Second, we see that $\psi_*^{-1}(\symopen_i) \in \{0,1\}^{3\lceil \log k \rceil -1}$, as required.
Third, we have still that $\psi_*(\mathbf{0})=\mathbf{0}$.
Fourth and finally, we have by construction that all encodings $\psi_*^{-1}(\symopen_i)$ are distinct and none are equal to zero, by construction, since each is assigned a different bit configuration (and each bit configuration is concatenated to its negation.)

So, the encoding-validity property holds on $\psi_*$, and the stack correspondence lemma, Lemma~\ref{appendix_lemma_lstm_stack_correspondence} holds.

\paragraph{Proof of probability correctness lemma, Lemma~\ref{lemma_probs}.}
The proof of the probability correctness lemma holds as an immediate corollary of the proof of the same lemma for the Simple RNN using $\psi_*$.
In particular, the only difference between the $\psi_*$ and $V$ used for the Simple RNN and that used for the LSTM is the swapping of a factor of $-1$ from a span of $v$ to that of $\psi_*$, so we still have
\begin{align}
  v_{\symclose_j,u}^\top \psi_*^{-1}(\symopen_i)  = 
  &\begin{cases}
    = 1 & i=j\\
    \leq 0 & i\not=j\\
  \end{cases},
\end{align}
where $u$ picks out the top of the stack from the stack correspondence lemma.
We identically have 
\begin{align}
  v_{\symopen_i,m}^\top \psi_*^{-1}(\symopen_i) &= -\pscale
\end{align}
again from the proof for the Simple RNN, and likewise for $v_{\symend,j}^\top \psi^{-1}(\symopen_i)$ for all $j\in[m]$.
Finally, we have $v_{\symopen_i,m'}=\mathbf{0}$, for $m'<m$, again from the proof of the simple RNN.
This proves that $\psi_*$ obeys the softmax validity properties, so Lemma~\ref{lemma_probs} still holds for our LSTM construction.

\paragraph{Counting hidden units}
Since $\psi_*$ encodes each symbol in $d=3\lceil \log k \rceil -1$ space, and the LSTM construction constructs a stack in $md$ space, we have that LSTM can generate \dyckkm in $m(3\lceil \log k \rceil - 1 ) = 3m\lceil\log k \rceil - m$ space.
This proves Theorem~\hyperref[appendix_thm_lstm_mlogk]{\ref*{thm_lstmmlogk}}.

\subsection{Accounting of finite precision.}
\label{appendix_accounting_of_finite_precision}
We've used a number of constants in our proofs; we now enumerate them to make it clear that our construction can operate in a finite-precision setting.
We use the symbols as defined in their respective sections.
The only values written to vector memory are $\{0,1\}$.
In the recurrent equations, we use $\{\beta,-\beta, 2\beta, -2\beta, -3\beta\}$. We refer to $\text{tanh}(1)$ as $\gamma$, and use $\{\gamma, -\gamma, -2\gamma\}$.
We use $\rscale$ as a constant as well in our recurrent parameters, and use the constants $\{-\gamma\rscale,-2\gamma\rscale, \frac{1}{2}\gamma\rscale, -\frac{1}{2}\gamma\rscale, \frac{3}{2}\gamma\rscale, -\frac{3}{2}\gamma\rscale\}$.
None of these constants grow or shrink with $m$ or $k$.
However, when defining a distribution that must assign mass to, e.g., all $k$ open brackets, some values \textit{must} scale with $k$.
This is okay, since the number of \textit{unique} values need not scale with $k$; some of the values in finite set we use just become very small as $k$ grows.
We use $\pscale$ in our definitions of $V$ and $b_v$, as well as the constants $\{-\pscale, \frac{1}{2}\pscale, -\frac{1}{2}\pscale,\pscale\gamma, -\pscale\gamma, \frac{1}{2}\pscale\gamma,-\frac{1}{2}\pscale\gamma\}$ in the equations in that section.
Likewise our threshold $\epsilon$ for $\epsilon$-truncated support shrinks with $k$, as do the probabilities assigned by the model to tokens. For those probabilities, we choose to represent $\epsilon$; thus allowed symbols (with probability $>\epsilon$) will be evaluated correctly as $\geq\epsilon$. For disallowed symbols, we choose to represent the value half-way between $\epsilon$ and our upper-bound on disallowed symbols' probability, $\frac{1}{10k}$, so these symbols' probabilities will be correctly evaluated as $<\epsilon$. Finally,  we refer to $\log k$ in our intermediate equations as well (though it need not be computed explicitly.).
Our finite-precision set $\mathbb{P}$ is the union of all finitely many constants named in this section.

\section{Experiment Details} \label{appendix_sec_experiments}
In this section, we provide detail on our preliminary study on LSTM LMs learning \dyckkm from samples.
\subsection{Data}
We run experiments on \dyckkm for $k\in\{2,8,32,128\}$ and $m\in\{3,5\}$. 

\paragraph{Distribution over \dyckkm.}
As \dyckkm is an (infinite) set, in order to train language models on it, we must define a probability distribution over it.
Intuitively, we sample tokens conditioned on the current DFA (stack) state.
Depending on the stack state, one or two of the actions $\{$\texttt{push} $\symopen_i$, \texttt{pop}, \texttt{end}$\}$ are possible.
For example, in the empty stack state, \texttt{push} $\symopen_i$ and \texttt{end} are possible.
We sample uniformly at random from the possible actions.
Conditioned on choosing the \texttt{push} $\symopen_i$ action, we sample uniformly at random from the open brackets $\symopen_i$.
Conditioned on choosing the \texttt{pop} action, we generate  with probability 1 the $\symclose_i$ that corresponds to the $\symopen_i$ on the top of the DFA stack (to obey the well-balancing condition.
Upon choosing the \texttt{end} action, we generate $\omega$ and terminate.

Formally,
\begin{align}
&q_0 = []\\
&a_t \sim \begin{cases}
  U(\{\text{\texttt{push} $\symopen_i$, \texttt{end}}\}) & |q| = 0\\
  U(\{\text{\texttt{push} $\symopen_i$, \texttt{pop}}\}) & 0 < |q| < m\\
  U(\{\text{\texttt{pop}}\}) & |q| = m
\end{cases}\\
&w_t \sim \begin{cases}
U(\{\symopen_i\}_{i\in 1:k}) & a_t = \text{\texttt{push} } \symopen_i\\
U(\{\symclose_j\}) & a_t = \text{\texttt{pop}}, q = [\dots,\symopen_j]\\
U(\{\symend\}) & a_t = \text{\texttt{end}}\\
\end{cases}
\end{align}

We choose this distribution to provide the following statistical properties.
Consider the markov chain with nodes $1$ to $m$ representing $|q_t|$, the number of symbols on the DFA state stack at timestep $t$.
(This is a Markov chain because $q_t$ is sufficient to describe the probability distribution over all suffixes after timestep $t$.)
From the probability distribution we've defined, for any timestep where the stack is neither empty nor full, that is, $0 < |q_t| < m$ there is probability $1/$ of advancing in the markov chain towards $m$, that is $|q_{t+1}| = |q_t|+1$, and probability $1/2$ of retreating towards $0$, that is, $|q_{t+1}| = |q_t|-1$.
This \textit{hitting time} of state $m$ from state $0$ is the expected number of timesteps it takes for the generation sequence to start at the empty stack state $q_{t}=[]$ and end up at a full stack state $|q_{t'}|=m$.
Because of the $1/2$ probability of advancing or retreating along the markov chain, the hitting time is $O(m^2)$.

\begin{table}
\centering
\begin{tabular}{l c c}
\toprule
$m$ & 3 & 5\\
\midrule
Train lengths & 1:84 & 1:180\\
Test lengths & 85:168 & 181:360\\
\bottomrule
\end{tabular}
\caption{\label{appendix_table_lengths}Training and testing length cutoffs (min/max) for the distributions we define over \dyckkm}
\end{table}

\paragraph{Length conditions.}
\citet{suzgun2018evaluating} showed that the choice of formal language training lengths has a significant effect on the generalization properties of trained LSTMs.
We choose training lengths carefully, keeping in mind the $O(m^2)$ hitting time from empty to full stack states, to empirically ensure that in expectation, the longest training sequences traverse from the empty state to a full stack state $|q_t|=m$ and back to the empty stack state at least three times.
This ensures that models are not able to use simple heuristics at training time, like remembering the first open bracket in the sequence to close the last close bracket.
The exact length statistics are provided in Table~\ref{appendix_table_lengths}.
Length statistics for training and testing sets are shown in Figure~\ref{appendix_figure_lengths}..

\begin{figure}
\includegraphics[width=\linewidth]{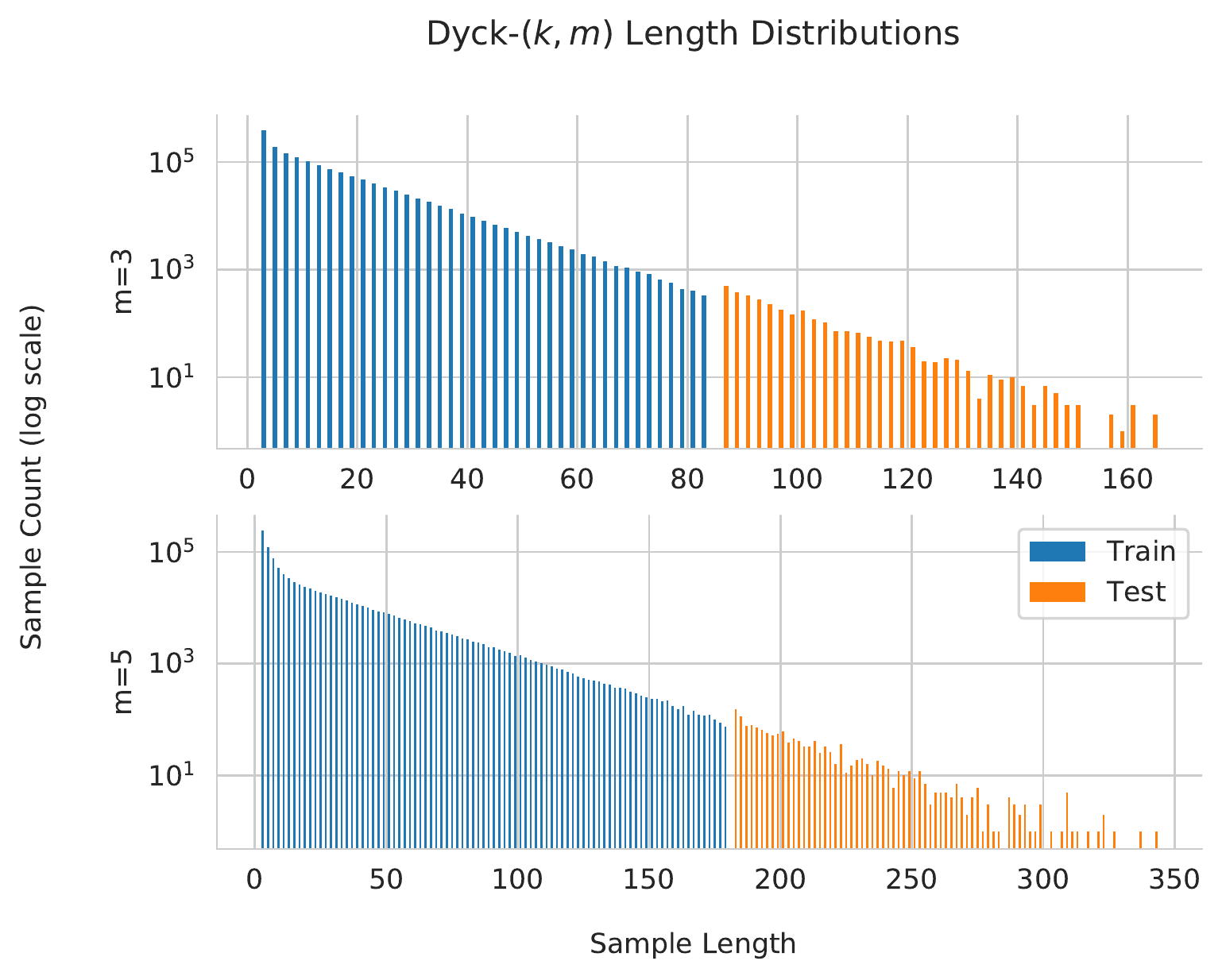}
\caption{\label{appendix_figure_lengths} Sample length statistics for the twenty million token datasets. Note that the y-axis is on a log scale.}
\end{figure}

\paragraph{DFA state analysis.}
\begin{table}
\centering
\begin{tabular}{c r r r r}
\toprule
$k$ & 2 & 8 & 32 & 128\\
\midrule
\% of Total & 100 & 100 & 15 & .03\\
\% of Test & 100 & 100 & 58 & 31\\
\bottomrule
\end{tabular}
\caption{\label{appendix_table_dfa_percents}Percent of DFA states seen in twenty million tokens of training data, as measured with respect to both the set of all DFA states and the set of DFA states seen at test time.}
\end{table}

Since \dyckkm is a regular language, it is reasonable to believe it may learn equivalences between strings that result in the same DFA state, but fail to generalize to DFA states not seen during training time.
In Table~\ref{appendix_table_dfa_percents}, we see that for $k$ equal to $2$ an $8$, all DFA states are seen at training time.
Equivalently, every possible stack configuration of $2$ or $8$ brackets, of stack sizes up to $3$ or $5$, are seen during training time.
For $k=32$, however, only $15\%$ of all possible DFA states are seen at training time, and only $58\%$ of DFA states seen at testing time are also seen at training time.
For $k=128$, the numbers are even more stark, where $0.3\%$ of all possible states, and $31\%$ of states seen at testing time are also seen at training time.
Thus, the ability of models to generalize to the test set for $k$ equal to $32$ and $128$ shows that the learned LSTMs are not simply memorizing DFA states from training time.\footnote{There are over 34 billion possible DFA states for $k=128,m=5$.}
Instead, we speculate that they're performing stack-like operations, and leave further investigation to future work.

\paragraph{Sample counts}
To test sample efficiency of learning, we study four dataset sizes: $\{2k, 20k, 200k, 2m, 20m\}$ tokens for training for each $k,m$ combination.
In all training settings, we use identical development and test sets of size $20k$ and $300k$ tokens, respectively.
The development set is sampled from the training distribution.

\subsection{Models}
We use LSTMs, defined as in the main text with a linear readout layer, and implemented in PyTorch \cite{paszke2019pytorch}.
We set the hidden dimensionality of the LSTM to $3m\log(k)-m$, and the input dimensionality to $2k+10$.

\subsection{Training}
We use the default LSTM initialization provided by PyTorch.
We train using Adam \cite{kingma2014adam}, using a starting learning rate of $0.01$ for all training sets less than $2m$ tokens.
Based on hand hyperparameter optimization, we found that for $k=128$, at $2m$ tokens it was better to use a starting learning rate of $0.001$.
For training sets of size $20m$ tokens, we use a starting learning rate of $0.001$ for all settings of $k$.
We use a batch size of $10$ for all experiments.
We evaluate perplexity on the development set after every epoch, restarting Adam with a learning rate decayed by $0.5$ if the development perplexity does not achieve a new minimum.
After three consecutive epochs without a new minimum development perplexity, we stop training.
We use no explicit regularization.

\subsection{Evaluation}
We're interested in evaluating the behavior of the LSTM LMs in hierarchical memory management, that is, in remembering what type of bracket $\symclose_i$ can come next.
The model cannot possibly do better than random at predicting the next open bracket.
Other aspects of the language, like whether the string can end, or if the stack is full, can be solved easily with a counter, which LSTMs are known to implement \cite{weiss2018practical,suzgun2019lstm}.
We thus evaluate whether, for each observed close bracket, the LM is confident about which close bracket (that is, $i$ for $\symclose_i$) it is, since this is deterministic; it must be the close bracket corresponding to the open bracket at the top of the stack of $q_{t-1}$.
We do this by normalizing the probability assigned by the LM to the \textit{correct} close bracket by the sum of probabilities assigned to \textit{any} close bracket:
\begin{align}
p(\symclose_j|\symclose) = \frac{p(\symclose_j)}{\sum_{i}p(\symclose_i)}
\end{align}
We evaluate whether the model is confident, when we define as $p(\symclose_j|\symclose)>0.8$.

Most pairs of open and close brackets in our distribution of \dyckkm are near linearly, but we're interested in \textit{long-distance} memory management.
Thus, we let $p_\ell$ be the empirical probability that the model confidently predicts a close bracket, conditioned on it being separated from its open bracket by $\ell$ tokens. 
We then evaluate models by taking $\text{mean}_{\ell} p_\ell$; that is, to evaluate its performance at closing brackets?
To get a value of $1$ for this metric, models must confidently close every bracket.

We train with three independent seeds for each training setting (training set size, $k$, and $m$), and report the median across seeds of our bracket closing metric in Figure~\ref{fig_mainpaper_learningdyck}.

\end{document}